\documentclass[10pt]{article}

\usepackage{mystyle}
\RequirePackage[top=2.25cm, bottom=2.5cm, left=2.5cm, right=2.5cm, columnsep=0.65cm]{geometry}

\newcommand{\variance}{\sigma(\mathscr{A})}
\newcommand{\mean}{\tilde{\mu}_{a,t}(\mathscr{A})}
\newcommand{\bonus}{\beta(\mathscr{A})}

\title{\LARGE Optimism Stabilizes Thompson Sampling for Adaptive Inference}
\author{Shunxing Yan\thanks{Author names are listed in alphabetical order. Peking University. Emails: \texttt{\{sxyan,hanzhong\}@stu.pku.edu.cn}.} \qquad \qquad Han Zhong$^*$}
\date{}
\begin{document}
\maketitle
\begingroup
\renewcommand{\thefootnote}{}
\footnotetext[0]{Accepted for presentation at the Conference on Learning Theory (COLT) 2026.}
\endgroup

\begin{abstract}
Thompson sampling (TS) is widely used for stochastic multi-armed bandits, yet its inferential properties under adaptive data collection are subtle. Classical asymptotic theory for sample means can fail because arm-specific sample sizes are random and coupled with the rewards through the action-selection rule. We study adaptive inference for Thompson sampling with Gaussian randomized indices in \(K\)-armed stochastic bandits with independent sub-Gaussian reward noises, and identify \emph{optimism} as a key mechanism for restoring \emph{stability},
meaning that each arm's pull count concentrates around a deterministic scale.
This stability yields asymptotically valid Wald inference despite adaptive
sampling. First, we prove that variance-inflated TS is stable for any $K \ge 2$, including the challenging regime where multiple arms are optimal, with
asymptotically uniform allocation over optimal arms and sharp logarithmic
pull-count asymptotics for suboptimal arms.
 This resolves the \(K\)-armed extension question raised by
\citet{halder2025stable}, using new winner-map and Lyapunov-drift techniques to
control allocation among multiple optimal arms. Second, we analyze an alternative optimistic modification that keeps the Gaussian index variance unchanged but adds an explicit mean bonus to the index center, and establish a similar stability conclusion.
In summary, suitably implemented optimism stabilizes Thompson sampling and enables asymptotically valid Wald inference in multi-armed bandits, while incurring only a mild additional regret cost.
\end{abstract}

\setcounter{tocdepth}{2}
\hypersetup{linkcolor=bluee}
\tableofcontents
\hypersetup{linkcolor=redd}

\newpage

    \section{Introduction}

Adaptive experiments increasingly use bandit algorithms not only to allocate
treatments efficiently, but also to support post-experiment inference about
unknown arm means, which underlie treatment-effect comparisons in online
experiments
\citep{zhang2020inference,hadad2021confidence,bibaut2025demystifying}.
After the experiment, practitioners often compare arms, report confidence
intervals, or decide whether a treatment should be deployed. This inferential
goal is delicate: the same reward fluctuations that guide future allocation
decisions also enter the final estimators and their normalizations.
Consequently, classical fixed-design arguments for confidence intervals and
hypothesis tests need not apply under adaptive data collection.
Multi-armed bandits~\citep{thompson1933likelihood,robbins1952some,lai1985asymptotically}
provide a canonical model for this tension between sequential decision-making
and statistical inference.

The distinction between regret and inference is fundamental. Regret evaluates
the reward loss incurred by the chosen arms. By contrast, Wald inference depends
on the realized arm-specific sample sizes, which are part of the inferential
design because they determine the normalization of the sample means. This issue
is especially pronounced when the optimal arm is not unique, the limiting case
in which regret is indifferent among several arms but inference is not.
Stability, introduced by~\citet{lai1982least}, is the property that turns an
adaptively generated data set into an asymptotically regular design: although
the allocations are random, each arm's realized pull count concentrates around a
deterministic benchmark. When stability holds, the random sample size can be
replaced by a deterministic scale, enabling asymptotic normality of studentized
sample means and valid Wald-type confidence intervals. Recent work has shown
that several UCB-type algorithms satisfy this stability condition
\citep{fan2022typical,khamaru2024inference,han2024ucb}.

We focus on Thompson sampling~\citep[TS;][]{thompson1933likelihood}, also known
as posterior sampling. TS is particularly popular due to its simplicity, strong
empirical performance, and Bayesian interpretation, and much of its theoretical
literature focuses on regret minimization and frequentist regret guarantees
\citep{agrawal2012analysis,kaufmann2012bayesian,russo2014learning,
russo2016information,agrawal2017near,daniel2018tutorial}. From an inferential viewpoint, however, TS is more subtle. Prior work has shown
that vanilla TS can lead to unstable allocation paths and can undermine
classical normal-approximation-based inference
\citep{kalvit2021closer,halder2025stable} (see also Section~\ref{sec:vanilla-feedback} for a simple two-armed equal-means example to illustrate this instability problem).
As a result, the realized pull counts may fail to concentrate around
deterministic scales, particularly when the optimal arm is not unique. To address this issue,
\citet{halder2025stable} proposed a variance-inflated TS rule and established
stability guarantees in the two-armed Gaussian bandit. Extending these
guarantees to general \(K\)-armed bandits is substantially more challenging,
especially when several arms are optimal. With more than two optimal arms,
stability is no longer a one-dimensional balancing problem; the algorithm must
stabilize an allocation vector over the optimal set while the randomized winner
probabilities remain history-dependent. This leads to the central question of
the paper:
\begin{center}
    Can Thompson sampling be modified to preserve its randomized, regret-efficient behavior while enforcing the pull-count stability needed for asymptotically valid Wald inference?
\end{center}

\subsection{Contributions}

This paper answers the central question affirmatively and identifies a unifying
principle: \textbf{optimism stabilizes Thompson sampling for adaptive
inference}. Intuitively, optimism makes empirical-mean perturbations lower
order relative to the optimistic exploration terms that drive future
allocations. We implement this idea in two ways: by inflating the variance of
the Gaussian randomized indices, or by adding an explicit upper-confidence-type
bonus to the index center. In both cases, the resulting TS rule is stable in the
sense of Definition~\ref{def:stability}, and this stability restores classical
Wald inference under adaptive sampling.

Specifically, we study fixed-\(K\) stochastic bandits with independent mean-zero
sub-Gaussian reward noises and Gaussian randomized indices. Let
\(\mathcal S^\star\) be the set of optimal arms, let
\(m=|\mathcal S^\star|\), and let \(\Delta_a=\mu^\star-\mu_a\) for
\(a\notin\mathcal S^\star\). Under the respective growth conditions on
\(\variance\) and \(\bonus\) stated below, for the two optimistic TS rules
studied here, we prove the stability law
\begin{equation}\label{eq:stability:intro}
\frac{N_{a,T}}{N^\star_{a,T}}\xrightarrow{\mathbb P}1,
\qquad
N^\star_{a,T}=
\begin{cases}
T/m, & a\in\mathcal S^\star,\\[2pt]
2\variance\log(T/\variance)/\Delta_a^2,
& a\notin\mathcal S^\star \text{ under variance inflation},\\[2pt]
2\bonus\log T/\Delta_a^2,
& a\notin\mathcal S^\star \text{ under the mean-bonus rule}.
\end{cases}
\end{equation}
Thus the optimal arms share the horizon asymptotically uniformly, while each
suboptimal arm is sampled on a sharp, gap-dependent logarithmic scale. This
deterministic pull-count normalization is precisely the stability condition
needed for classical Wald inference under adaptive sampling. Our contributions are fourfold.
\begin{itemize}[leftmargin=*]
    \item \textbf{Variance-inflated TS in general \(K\)-armed bandits.}
    We prove that variance-inflated TS is stable for general \(K\)-armed bandits,
    allowing any number of optimal arms (Theorem~\ref{thm:main}). This extends
    the variance-inflation stability theory of~\citet{halder2025stable} beyond
    the two-armed case to genuinely multidimensional regimes, where the
    algorithm must balance an allocation vector over the optimal set rather than
    a single scalar proportion. The theorem also applies under the broad growth
    condition \(\variance\to\infty\) and
    \(\variance(\log T)^2/T\to0\), and gives the sharp suboptimal centering
    \(2\variance\log(T/\variance)/\Delta_a^2\).

    \item \textbf{A second optimistic TS rule through a mean bonus.}
    We show that stabilization does not require inflating the Gaussian index
    variance. We analyze a mean-bonus TS rule that keeps the vanilla Gaussian
    sampling variance but shifts the index center by an explicit
    upper-confidence-type bonus (Theorem~\ref{thm:main:2}). Under
    \(\bonus\to\infty\) and \(\bonus\log T/T\to0\), this rule satisfies the
    same optimal-arm balancing property and pulls each suboptimal arm on the
    sharp scale \(2\bonus\log T/\Delta_a^2\). This connects optimistic sampling
    ideas to adaptive inference: here optimism regularizes the realized
    allocation path, not only exploration for regret minimization.

    \item \textbf{Wald inference and the price of stability.}
The stability results yield standard Wald inference from adaptively collected
data (Theorem~\ref{thm:inference}). For each fixed arm, the usual studentized
sample mean is asymptotically normal under either optimistic TS rule, and hence
the usual Wald confidence interval has asymptotically correct coverage. The same
stability argument also justifies standard two-sample Wald comparisons between
fixed pairs of arms. Importantly, the estimator is the ordinary sample mean with
the ordinary armwise sample variance; no additional debiasing or reweighting is
required. The price of inferential regularity is explicit and tunable: the
suboptimal exploration scale is governed by
\(\variance\log(T/\variance)\) under variance inflation and by \(\bonus\log T\)
under the mean-bonus rule (Theorem~\ref{thm:regret-main}). Since the conditions
allow \(\variance\) and \(\bonus\) to diverge arbitrarily slowly, for example at
the \(\log\log\log T\) scale, stability can be obtained with only an arbitrarily
slowly diverging multiplicative overhead relative to the usual logarithmic
suboptimal-sampling scale.

    \item \textbf{New techniques for randomized allocation dynamics.}
    The proofs require new techniques for controlling randomized, history-dependent
    allocation probabilities. For
    variance-inflated TS, we introduce a winner-map viewpoint for Gaussian index
    competition, prove a negative-feedback property on the simplex, and combine
    it with averaged perturbation control, Lyapunov drift, and geometric
    waiting-time couplings. For the mean-bonus rule, we use a corridor
    argument, an order-flip mechanism among optimal arms, and a terminal-block
    contradiction. These tools separate the geometry of randomized winner
    probabilities from empirical-mean perturbations, a decomposition that may be
    useful for studying stability of other randomized bandit algorithms. See Section~\ref{sec:proof:sketch} for more details.
\end{itemize}

To the best of our knowledge, these results provide the first Wald-validity
guarantees for Thompson sampling type algorithms in general \(K\)-armed bandits (including multiple-optimal-arm
regimes).

\subsection{Related Work}

\paragraph{Adaptive inference for bandit algorithms.}
Classical bandit theory primarily emphasizes regret and sample-complexity guarantees~\citep{lattimore2020bandit}, while comparatively less attention has been devoted to statistical inference under adaptive data collection. A recent line of work addresses this gap via the notion of \emph{stability}~\citep{lai1982least}, which provides sufficient conditions under which studentized estimators remain asymptotically normal even when effective sample sizes are random and history-dependent. For UCB algorithms~\citep{lai1985asymptotically,lai1987adaptive,auer2002finite} in the multi-armed bandit setting,~\citet{fan2022typical,khamaru2024inference,han2024ucb} establish stability and derive asymptotically valid inference.~\citet{fan2024precise} study the UCB-V algorithm and identify potential instability for this method.
 Beyond the multi-armed setting,~\citet{fan2025statistical} study linear bandits and provide inference guarantees for LinUCB~\citep{li2010contextual,abbasi2011improved}, and related stability-based approaches have also been developed in broader contextual or adversarial regimes~\citep{praharaj2025avoiding}. Our focus differs in that we study Thompson sampling, whose random Gaussian indices yield qualitatively different stability behavior.
 More broadly, our work is also related to~\citet{dimakopoulou2021online} and~\citet{howard2021time}.
~\citet{dimakopoulou2021online} modify the TS decision rule itself, using adaptive reweighting and doubly robust estimation with the goal of improving
 online decision performance.~\citet{howard2021time} develop general time-uniform, nonparametric confidence sequences based on concentration
 inequalities. We use a related time-uniform concentration idea only as a
 perturbation-control tool; our inferential object remains fixed-horizon Wald
 inference along the realized trajectory of optimistic TS.

\paragraph{Thompson sampling and stability.}
Thompson sampling dates back to~\citet{thompson1933likelihood} and has been extensively studied from a regret-minimization perspective~\citep{agrawal2012analysis,kaufmann2012bayesian,russo2014learning,russo2016information,agrawal2017near,daniel2018tutorial,dimakopoulou2021online,jin2021mots,jin2022finite,jin2023thompson}. From an adaptive inference viewpoint, however, TS can exhibit pathological behavior that undermines classical central limit theorem based inference~\citep{kalvit2021closer}. %
Our work is most closely related to~\citet{halder2025stable}, which
characterizes instability of vanilla Gaussian TS and proposes a stabilized
variant via variance inflation, establishing stability in the two-armed Gaussian
bandit.  %
For context, the current version of their paper extends the analysis to
\(K\)-armed Gaussian bandits with at most two optimal arms. The
variance-inflation part of our work overlaps with this setting, but also covers
any fixed number of optimal arms, including the regime \(|\mathcal S^\star|\ge3\), where balancing the optimal arms requires new techniques.
Our variance-inflation theorem also applies under a
much weaker growth condition and a more general sub-Gaussian reward model; see Theorem~\ref{thm:main} and the discussion
following it for the precise comparison.
We additionally analyze an optimistic mean-bonus variant that restores stability without inflating the Gaussian index variance, thereby isolating optimism as a key mechanism for stabilizing Thompson sampling algorithms. Finally, we note a concurrent and independent work by~\citet{han2026thompson}, which provides a sharp characterization of the arm-pull dynamics and adaptive inference for standard TS in multi-armed bandits. In particular, their results suggest that standard TS is not stable when the optimal arm is not unique. Our results are complementary: rather than analyzing the limiting behavior of vanilla TS, we focus on optimistic TS variants that enforce stability and thereby recover classical Wald-type asymptotic normality for studentized sample means.

\paragraph{Optimism and optimistic posterior sampling.}
Optimism is a classical exploration principle in bandits, most prominently instantiated via upper confidence bounds~\citep{auer2002finite}. In Bayesian and posterior-sampling settings, optimism can arise through posterior-quantile indexing (Bayes-UCB)~\citep{kaufmann2012bayesian}, optimistic Bayesian sampling rules~\citep{may2012optimistic}, ensemble- or multi-sample-based procedures~\citep{fonteneau2013optimistic,agrawal2017optimistic,lu2017ensemble}, and more recent Feel-Good Thompson sampling~\citep{zhang2022feel}. Our mean bonus variant TS is closely related to these optimistic posterior-sampling methods, and we show that it is stable for general $K$-armed bandits. In addition, we prove stability of the variance inflated variant TS, indicating that variance inflation is another effective way to achieve optimism and ensure stability.
The recent work of~\citet{zhu2026adaptive} is particularly close algorithmically due to the variance inflation inside Gaussian Thompson sampling. Their variance inflation scheme is adaptive and is analyzed for finite-time efficiency, safety, robustness, and related post-experiment guarantees, whereas our variance-inflated rule is studied through Lai--Wei stability and sharp pull-count asymptotics in order to recover classical Wald validity.

\paragraph{Other related work.} Broadly speaking, our work is also related to statistical inference for other online algorithms, such as stochastic approximation and stochastic gradient descent~\citep{polyak1992acceleration,chen2020statistical,li2023online,su2023higrad,shen2026sgd}. Moreover, since bandits are a special case of reinforcement learning, our work is related to statistical inference for reinforcement learning algorithms~\citep{li2023statistical,wu2024statistical,wu2025uncertainty} and to posterior sampling methods for reinforcement learning~\citep{russo2014learning,agrawal2017optimistic,osband2016generalization,lu2017ensemble,zanette2020frequentist,dann2021provably,zhong2022gec,agarwal2022model}.

\subsection{Notations}
For any integer $K$, we use $[K]:=\{1,\dots,K\}$ to denote the set of $K$ arms. We use the standard asymptotic notations $O(\cdot)$, $o(\cdot)$, and $\Theta(\cdot)$. For functions $f,g:\mathbb{N}\to\mathbb{R}_{\ge 0}$, we write $f(n)=O(g(n))$ if there exist constants $c>0$ and $n_0$ such that $f(n)\le c\,g(n)$ for all $n\ge n_0$. We write $f(n)=o(g(n))$ if $f(n)/g(n)\to 0$ as $n\to\infty$. We write $f(n)=\Theta(g(n))$ if there exist constants $c_1,c_2>0$ and $n_0$ such that $c_1 g(n)\le f(n)\le c_2 g(n)$ for all $n\ge n_0$.
Define the standard normal density, cumulative distribution function, and upper tail function by
    \[ \label{eq:def:gaussian}
    \phi(x):=\frac{1}{\sqrt{2\pi}}\exp\{-x^{2}/2\},\qquad
    \Phi(x):=\int_{-\infty}^{x}\phi(u)\,du, \qquad \overline\Phi(x):=1-\Phi(x).
    \]
    Throughout the paper, \(C,c>0\) denote generic positive constants whose values may change from line to line and may depend only on fixed problem parameters unless otherwise specified.

    \section{Preliminaries}

\subsection{Multi-armed Bandit}
We consider a stochastic $K$-armed bandit with some constant $K\ge 2$.
At each time \(t\), the reward from arm \(a\in[K]\) is
\[
R_{a,t}=\mu_a+\xi_{a,t},
\qquad
\xi_{a,t}\overset{\mathrm{i.i.d.}}{\sim}\xi,
\] The learner observes only
the selected reward \(R_{A_t,t}\) after choosing \(A_t\).
The non-degenerate noise \(\xi\) is mean-zero and \(1\)-sub-Gaussian. The mean vector
\((\mu_1,\ldots,\mu_K)\in\mathbb R^K\) is unknown to the learner and needs to be
estimated from the data. Let $\mu^\star:=\max_{a\in[K]}\mu_a$ be the maximum mean and denote the optimal arm set and its size by
\[
\mathcal S^\star:=\{a:\mu_a=\mu^\star\},\qquad m:=|\mathcal S^\star|.
\]
For each suboptimal arm $a\notin\mathcal S^\star$, define the gap $\Delta_a:=\mu^\star-\mu_a>0$.
We write
\[
N_{\mathrm{opt}}(t):=\sum_{i\in\mathcal S^\star}N_{i,t},
\qquad
N_{\mathrm{sub}}(t):=\sum_{i\notin\mathcal S^\star}N_{i,t}=t-N_{\mathrm{opt}}(t),
\]
for the total numbers of optimal-arm and suboptimal-arm pulls up to time \(t\).

An algorithm $\mathscr{A}$ interacts with the bandit for $T$ rounds by selecting arms $\{A_t\}_{t=1}^T$, observing rewards $\{R_{A_t,t}\}_{t=1}^T$, and updating its decisions based on past observations. Its performance is measured by the expected cumulative regret
\[ \label{eq:def:regret}
\mathcal{R}(T) := T\mu^\star - \mathbb{E}\Big[\sum_{t=1}^T R_{A_t,t}\Big].
\]
Let $(A_t)_{t\ge 1}$ denote the sequence of pulled arms and let $\mathcal{F}_t:=\sigma(A_1,R_{A_1,1},\dots,A_t,R_{A_t,t})$ be the filtration generated by the history up to time $t$.
For each arm $a \in [K]$ and time $t$, define the number of pulls and empirical mean by
\[ \label{eq:def:pull:time}
N_{a,t}:=\sum_{s=1}^t \mathbf{1}\{A_s=a\},\qquad
\widehat\mu_{a,t}:=
\frac{1}{N_{a,t}}\sum_{s=1}^t
\mathbf{1}\{A_s=a\}R_{A_s,s}
\quad (N_{a,t}\ge 1).
\]
When $N_{a,t}=0$ we leave $\widehat\mu_{a,t}$ undefined, but this case will not arise after initialization.

\subsection{Adaptive Inference and Stability}

In online bandit problems, data are collected adaptively: the action at time $t$ depends on past observations, and hence the samples are no longer i.i.d.. In particular, classical central limit theorems for fixed designs do not directly apply to estimators formed from adaptively chosen observations. A useful sufficient condition in this setting is \emph{stability}, introduced by the seminal work of~\citet{lai1982least}, which requires that each arm's sample size concentrates around a deterministic sequence.

\begin{definition}[Stability~\citep{lai1982least}] \label{def:stability}
A bandit algorithm $\mathscr{A}$ is \emph{stable} if there exist deterministic sequences $N^\star_{a,T}$ such that for every arm $a \in [K]$,
\[
\frac{N_{a,T}}{N^\star_{a,T}}\xrightarrow{\mathbb{P}}1 \qquad \text{and} \qquad  N_{a, T}^\star \to \infty,
\]
as $T \to \infty$. Here $N_{a,T}$ (defined in~\eqref{eq:def:pull:time}) denotes the number of times arm $a$ is pulled up to time $T$ by executing the algorithm $\mathscr{A}$.
\end{definition}

Stability provides deterministic normalizations for the random sample sizes $N_{a,T}$, a key ingredient in establishing asymptotic distributions for estimators under adaptive sampling.

\begin{proposition}[Stability implies asymptotic normality~\citep{lai1982least}] \label{prop:inference}
If a bandit algorithm $\mathscr{A}$ is stable, then for any arm $a \in [K]$,
\[
\frac{\sqrt{N_{a,T}} \cdot (\widehat\mu_{a,T}-\mu_a)}{\widehat\sigma_{a,T}}
\xrightarrow{\mathcal{D}} \mathcal{N}(0,1)
\qquad \text{as } T\to\infty,
\]
where $\widehat\mu_{a,T}$ is the sample mean defined in~\eqref{eq:def:pull:time} and $\widehat\sigma_{a,T}$ is the sample standard deviation of the rewards for arm $a$ up to time $T$:
\[ \label{eq:def:sample:variance}
\widehat{\sigma}_{a,T}
:=
\sqrt{
\frac{1}{N_{a,T}-1}
\sum_{s=1}^T
\mathbf{1}\{A_s=a\}
\left(R_{A_s,s}-\widehat{\mu}_{a,T}\right)^2
}
\]
with the convention that $\widehat{\sigma}_{a,T}=1$ when $N_{a,T}=1$.
\end{proposition}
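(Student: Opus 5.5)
The plan is to reduce the statement to a martingale central limit theorem, using stability to replace the random normalization $N_{a,T}$ by the deterministic $N^\star_{a,T}$. Fix an arm $a$ and write
\[
\sqrt{N_{a,T}}\,(\widehat\mu_{a,T}-\mu_a)=\frac{S_{a,T}}{\sqrt{N_{a,T}}},\qquad S_{a,T}:=\sum_{s=1}^T \mathbf 1\{A_s=a\}\,\xi_{a,s}.
\]
Since $A_s$ is $\mathcal F_{s-1}$-measurable (augmented by the algorithm's internal randomization, which is independent of $\xi_{a,s}$) and $\xi_{a,s}$ is independent of $\mathcal F_{s-1}$, the increments $X_{T,s}:=\mathbf 1\{A_s=a\}\xi_{a,s}/\sqrt{N^\star_{a,T}}$ form a martingale difference array. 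Let $\sigma^2:=\mathrm{Var}(\xi)>0$, which is finite because $\xi$ is sub-Gaussian and positive by non-degeneracy. I will show that $S_{a,T}/\sqrt{N^\star_{a,T}}\to\mathcal N(0,\sigma^2)$, then transfer this to $S_{a,T}/\sqrt{N_{a,T}}$ via $N_{a,T}/N^\star_{a,T}\to1$ and Slutsky, and finally show $\widehat\sigma_{a,T}\to\sigma$ in probability.

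\textbf{Martingale CLT.} I will apply the martingale CLT for triangular arrays (Hall--Heyde), which needs two conditions.

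The conditional variance condition reads
\[
\sum_s \mathbb E[X_{T,s}^2\mid\mathcal F_{s-1}]=\sigma^2 N_{a,T}/N^\star_{a,T}\xrightarrow{\mathbb P}\sigma^2 .
\]
This is exactly what stability provides, and the limit is a constant. That matters: with a random limit, Hall--Heyde would require a nesting condition, but here it does not.

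The conditional Lindeberg condition reads
\[
\sum_s \mathbb E[X_{T,s}^2\mathbf 1\{|X_{T,s}|>\epsilon\}\mid\mathcal F_{s-1}] = \frac{N_{a,T}}{N^\star_{a,T}}\,\mathbb E\big[\xi^2\mathbf 1\{|\xi|>\epsilon\sqrt{N^\star_{a,T}}\}\big].
\]
This tends to $0$ in probability because $N^\star_{a,T}\to\infty$, $\mathbb E\xi^2<\infty$, and the ratio is $O_{\mathbb P}(1)$.

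Putting these together gives $S_{a,T}/\sqrt{N^\star_{a,T}}\to\mathcal N(0,\sigma^2)$. Multiplying by $\sqrt{N^\star_{a,T}/N_{a,T}}\to1$ then gives $\sqrt{N_{a,T}}(\widehat\mu_{a,T}-\mu_a)\to\mathcal N(0,\sigma^2)$.

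\textbf{Variance consistency.} Expand
\[
\widehat\sigma_{a,T}^2=\frac{N_{a,T}}{N_{a,T}-1}\Big(\frac{1}{N_{a,T}}\sum_s \mathbf 1\{A_s=a\}\xi_{a,s}^2-(\widehat\mu_{a,T}-\mu_a)^2\Big).
\]
The last term is $O_{\mathbb P}(1/N_{a,T})\to0$ by the previous step. For the first term, apply the same scaling trick to the martingale $\sum_s\mathbf 1\{A_s=a\}(\xi_{a,s}^2-\sigma^2)$. Its increments are centered and have finite variance, because sub-Gaussianity gives a finite fourth moment. Its predictable quadratic variation is $O(N_{a,T})$, so dividing by $N^\star_{a,T}$ and using Chebyshev (or Lenglart's inequality) gives convergence to $0$. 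Since $N_{a,T}/N^\star_{a,T}\to1$, the average of the $\xi_{a,s}^2$ over arm-$a$ pulls converges to $\sigma^2$. The convention $\widehat\sigma=1$ when $N_{a,T}=1$ is irrelevant because $N_{a,T}\to\infty$ in probability. A final application of Slutsky yields the statement.

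\textbf{Main obstacle.} The main subtlety is the measurability and independence structure: the action $A_s$ is chosen before the reward $\xi_{a,s}$ is drawn. I will therefore define the filtration to include the Gaussian sampling randomness used up to time $s$, so that $\mathbf 1\{A_s=a\}$ is predictable. The other delicate point is invoking the CLT with deterministic normalization $N^\star_{a,T}$. Normalizing directly by the random $N_{a,T}$ would break the martingale structure, and that is exactly the failure mode of unstable algorithms like vanilla Thompson sampling.
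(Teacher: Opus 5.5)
Your proposal is correct and follows the same route as the paper. The paper only sketches the argument (write the centered arm-$a$ rewards as a martingale difference array, apply a martingale CLT using stability, then Slutsky) and defers the details to Lai and Wei and to Khamaru and Zhang; your write-up fills in exactly those details, including the point that a constant limit of the conditional variance removes the need for a nesting condition. One small caveat concerns the variance-consistency step: rely on Lenglart's inequality, or stop the martingale once $N_{a,t}$ exceeds $2N^\star_{a,T}$, rather than on plain Chebyshev. Stability controls $N_{a,T}$ only in probability, so $\mathbb{E}[N_{a,T}]/(N^\star_{a,T})^2$ need not vanish for a general stable algorithm.
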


\begin{proof}[Proof of Proposition~\ref{prop:inference}]
    Under stability (Definition~\ref{def:stability}), the centered reward sequence for a fixed arm can be written as a martingale difference array. The claim then follows from a martingale central limit theorem and Slutsky's theorem. See~\citet{lai1982least} and~\citet{khamaru2024inference} for details.
\end{proof}

    \section{Thompson Sampling and Optimism}\label{sec:ts-optimism}

To avoid zero pull counts in~\eqref{eq:def:pull:time}, we assume an initialization phase in which each arm is pulled once (so $N_{a,K}=1$ for all $a$). For \(t\ge K\), we consider TS-type algorithms that draw conditionally independent Gaussian indices and act greedily:
\begin{equation}\label{eq:ts:general}
\theta_{a,t+1}\mid\mathcal{F}_t \sim \mathcal{N}\left(\mean,\frac{\variance}{N_{a,t}}\right),
\qquad
A_{t+1}\in\operatorname*{\argmax}_{a}\theta_{a,t+1}.
\end{equation}
Here \(\mean\) and \(\variance\) specify, respectively, the index center and the index-variance multiplier used to generate the Gaussian randomized indices.

\subsection{Vanilla TS Instability and Optimism: An Illustrative Example}
\label{sec:vanilla-feedback}

Prior work has shown that vanilla TS can be unstable from an adaptive inference
viewpoint~\citep{kalvit2021closer,halder2025stable}. Here we use a simple two-armed bandit example to isolate the feedback mechanism and to illustrate why optimism may help. We focus on a
simplified two-armed Gaussian bandit with equal means \(\mu_1=\mu_2=\mu\) and
unit noise variance.

\vspace{4pt}
\noindent\textbf{Instability of Vanilla TS.} Under vanilla TS, the index for arm \(a\) is drawn as
$\theta_{a,t+1}\mid\mathcal{F}_t \sim \mathcal{N}\left(\widehat\mu_{a,t},1/N_{a,t}\right)$, and thus
$\theta_{1,t+1}-\theta_{2,t+1}\mid\mathcal F_t \sim \mathcal N\left(\widehat\mu_{1,t}-\widehat\mu_{2,t},N_{1,t}^{-1}+N_{2,t}^{-1}\right)$.
Therefore, the probability of pulling arm 1 in the next time step is
\[
\mathbb P(A_{t+1}=1\mid\mathcal F_t)
=
\Phi\left(
\frac{\widehat\mu_{1,t}-\widehat\mu_{2,t}}
{\sqrt{N_{1,t}^{-1}+N_{2,t}^{-1}}}
\right)
=
\Phi\left(
\sqrt{\frac{N_{2,t}}{t}}\,W_{1,t}
-
\sqrt{\frac{N_{1,t}}{t}}\,W_{2,t}
\right),
\]
where the second equality uses \(N_{1,t}+N_{2,t}=t\), and $W_{a,t}:=\sqrt{N_{a,t}}(\widehat\mu_{a,t}-\mu)$. This display makes the feedback explicit. Stability would require the
pull-count proportions \(N_{1,t}/t\) and \(N_{2,t}/t\) to settle around
deterministic limits. If this were the case, the coefficients multiplying
\(W_{1,t}\) and \(W_{2,t}\) in the displayed allocation probability would also
settle around deterministic constants. However, even in this simplified Gaussian
case, \(W_{1,t}\) and \(W_{2,t}\) remain order-one fluctuations rather than
shrinking to zero. Hence the standardized contrast determining the
next allocation probability need not vanish. This creates a feedback loop: the
same empirical noise that enters the final Wald statistic also steers future
sample sizes, so vanilla TS may fail to produce pull counts concentrated around
deterministic benchmarks.

\vspace{4pt}
\noindent\textbf{Why Optimism May Help.}
The same calculation suggests how optimism may weaken this feedback and stabilize the realized pull counts. Both
modifications below are \emph{optimistic} because they make uncertain or under-sampled
arms more competitive than they would be under the vanilla TS. Since the Gaussian index in~\eqref{eq:ts:general} is determined by an index center and an index-variance multiplier, there are two natural ways to modify it.

\begin{itemize}[leftmargin=*]
\item The first is to inflate the index variance. This is optimistic in a
distributional sense: a larger index variance increases the chance that an
uncertain arm generates a high index draw. In the
two-armed calculation, multiplying the sampling variance by \(\variance\) gives
\[
\mathbb P(A_{t+1}=1\mid\mathcal F_t)
=
\Phi\left(
\frac{1}{\sqrt{\variance}}
\left(
\sqrt{\frac{N_{2,t}}{t}}\,W_{1,t}
-
\sqrt{\frac{N_{1,t}}{t}}\,W_{2,t}
\right) \right),
\]
so the empirical-noise feedback is weakened as \(\variance\to\infty\).

\item The second is to shift the index center. This is optimistic in an
upper-confidence sense: if arm \(a\) receives a bonus \(B_{a,t}\), then the same
two-armed
calculation gives
\[
\mathbb P(A_{t+1}=1\mid\mathcal F_t)
=
\Phi\left(
\frac{\widehat\mu_{1,t}-\widehat\mu_{2,t}+B_{1,t}-B_{2,t}}
{\sqrt{N_{1,t}^{-1}+N_{2,t}^{-1}}}
\right).
\]
Choosing \(B_{a,t}\) larger for arms with smaller \(N_{a,t}\) gives
under-sampled arms an explicit advantage and pushes the allocation back toward
balance.
\end{itemize}

Thus, the calculation points to two natural optimistic modifications of vanilla
TS: increasing the index variance to damp empirical-noise feedback, and shifting
the index center in favor of under-sampled arms to create a balancing force.
The next subsection formalizes these two ideas as the variance-inflated and
mean-bonus TS rules.

\subsection{Optimistic Thompson Sampling}

Motivated by the last subsection, we study two complementary ways to implement optimism in TS.
\begin{enumerate}[leftmargin=*]
    \item \textbf{Variance inflation}. This is introduced by~\citet{halder2025stable}:
    \[
\mean = \widehat{\mu}_{a,t},\qquad \variance > 1.
\]
We allow $\variance$ to depend on the horizon $T$ and satisfy the growth condition:
\begin{equation}\label{eq:gamma-growth}
{\variance}\to\infty
\qquad\text{and}\qquad
\variance (\log T)^2 / T \to 0
\qquad(T\to\infty),
\end{equation}
which is milder than the growth condition in~\citet{halder2025stable}.
More precisely,~\citet{halder2025stable} assume \(\variance/\log\log T\to\infty\) and \(\sqrt{\log T}/\variance\to\infty\). Our condition~\eqref{eq:gamma-growth} only asks that \(\variance\) diverge and remain \(o(T/(\log T)^2)\), so both the lower-growth requirement and the upper-growth restriction on \(\variance\) are relaxed.

The term ``optimism'' can be made explicit here: inflating the Gaussian index variance increases the probability of
drawing an ``optimistic'' (high) index.
Indeed, if $Z\sim\mathcal{N}(0,1)$ and $c>\widehat{\mu}_{a,t}$, then
\[
\mathbb{P}\left(\widehat{\mu}_{a,t}+\sqrt{\frac{\variance}{N_{a,t}}}Z \ge c \ \middle|\ \mathcal{F}_t \right)
=
1-\Phi\left((c-\widehat{\mu}_{a,t})\sqrt{\frac{N_{a,t}}{\variance}}\right),
\]
which is increasing in $\variance$.
Thus, variance inflation implements a form of \emph{distributional optimism} by making upper-tail events more likely.

\item \textbf{Mean bonus.} The second route keeps the Gaussian index variance unchanged as in vanilla Gaussian TS but adds an explicit optimism bonus to the index center. Concretely, at time \(t\ge K\), draw the Gaussian index
$\tilde{\theta}_{a,t+1}\mid\mathcal{F}_t \sim \mathcal{N}(\widehat{\mu}_{a,t}, 1/N_{a,t})$ and then select
	\begin{equation}\label{eq:ts:mean:1}
	A_{t+1}\in\operatorname*{\argmax}_{a\in[K]}\Big\{\tilde{\theta}_{a,t+1}+B_{a,t}\Big\},
	\qquad \text{where } \,
	B_{a,t}:=\sqrt{\frac{2\bonus\log T}{N_{a,t}}},
	\end{equation}
where $\bonus$ is a deterministic \emph{bonus strength} parameter that controls the amount of optimism injected into the index. We set $\bonus$ to satisfy the following growth condition:
\[ \label{eq:bonus-growth}
\bonus \to \infty \qquad \text{and} \qquad (\bonus \log T) / T \to 0 \qquad (T \to \infty).
\]
When $\bonus=1$, the bonus $B_{a,t}$ coincides with the standard UCB bonus, and~\citet{khamaru2024inference} show that this choice is sufficient to obtain stability for deterministic UCB-type policies. Equivalently, the mean-bonus rule fits the general TS form~\eqref{eq:ts:general} with $(\mean,\variance) = (\widehat{\mu}_{a,t}+B_{a,t}, 1)$, and its action rule is~\eqref{eq:ts:mean:1}. Because these indices are randomized, in contrast to deterministic UCB-type policies, we require a slowly diverging $\bonus$ to control the additional sampling noise. This choice is crucial for stability of TS with mean bonus. Thus, the resulting indices are distributed as
	\begin{equation}\label{eq:ts:mean:2}
	\theta_{a, t+1} \mid \mathcal{F}_t \sim \mathcal{N}\left(\widehat{\mu}_{a,t}+B_{a,t}, \frac{1}{N_{a,t}}\right) \qquad
	A_{t+1}\in\operatorname*{\argmax}_{a \in [K]}\theta_{a,t+1}.
	\end{equation}
We highlight three complementary interpretations that justify this choice.

\smallskip
\noindent\emph{(i) Posterior quantile (Bayes-UCB).}
For a Gaussian posterior $\mathcal{N}(\widehat{\mu}_{a,t},1/N_{a,t})$, the $(1-\alpha)$-upper quantile equals
$\widehat{\mu}_{a,t}+(\Phi^{-1}(1-\alpha))/\sqrt{N_{a,t}}$.
Taking $\alpha=T^{-\bonus}$ yields $\Phi^{-1}(1-\alpha)\approx \sqrt{2\bonus\log T}$, so
$\widehat{\mu}_{a,t}+B_{a,t}$ matches an extreme upper posterior quantile, closely related to quantile-based
Bayesian optimism such as Bayes-UCB~\citep{kaufmann2012bayesian}.

\smallskip
\noindent\emph{(ii) Multi-sample optimism.}
A standard way to enforce optimism in posterior sampling is to draw multiple posterior samples and act optimistically with respect to them~\citep{fonteneau2013optimistic,agrawal2017optimistic,lu2017ensemble}.
If one draws $M$ i.i.d.\ posterior samples for the same arm and uses their maximum, then for Gaussian posteriors
the typical upward shift of the maximum is on the order of $\sqrt{2\log M/N_{a,t}}$.
Our choice $B_{a,t}=\sqrt{2\bonus \log T/N_{a,t}}$ corresponds to $M=T^{\bonus}$.
Thus the mean-bonus rule can be viewed as a surrogate for ``optimistic posterior sampling''
via multiple posterior draws, where optimism arises from selecting an upper-tail realization.

\smallskip
\noindent\emph{(iii) Optimistic Posterior.}
In the Gaussian family, exponential tilting of a Gaussian density by a linear factor preserves Gaussianity and
induces a mean shift.
Hence adding a positive bonus can also be interpreted as sampling from an optimistic, data-dependent
pseudo-posterior in~\eqref{eq:ts:mean:2}, connecting to optimistic TS variants such as
Optimistic Bayesian Sampling~\citep{may2012optimistic} and Feel-Good Thompson sampling~\citep{zhang2022feel}.
\end{enumerate}

Algorithm~\ref{alg:ts} gives pseudocode for the general TS framework in~\eqref{eq:ts:general}, including the different parameter choices described above.

\begin{algorithm}[t]
\caption{Thompson Sampling Framework}
\label{alg:ts}
\begin{algorithmic}[1]
\REQUIRE Horizon $T$, number of arms $K$.
\REQUIRE Choose a \emph{mode} and set parameters $(\mean, \variance)$ accordingly:
\begin{enumerate}[label=(\Alph*), leftmargin=*, nosep]
\item Vanilla TS: $\mean = \widehat{\mu}_{a,t}$ and $\variance=1$. \label{option:A}
\item TS with inflated variance: $\mean = \widehat{\mu}_{a,t}$ and $\variance$ satisfies~\eqref{eq:gamma-growth}. \label{option:B}
\item TS with mean bonus: $\mean$ is specified in~\eqref{eq:ts:mean:2} with $\bonus$ satisfying~\eqref{eq:bonus-growth} and $\variance=1$. \label{option:C}
\end{enumerate}
\STATE \textbf{Initialization:} for $a=1,\ldots,K$, pull arm $a$ once and observe the reward.
\FOR{$t = K, K+1, \ldots, T-1$}
    \FOR{$a=1,\ldots,K$}
        \STATE Compute $\mean$ based on~\eqref{eq:def:pull:time} and~\eqref{eq:ts:mean:2} for Option~\ref{option:C}.
        \STATE Sample independently $\theta_{a,t+1} \sim \mathcal{N}(\mean,\variance/N_{a,t})$.
    \ENDFOR
    \STATE Play $A_{t+1}\in\operatorname*{argmax}_{a\in[K]}\theta_{a,t+1}$.
    \STATE Observe the reward and update $\mathcal{F}_{t+1}$.
\ENDFOR
\end{algorithmic}
\end{algorithm}

\section{Theoretical Guarantees}

We present theoretical guarantees for the stability, adaptive inference, and regret of proposed TS algorithms with optimism. Recall that $\mathcal S^\star$ denotes the set of optimal arms with $m:=|\mathcal S^\star|$, and
$\Delta_a:=\mu^\star-\mu_a$ is the reward gap for each $a\notin\mathcal S^\star$.

\paragraph{Stability via variance inflation.}
We first analyze Algorithm~\ref{alg:ts} with Option~\ref{option:B}, which inflates the Gaussian index variance while keeping the index center \(\widehat\mu_{a,t}\) unchanged.

\begin{theorem}[Stability for TS with variance inflation] \label{thm:main}
    Run TS with variance inflation, i.e., Algorithm~\ref{alg:ts} with Option~\ref{option:B}, with \(\variance\) satisfying~\eqref{eq:gamma-growth}. Then, as $T\to\infty$:
\begin{enumerate}
\item[(a)] (Uniform allocation among optimal arms) For each $a \in\mathcal S^\star$,
\[
\frac{N_{a,T}}{T}\xrightarrow{\mathbb{P}}\frac{1}{m}.
\]
\item[(b)] (Sharp suboptimal asymptotics) For each $a\notin\mathcal S^\star$,
\[
\frac{N_{a,T}}{\variance\log\bigl(T/\variance\bigr)}\xrightarrow{\mathbb{P}}\frac{2}{\Delta_a^2}.
\]
\end{enumerate}
Consequently, TS with variance inflation is stable with respect to the deterministic sequences
\[
N^\star_{a,T}=
\begin{cases}
T/m, & a\in\mathcal S^\star,\\[2pt]
2\variance\log\bigl(T/\variance\bigr)/\Delta_a^2, & a\notin\mathcal S^\star.
\end{cases}
\]
\end{theorem}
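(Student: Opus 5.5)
The plan is to separate the \emph{geometry} of the Gaussian index competition from the \emph{empirical-mean perturbations}, and to show that the latter are negligible once $\variance\to\infty$. \textbf{Step 1 (perturbation control).} Using a time-uniform (law-of-the-iterated-logarithm type) concentration bound for $1$-sub-Gaussian partial sums, I will show that on an event of probability $1-o(1)$,
\[
|\widehat\mu_{a,t}-\mu_a|\le \sqrt{C\log\log(e\vee N_{a,t})\,/\,N_{a,t}}
\]
holds for every arm $a$ and every $t\le T$ with $N_{a,t}\ge n_0$, where $n_0$ is a slowly growing threshold. On this event the standardized perturbation entering the index competition, $\sqrt{N_{a,t}/\variance}\,(\widehat\mu_{a,t}-\mu_a)$, is $O\bigl(\sqrt{\log\log T/\variance}\bigr)$. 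This need not be small pointwise, since $\variance$ may grow slower than $\log\log T$. I will therefore only use averaged versions of it. The key point is that $W_{a,t}=\sqrt{N_{a,t}}(\widehat\mu_{a,t}-\mu_a)$ is $O_{\mathbb P}(1)$ at any fixed pull count. Consequently, the fraction of times at which $|W_{a,t}|/\sqrt{\variance}>\varepsilon$ tends to zero when times are weighted by the increment $dN/N$. This gives the "averaged perturbation control" that feeds into Steps 2 and 3.

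\textbf{Step 2 (coarse allocation, then sharp suboptimal rates).} First I will show $N_{\mathrm{sub}}(T)=o_{\mathbb P}(T)$ by a crude argument. While every optimal arm has $N_{i,t}\gtrsim t/(2K)$, a suboptimal arm with $N_{a,t}=n$ wins with conditional probability at most $\overline\Phi\bigl((\Delta_a-o(1))\sqrt{n/\variance}\bigr)+o(1/t)$, because the optimal indices fluctuate only by $O(\sqrt{\variance K/t})$ around $\mu^\star$. This forces $N_{a,T}=O(\variance\log T)=o(T)$, and hence $N_{\mathrm{opt}}(T)/T\to1$.

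Next comes the sharp asymptotics. Once every optimal arm has $N_{i,t}\asymp t$, the maximal optimal index equals $\mu^\star+O_{\mathbb P}(\sqrt{\variance\log K/t})$, which is negligible relative to $\Delta_a$ for $t\gg \variance$. The suboptimal pull probability is then
\[
p_a(n)=\overline\Phi\bigl(\Delta_a\sqrt{n/\variance}\,(1+o(1))\bigr),\qquad n=N_{a,t}.
\]
I will compare $N_{a,t}$ with the solution of $dn/dt=p_a(n)$, which satisfies $n(t)\sim (2\variance/\Delta_a^2)\log(t/\variance)$; the polynomial prefactor of the Gaussian tail contributes only $\log\log$ corrections. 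To transfer this from the ODE to the random process I will use a geometric waiting-time coupling. While $N_{a,t}=n$, the time until the next pull is stochastically sandwiched between geometric variables with parameters $p_a(n)(1\pm\epsilon)$. Summing these over $n\le(1\pm\delta)\,2\variance\log(T/\variance)/\Delta_a^2$ and applying a Chebyshev bound to the sum yields both the upper and the lower bound of part~(b). The condition $\variance(\log T)^2/T\to0$ guarantees that the burn-in period $t\lesssim\variance\log T$, during which the optimal arms are not yet well sampled, is negligible on the logarithmic scale $\log(T/\variance)$.

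\textbf{Step 3 (balancing the optimal arms; the main obstacle).} Let $x_i(t)=N_{i,t}/N_{\mathrm{opt}}(t)$ for $i\in\mathcal S^\star$, and let $u$ denote the uniform vector. With the perturbations removed, the conditional probability that optimal arm $i$ wins is given by the winner map
\[
q_i(x)=\mathbb P\Bigl(Z_i/\sqrt{x_i}=\max_{j\in\mathcal S^\star} Z_j/\sqrt{x_j}\Bigr),
\]
with $Z_j$ i.i.d.\ standard normal; up to time scaling this is exactly the equal-means competition. I will prove a negative-feedback property: $q_i>q_j$ whenever $x_i<x_j$, by an exchange argument showing that among zero-mean Gaussians the one with larger variance is more likely to be the maximum. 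This implies
\[
\langle q(x)-x,\;x-u\rangle\le -c\,\|x-u\|^2
\]
on any compact subset of the open simplex. Controlling a neighborhood of the boundary is a separate step, handled by a lower bound of the form $q_i\gtrsim$ a power of $x_i$. With the Lyapunov function $V(x)=\|x-u\|^2$, the increments of $x$ are of order $1/t$, so in logarithmic time $d\log t$ one has
\[
dV\le -2cV\,d\log t+(\text{martingale})+(\text{perturbation}).
\]
The martingale part is handled by Freedman's inequality, and its total contribution vanishes. The perturbation term is controlled only on average by Step~1, using the fact that the Lipschitz constant of the full winner probability in the standardized means is bounded. The averaged control must be strong enough to beat the drift over the $\log T$ units of logarithmic time available. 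This is where I expect most of the difficulty, particularly in the early phase when $N_{\mathrm{opt}}$ is small and when some $x_i$ is near zero. There I would first establish a weaker statement, $\min_i x_i(t)\ge c_0$ after a burn-in, using the boundary lower bound on $q_i$, and only then run the contraction argument to conclude $V(x(T))\to0$ in probability. Combined with $N_{\mathrm{opt}}(T)/T\to1$ from Step~2, this gives part~(a), and stability follows from parts~(a) and~(b).
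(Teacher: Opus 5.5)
\paragraph{Main gap: perturbation control in calendar time (part~(a)).} The gap is at the step you flag yourself: controlling the empirical-mean perturbation $\rho_{i,t}=W_{i,t}/\sqrt{\variance}$ in \emph{calendar} time.

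Your Step~1 gives a log-average on the pull-count scale ($dN/N$). But your drift $dV\le-2cV\,d\log t+\cdots$ forgets everything except the last $O(1)$ units of log-time. So you need the perturbation to be small at fixed times $t\in[\epsilon T,T]$, that is, at adaptively chosen counts $N_{i,t}$. The two scales decouple exactly when arm $i$ is under-sampled. Then $W_{i,t}$ is frozen, and the rule samples arm $i$ less often when $W_{i,t}$ is very negative, so atypical values of $W$ get stretched in calendar time. This is the instability mechanism itself.

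Your remedy is to first prove $\min_i x_i(t)\ge c_0$ from a boundary lower bound on the pure-noise map $q_i$. That is circular. Turning a bound on $q_i$ into one on the actual $p_{i,t}$ requires $|W_{j,t}|/\sqrt{\variance}$ to be small at those same times. Your pathwise bound gives only $\sqrt{\log\log T/\variance}$, which need not vanish under~\eqref{eq:gamma-growth}.

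The missing idea is the uniform-in-$t$ bound $\sup_t\mathbb E\exp\{\kappa W_{i,t}^2\}\le C$ of Lemmas~\ref{lem:eq-exp-tight} and~\ref{lem:full-opt-exp-tight}. It is obtained by coupling two quantities and closing the loop with H\"older:
\begin{itemize}
\item the weighted moment $\mathbb E[x_{i,t}e^{\kappa_0W_{i,t}^2}]$, which contracts because unsampled arms are down-weighted by $t/(t+1)$ and sampled arms satisfy a one-step sub-Gaussian contraction;
\item the inverse moment $\mathbb E[x_{i,t}^{-\alpha}]$, which is pushed down by the anti-starvation bound $p_{i,t}\ge c\exp\{-\gamma\sum_jW_{j,t}^2\}$.
\end{itemize}
With this bound, your Lipschitz estimate makes the perturbation $O(\variance^{-1/2})$ in expectation at every $t$. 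The drift recursion then closes in expectation with no boundary analysis.

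Relatedly, the drift needs no compactness restriction. Monotonicity plus the reversed Chebyshev sum inequality give $\sum_i x_iq_i(x)\le1/m$, i.e.\ $\langle q(x)-x,x-u\rangle\le-\|x-u\|^2$ on the whole simplex (Lemma~\ref{lem:dotprod}). Also, for $m=2$ your strict monotonicity claim fails since $q_i\equiv1/2$, although the drift bound still holds.

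\paragraph{Fixable problems in Step~2.}
\begin{itemize}
\item \emph{Crude bound.} It assumes every optimal arm already has $N_{i,t}\gtrsim t/(2K)$, which is not yet known. Breaking this circularity needs a regret-type comparison such as Lemma~\ref{lem:lem3}. That lemma bounds pulls of $a$ against pulls of a single optimal arm through the ratio $(1-Q_t^{(a)})/Q_t^{(a)}$.
\item \emph{Precision of the sharp step.} A multiplicative $(1\pm\epsilon)$ sandwich of $p_a(n)$ is unattainable. With $L_T=\log(T/\variance)$ and $n\asymp\variance L_T$, the error $W_{a,t}$ shifts the Gaussian exponent by about $\sqrt{L_T/\variance}\,|W_{a,t}|$. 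The lower bound must also pay a competitor factor $e^{-r_T}$ with $r_T=o(L_T)$, because LIL-size errors are not negligible when $\variance\ll\log\log T$. Only $e^{o(L_T)}$ precision survives. You must therefore bracket the exponential rates by $(\Delta_a\pm3\varepsilon)^2/2$ and use a log-scale law for geometric sums (Lemma~\ref{lem:geom-tail-sum-loglaw}). In that lemma, Chebyshev works only when $\log\variance$ is a positive fraction of $L_T$; otherwise the last geometric term is used alone.
\item \emph{Balance over the whole time range.} Your sharp step assumes all optimal arms have $N_{i,t}\asymp t$ over every time relevant to arm $a$'s waiting times. A terminal-window statement like your Step~3 does not supply this. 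The paper avoids the issue: on the LIL event, each competitor lies below $\mu^\star+\varepsilon$ with conditional probability at least $\Phi(-\sqrt{C\log\log T/\variance})$ whatever its count. Only one well-sampled optimal arm is needed for the upper bound (Lemma~\ref{lem:one-step-bounds}).
\end{itemize}
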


\begin{proof}[Proof of Theorem~\ref{thm:main}]
    See Appendix~\ref{appendix:thm:main} for a detailed proof.
\end{proof}

Theorem~\ref{thm:main} shows that this form of optimism yields stability for any fixed \(K\ge2\), including the multiple-optimal-arm regime.
Part~(a) implies that the algorithm does not collapse onto a single optimal arm; instead, it allocates asymptotically uniformly over \(\mathcal S^\star\).
Part~(b) gives the sharp asymptotic characterization of suboptimal pulls: for every \(a\notin\mathcal S^\star\), \(N_{a,T}\sim (2\variance/\Delta_a^2)\log(T/\variance)\).
This is the precise centering for variance-inflated TS throughout the full inflation regime covered by Theorem~\ref{thm:main}.
In the narrower regime considered by~\citet{halder2025stable}, namely \(\log\log T\ll\variance\ll\sqrt{\log T}\), one has \(\log\variance=o(\log T)\) and hence \(\log(T/\variance)\sim\log T\).
Thus their \(2\variance\log T/\Delta_a^2\) scale is consistent with this centering in the overlap, while the distinction between \(\log T\) and \(\log(T/\variance)\) becomes essential in the broader regime allowed by~\eqref{eq:gamma-growth}.
Compared with~\citet{halder2025stable}, whose stability analysis covers the cases with at most two optimal arms under the narrower inflation window \(\log\log T\ll\variance\ll\sqrt{\log T}\), Theorem~\ref{thm:main} handles arbitrary fixed \(K\), including \(|\mathcal S^\star|\ge3\), under the broader condition \(\variance\to\infty\) and \(\variance(\log T)^2/T\to0\). The proof uses a winner-map viewpoint and a Lyapunov-drift argument to control the balancing of multiple optimal arms.
It also uses a geometric coupling argument to obtain the sharp suboptimal pull counts.
Section~\ref{sec:proof:sketch} gives an overview of these new proof ideas.

\paragraph{Stability via mean bonus.}
Algorithm~\ref{alg:ts} with Option~\ref{option:C} implements optimism by shifting the index center rather than inflating the index variance.
While this modification appears qualitatively different, the next theorem shows that it yields a similar stability picture to variance inflation.

\begin{theorem}[Stability for TS with mean bonus] \label{thm:main:2}
    Run TS with mean bonus, i.e.,  Algorithm~\ref{alg:ts} with Option~\ref{option:C}, with \(\bonus\) satisfying~\eqref{eq:bonus-growth}. Then, as $T\to\infty$:
\begin{enumerate}
\item[(a)] (Uniform allocation among optimal arms) For each $a \in\mathcal S^\star$,
\[
\frac{N_{a,T}}{T}\xrightarrow{\mathbb{P}}\frac{1}{m}.
\]
\item[(b)] (Sharp suboptimal asymptotics) For each $a\notin\mathcal S^\star$,
\[
\frac{N_{a,T}}{\bonus\log T}\xrightarrow{\mathbb{P}}\frac{2}{\Delta_a^2}.
\]
\end{enumerate}
Consequently, TS with mean bonus is stable with respect to the deterministic sequences
\[
N^\star_{a,T}=
\begin{cases}
T/m, & a\in\mathcal S^\star,\\[2pt]
(2\bonus\log T)/\Delta_a^2, & a\notin\mathcal S^\star.
\end{cases}
\]
\end{theorem}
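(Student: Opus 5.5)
We prove (b) first and then (a); the key device is the abstract's ``corridor argument'' for the bonus-inflated indices. Write $L_T:=2\bonus\log T$ and $U_{a,t}:=\widehat\mu_{a,t}+B_{a,t}$, so $\theta_{a,t+1}=U_{a,t}+Z_{a,t+1}/\sqrt{N_{a,t}}$ with $Z_{a,t+1}\sim\mathcal N(0,1)$ i.i.d. The plan rests on two uniform controls, valid on an event of probability $1-o(1)$.

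\textbf{Step 1 (noise control).} A time-uniform (law of the iterated logarithm or peeling) bound gives $|\widehat\mu_{a,t}-\mu_a|\le C\sqrt{\log\log T/N_{a,t}}$ for all $a,t$ with $N_{a,t}$ large. A union bound over $t\le T$ gives $\max_{a,t}|Z_{a,t+1}|\le\sqrt{(2+\epsilon)\log T}$. Since $\bonus\to\infty$, both perturbations are $o(B_{a,t})=o(\sqrt{L_T/N_{a,t}})$. Hence uniformly $\theta_{a,t+1}=\mu_a+(1+o(1))\sqrt{L_T/N_{a,t}}$ whenever $N_{a,t}\to\infty$. This ``corridor'' pins every index to the UCB-type envelope. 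The same bounds handle the initial small-$N$ regime, where $B_{a,t}$ is huge, and imply each arm is pulled until $N_{a,t}\gtrsim L_T/(\text{const})$.

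\textbf{Step 2 (suboptimal arms, part (b)).} First, $N_{\mathrm{opt}}(t)\ge t/2$ eventually, so at least one optimal arm $i$ has $N_{i,t}\ge t/(2m)$. Its index is then at least $\mu^\star-o(1)$, because $L_T/t\to0$ for $t$ of order $T$ by~\eqref{eq:bonus-growth}. Suboptimal arm $a$ can win only if $\mu_a+(1+o(1))\sqrt{L_T/N_{a,t}}\ge\mu^\star-o(1)$, which gives the upper bound $N_{a,T}\le(1+o(1))L_T/\Delta_a^2$. For the lower bound, suppose $N_{a,t}\le(1-\delta)L_T/\Delta_a^2$ on a late block of times. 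On that block $\theta_a\ge\mu^\star+c\delta$, while every optimal arm with $N_{i,t}\ge\epsilon T$ has index $\le\mu^\star+o(1)$. An optimal arm with few pulls would itself be pulled and its count pushed up, and Step 3 shows such arms do not persist. So arm $a$ beats the heavily pulled arms and must be selected repeatedly, contradicting the assumption over a block of length $\gg L_T$ (the terminal-block contradiction). Dividing by $\bonus\log T$ gives (b).

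\textbf{Step 3 (optimal arms, part (a)), the main obstacle.} The indices of optimal arms are $\mu^\star+\sqrt{L_T/N_{i,t}}\,(1+r_{i,t})$, where $r_{i,t}=O\bigl(\sqrt{\log T/L_T}\bigr)+O\bigl(\sqrt{\log\log T/L_T}\bigr)=o(1)$. The argmax among optimal arms is therefore the arm with the smallest $N_{i,t}(1+r_{i,t})^{-2}$. This is an order-flip mechanism: whenever $N_{i,t}/N_{j,t}>1+\eta$ with $\eta\gg\sup|r|$, arm $i$ cannot beat arm $j$. The max-to-min ratio of optimal pull counts is thus self-correcting. Once it drops below $1+\eta$ it can exceed $1+\eta$ only by overshoot of a single pull, since the ratio changes by $O(1/N)$ per step. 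I would formalize this by showing that after time $\epsilon T$ we have $\max_{i\in\mathcal S^\star}N_{i,t}\le(1+\eta)\min_{i\in\mathcal S^\star}N_{i,t}+1$. To reach this regime, I would use a last-exit argument: if the ratio exceeded $1+\eta$ at some late time, consider the last time it was below that level; during the excursion the leading arm was never pulled, which is impossible.

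The delicate part is making $r_{i,t}$ uniformly $o(1)$ while $N_{i,t}$ ranges over $[\epsilon T,T]$, with $Z$ fresh at each step. The union bound in Step 1 handles this precisely because $\bonus\to\infty$; this is why a fixed $\bonus$, as in UCB, would not suffice here. Combining the ratio control with $N_{\mathrm{sub}}(T)=O(\bonus\log T)=o(T)$ gives $N_{i,T}/T\to1/m$. Stability then follows, since both $T/m\to\infty$ and $L_T\to\infty$.
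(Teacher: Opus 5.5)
Your proposal follows essentially the same route as the paper: a high-probability event controlling all empirical means and Gaussian draws so that the bonus dominates every perturbation, an index comparison for the suboptimal upper bound, a pairwise order-flip lemma among optimal arms applied at the last pull of the most-pulled arm, and a terminal-block contradiction (using $N_{\mathrm{sub}}(T)=o(T)$) for the lower bound. Two clean-ups are needed: in Step~2, drop the detour through ``$N_{\mathrm{opt}}(t)\ge t/2$ eventually'', which presupposes the bound being proved and covers only late times, and use your corridor directly instead, which already gives $\theta_{i,t+1}>\mu^\star$ for every optimal arm at every time (this is the paper's argument); and in Step~3 no regime-entry or last-exit argument is needed, because applying the order flip at the last pull of the arm with maximal count gives $\max_{i\in\mathcal S^\star}N_{i,t}\le(1+\eta)\min_{i\in\mathcal S^\star}N_{i,t}+1$ directly at any fixed $t$.
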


\begin{proof}[Proof of Theorem~\ref{thm:main:2}]
    See Appendix~\ref{appendix:thm:main:2} for a detailed proof.
\end{proof}

Theorem~\ref{thm:main:2} shows that TS with a mean bonus exhibits a similar stability picture to variance inflation: asymptotically uniform allocation over the optimal set and sharp logarithmic sampling of each suboptimal arm. Moreover, the suboptimal sampling scale becomes $\bonus\log T$, reflecting the strength of the bonus $B_{a,t}$ in~\eqref{eq:ts:mean:1}.

\paragraph{From stability to adaptive inference.}
As pointed out by Proposition~\ref{prop:inference}, if the algorithm is stable, then standard Wald-type confidence intervals constructed from the estimated mean and variance achieve asymptotically correct coverage despite adaptive sampling.
Theorem~\ref{thm:inference} formalizes this implication for both optimistic TS variants.

\begin{theorem}[Adaptive inference]\label{thm:inference}
Run Algorithm~\ref{alg:ts} with Option~\ref{option:B} (variance inflation) or Option~\ref{option:C} (mean bonus).
Then, for any fixed arm $a\in[K]$,
\[
\frac{\sqrt{N_{a,T}}\bigl(\widehat\mu_{a,T}-\mu_a\bigr)}{\widehat\sigma_{a,T}}
\xrightarrow{\mathcal{D}} \mathcal{N}(0,1)
\qquad \text{as } T\to\infty,
\]
where $\widehat{\mu}_{a,T}$ and $\widehat{\sigma}_{a,T}$ are defined in~\eqref{eq:def:pull:time} and~\eqref{eq:def:sample:variance}, respectively.
Consequently, for any fixed $\alpha\in(0,1)$, the Wald-type confidence interval
\[
\mathrm{CI}_{a,T}(1-\alpha)
:=
\left[
\widehat{\mu}_{a,T}-\Phi^{-1}(1-\alpha/2)\frac{\widehat{\sigma}_{a,T}}{\sqrt{N_{a,T}}},
\
\widehat{\mu}_{a,T}+\Phi^{-1}(1-\alpha/2)\frac{\widehat{\sigma}_{a,T}}{\sqrt{N_{a,T}}}
\right]
\]
satisfies
\[
\lim_{T\to\infty}\mathbb{P}\left(\mu_a\in \mathrm{CI}_{a,T}(1-\alpha)\right)=1-\alpha.
\]
\end{theorem}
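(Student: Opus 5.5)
The plan is to reduce Theorem~\ref{thm:inference} to Proposition~\ref{prop:inference}, using the stability already established in Theorems~\ref{thm:main} and~\ref{thm:main:2}. For Option~\ref{option:B}, Theorem~\ref{thm:main} gives deterministic sequences \(N^\star_{a,T}\) with \(N_{a,T}/N^\star_{a,T}\xrightarrow{\mathbb P}1\). We still need \(N^\star_{a,T}\to\infty\) for every arm. For optimal arms this is immediate, since \(N^\star_{a,T}=T/m\). For suboptimal arms, \(N^\star_{a,T}=2\variance\log(T/\variance)/\Delta_a^2\). Here \(\variance\to\infty\), and \(T/\variance\ge(\log T)^2\to\infty\) eventually by~\eqref{eq:gamma-growth}, so \(\log(T/\variance)\to\infty\) as well. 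For Option~\ref{option:C}, Theorem~\ref{thm:main:2} gives \(N^\star_{a,T}=2\bonus\log T/\Delta_a^2\), which diverges because \(\bonus\to\infty\). In both cases the algorithm is therefore stable in the sense of Definition~\ref{def:stability}.

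To be careful about the model, I would briefly recheck the martingale CLT behind Proposition~\ref{prop:inference} in the present sub-Gaussian, triangular-array setting, since \(\variance\) and \(\bonus\) depend on \(T\). Write \(\sqrt{N_{a,T}}(\widehat\mu_{a,T}-\mu_a)=N_{a,T}^{-1/2}\sum_{s\le T}\mathbf 1\{A_s=a\}\xi_{a,s}\). Here \(\mathbf 1\{A_s=a\}\) is predictable: it depends on \(\mathcal F_{s-1}\) and on the independent index draws, not on \(\xi_{a,s}\). Hence the normalized sum \((N^\star_{a,T})^{-1/2}\sum_s \mathbf 1\{A_s=a\}\xi_{a,s}\) is a martingale difference array with conditional variance sum \(\sigma^2 N_{a,T}/N^\star_{a,T}\xrightarrow{\mathbb P}\sigma^2\), where \(\sigma^2=\operatorname{Var}(\xi)>0\) by non-degeneracy.

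The conditional Lindeberg condition follows from the uniform integrability of \(\xi^2\), which holds because \(\xi\) is sub-Gaussian, together with \(N^\star_{a,T}\to\infty\). The martingale CLT then gives convergence to \(\mathcal N(0,\sigma^2)\), and Slutsky's theorem lets us replace \(N^\star_{a,T}\) by \(N_{a,T}\). For the studentization, I would show \(\widehat\sigma^2_{a,T}\xrightarrow{\mathbb P}\sigma^2\). The sample variance equals \(N_{a,T}^{-1}\sum\mathbf 1\{A_s=a\}\xi_{a,s}^2\) minus \((\widehat\mu_{a,T}-\mu_a)^2\), up to an \(N/(N-1)\) factor. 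One option is a martingale weak law of large numbers, using the same stability normalization. A cleaner option is to note that the arm-\(a\) rewards, in the order they are observed, are i.i.d. copies of \(\mu_a+\xi\) (the standard skeleton/stack argument), so the running sample variance of the first \(n\) of them converges a.s. as \(n\to\infty\), and \(N_{a,T}\to\infty\) in probability. Combining these with Slutsky gives the asserted normal limit.

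The coverage claim follows directly: \(\{\mu_a\in \mathrm{CI}_{a,T}(1-\alpha)\}\) is the event that the absolute studentized statistic is at most \(\Phi^{-1}(1-\alpha/2)\). The limit distribution is continuous, so its probability converges to \(1-\alpha\). I expect no step here to be a real obstacle, since the substance lies in Theorems~\ref{thm:main} and~\ref{thm:main:2}. The only point needing care is that the bandit instance changes with \(T\) through \(\variance\) or \(\bonus\). This forces a triangular-array martingale CLT rather than a single-sequence version, and the Lindeberg verification above is written with that in mind.
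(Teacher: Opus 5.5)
Your proposal is correct and follows the paper's route: stability from Theorems~\ref{thm:main} and~\ref{thm:main:2}, then Proposition~\ref{prop:inference}, then coverage by continuity of the normal limit. Your explicit checks go beyond the paper's three-line proof, which leaves them to the citation of Lai--Wei: that $N^\star_{a,T}\to\infty$ for every arm, and that the triangular-array martingale CLT, Lindeberg condition, and studentization all hold even though $\variance$ and $\bonus$ depend on $T$.
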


\begin{proof}
By Theorem~\ref{thm:main} (Option~\ref{option:B}) and Theorem~\ref{thm:main:2} (Option~\ref{option:C}), the algorithm is stable under either option. Proposition~\ref{prop:inference} then yields the stated asymptotic normality. The confidence interval coverage follows immediately.
\end{proof}

The takeaway of Theorem~\ref{thm:inference} is that, under either Option~\ref{option:B} or Option~\ref{option:C}, practitioners may form confidence intervals for each $\mu_a$ using the same studentized sample-mean recipe as in classical fixed-design settings. No additional debiasing or reweighting is required; the role of optimism is precisely to enforce the stability needed for valid inference.

\begin{remark}[Inference between two arms]
Although Theorem~\ref{thm:inference} is stated arm by arm, the same stability-to-CLT argument justifies the usual two-sample Wald statistic for comparing two fixed arms.  For any fixed \(i\ne j\),
\[
\frac{(\widehat\mu_{i,T}-\widehat\mu_{j,T})-(\mu_i-\mu_j)}
{\sqrt{\widehat\sigma_{i,T}^2/N_{i,T}+\widehat\sigma_{j,T}^2/N_{j,T}}}
\xrightarrow{\mathcal D}\mathcal N(0,1),
\]
where the denominator is positive with probability tending to one.  Hence, under \(H_0:\mu_i=\mu_j\), the standard Wald statistic
$
(\widehat\mu_{i,T}-\widehat\mu_{j,T})/
{(\widehat\sigma_{i,T}^2/N_{i,T}+\widehat\sigma_{j,T}^2/N_{j,T})^{1/2}}
$
is asymptotically standard normal. We can construct asymptotically valid confidence intervals for the difference \(\mu_i-\mu_j\) or perform asymptotically valid hypothesis tests comparing \(\mu_i\) and \(\mu_j\).
\end{remark}

\paragraph{Regret guarantees.}
We have established stability and the resulting Wald inference guarantees for both optimistic TS variants.  We now quantify the regret cost of enforcing this stability via optimism.  The next theorem shows that both optimistic TS variants retain instance-dependent logarithmic regret, with the additional factor governed by the amount of optimism.

\begin{theorem}[Regret bounds for optimistic TS]\label{thm:regret-main}
For Algorithm~\ref{alg:ts} with Option~\ref{option:B} (TS with variance inflation), it holds that for all sufficiently large \(T\),
\[
\mathcal R(T)
\le
C\sum_{a\notin\mathcal S^\star}
\frac{\variance}{\Delta_a}
\log\left(1+\frac{T\Delta_a^2}{\variance}\right).
\]
For Algorithm~\ref{alg:ts} with Option~\ref{option:C} (TS with mean bonus), it holds that for all sufficiently large \(T\),
\[
\mathcal R(T)
\le
C\sum_{a\notin\mathcal S^\star}
\frac{\bonus\log T}{\Delta_a}.
\]
Here \(C>0\) is a universal constant.
\end{theorem}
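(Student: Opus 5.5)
We will use the regret decomposition $\mathcal R(T)=\sum_{a\notin\mathcal S^\star}\Delta_a\,\mathbb E[N_{a,T}]$, which holds because the noise is mean zero and $A_t$ is $\mathcal F_{t-1}$-measurable given the index draws. It then suffices to bound $\mathbb E[N_{a,T}]$ for each suboptimal arm $a$. Stability alone is not enough, because Theorems~\ref{thm:main} and~\ref{thm:main:2} give convergence in probability rather than in expectation. We therefore run a direct finite-time expectation argument in the style of \citet{agrawal2017near}, with the inflated variance or the bonus entering the thresholds. Fix an optimal arm $1\in\mathcal S^\star$ and suboptimal $a$. Take thresholds $x_a=\mu_a+\Delta_a/3$ and $y_a=\mu^\star-\Delta_a/3$, and split
\[
\mathbb E[N_{a,T}]\le \sum_t \mathbb P\bigl(A_{t+1}=a,\ \widehat\mu_{a,t}>x_a\bigr)+\sum_t \mathbb P\bigl(A_{t+1}=a,\ \widehat\mu_{a,t}\le x_a,\ \theta_{a,t+1}>y_a\bigr)+\sum_t\mathbb P\bigl(A_{t+1}=a,\ \theta_{a,t+1}\le y_a\bigr).
\]

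\textbf{Variance inflation.} The first sum is $O(1/\Delta_a^2)$ by sub-Gaussian concentration of $\widehat\mu_{a,t}$, summed over the values of $N_{a,t}$. In the second sum, the index has variance $\variance/N_{a,t}$, so the event has conditional probability at most $\exp(-N_{a,t}\Delta_a^2/(18\variance))$. This is at most $\variance\Delta_a^2/T$ once $N_{a,t}\ge L_a:=C\variance\Delta_a^{-2}\log(1+T\Delta_a^2/\variance)$. The second sum is therefore at most $L_a+C\variance/\Delta_a^2$, and the term $\log(1+T\Delta_a^2/\variance)$ arises exactly here. The third sum is handled by the standard exchange lemma:
\[
\mathbb P(A_{t+1}=a,\theta_{a,t+1}\le y_a\mid\mathcal F_t)\le \frac{1-p_{1,t}}{p_{1,t}}\,\mathbb P(A_{t+1}=1\mid\mathcal F_t),\qquad p_{1,t}:=\mathbb P(\theta_{1,t+1}>y_a\mid\mathcal F_t).
\]
This reduces the third sum to $\sum_{j\ge0}\mathbb E[(1-p_{1,\tau_j})/p_{1,\tau_j}]$, where $\tau_j$ is the time of the $j$-th pull of arm $1$. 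The main obstacle is this expectation. With variance $\variance/j$, the index exceeds the mean with probability at least $1/2$ whenever $\widehat\mu_1\ge\mu^\star$. Otherwise we use the Gaussian anti-concentration bound $1/\overline\Phi(x)\lesssim (1+x)e^{x^2/2}$ with $x=\sqrt{j/\variance}\,(y_a-\widehat\mu_{1,j})_+$, and integrate against the sub-Gaussian tail of $\widehat\mu_{1,j}$, which is $e^{-j u^2/2}$. Since $\variance>1$, the inflated exponent $e^{j u^2/(2\variance)}$ is dominated by the noise tail. The resulting term is at most $C e^{-c j\Delta_a^2}$ for $j\gtrsim\variance/\Delta_a^2$ and $O(1)$ before that, so the sum is $O(\variance/\Delta_a^2)$. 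This needs a little care when $\variance$ is close to $1$, but then the vanilla Agrawal--Goyal bound applies with constants. Multiplying by $\Delta_a$ gives the claimed bound.

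\textbf{Mean bonus.} We repeat the argument with centers shifted by $B_{a,t}$. In the second sum, once $N_{a,t}\ge C\bonus\log T/\Delta_a^2$ we have $B_{a,t}\le\Delta_a/6$. The Gaussian draw with variance $1/N_{a,t}$ then exceeds $y_a$ with probability at most $T^{-1}$, which gives $O(\bonus\log T/\Delta_a^2)$. For the third sum the bonus only helps the optimal arm, since $\widehat\mu_{1}+B_{1}\ge\mu^\star$ except on a concentration failure event of probability at most $T^{-\bonus}$ per count. Hence $p_{1,t}\ge1/2$ off that event, and on it we use the crude bound $1/p\le C(1+x)e^{x^2/2}$, integrated as before. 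The total contribution is $O(1/\Delta_a^2)$, so $\mathcal R(T)\le C\sum_a\bonus\log T/\Delta_a$ for large $T$, using $\bonus\to\infty$ to absorb the $1/\Delta_a$ constants.
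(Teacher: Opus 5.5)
Your variance-inflation argument is essentially the paper's proof of Lemma~\ref{lem:lem3}. It uses the same three-way split with thresholds $\mu_a+\Delta_a/3$ and $\mu^\star-\Delta_a/3$, and the same counting cutoff of order $\variance\Delta_a^{-2}\log(1+T\Delta_a^2/\variance)$ in the second term. It also uses the same exchange inequality, with the third sum regrouped by the pull count of a fixed optimal arm. The one difference is that you integrate $1/\overline\Phi$ directly against the sub-Gaussian tail, whereas the paper applies Cauchy--Schwarz to the squared Mills ratio, hence its assumption $\variance\ge4$.

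For the mean-bonus bound you take a genuinely different route. The paper does not redo an expectation analysis there. Lemma~\ref{lem:subopt-negligible} already gives the pathwise bound $N_{a,T}\le N^+_{a,T}\le 4\bonus\log T/\Delta_a^2$ on $\mathcal G_T$. Since $\mathbb P(\mathcal G_T^c)\le 2T^{-2}$, the regret off $\mathcal G_T$ is $O(1/T)$. So your remark that stability does not yield expectation bounds is right for the variance-inflation part, whose stability proof itself relies on Lemma~\ref{lem:lem3}. It is not right for the mean bonus, where stability is proved pathwise on an event with polynomially small failure probability. The paper's route costs a few lines once that machinery exists, but it inherits the growth condition $\bonus\to\infty$ used in Lemma~\ref{lem:subopt-negligible}. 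Yours is self-contained, treats both variants uniformly, and works for any fixed $\bonus\ge1$, at the price of a worse constant in the leading term.

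There are two slips to fix, neither fatal.

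\emph{Split point in the variance-inflation third sum.} Splitting at $\widehat\mu_{1}\ge\mu^\star$ produces no decay in $j$, because the complementary event has probability near $1/2$. Split instead at $\mu^\star-\Delta_a/12$, as the paper does. The main term then decays like $e^{-cj\Delta_a^2/\variance}$, not $e^{-cj\Delta_a^2}$ as you claim, but it still sums to $O(\variance/\Delta_a^2)$.

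\emph{Mean-bonus third sum.} Knowing only $p_{1,t}\ge1/2$ gives $(1-p_{1,t})/p_{1,t}\le1$, which sums to $O(T)$. What you need is $1-p_{1,t}\le\overline\Phi(\sqrt{N_{1,t}}\,\Delta_a/3)$, which the event $\widehat\mu_{1,t}+B_{1,t}\ge\mu^\star$ actually gives. That bound sums to $O(1/\Delta_a^2)$.

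Also, $\variance\Delta_a^2/T$ should read $\variance/(T\Delta_a^2)$.
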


\begin{proof}
See Appendix~\ref{sec:regret-variance} for the variance-inflation bound and Appendix~\ref{sec:regret-mean-bonus} for the mean-bonus bound.
\end{proof}

For comparison, recall that the classical gap-dependent regret bound for vanilla TS \citep[e.g.,][]{agrawal2012analysis,lattimore2020bandit} is
\[
\mathcal R(T)
\le
C\sum_{a\notin\mathcal S^\star}
\frac{\log T}{\Delta_a}.
\]
Relative to this baseline, the mean-bonus bound in Theorem~\ref{thm:regret-main} incurs at most the multiplicative factor~\(\bonus\).  The variance-inflation bound carries the analogous \(\variance\)-scale multiplier, but with a sharper per-arm logarithmic factor \(\log(1+T\Delta_a^2/\variance)\).  For fixed gaps under~\eqref{eq:gamma-growth}, this factor differs from \(\log(T/\variance)\) only by an additive constant and is no larger than the classical \(\log T\) order.  Thus the regret price of optimism is controlled by \(\bonus\) or \(\variance\), respectively.

The conditions on the optimism parameters required by Theorem~\ref{thm:regret-main} are mild.  Apart from the upper-growth restrictions \(\variance=o(T/(\log T)^2)\) and \(\bonus=o(T/\log T)\), the only lower requirement for stability is \(\variance\to\infty\) and \(\bonus\to\infty\).  Hence the optimism parameters may be chosen to diverge arbitrarily slowly, for example \(\variance=\bonus=\log\log T\), or even slower iterated-logarithmic choices such as \(\log\log\log T\).

Overall, the results in this section show that optimism restores pull-count stability and yields standard Wald inference under adaptive sampling, all at only a controlled additional regret cost.

    \section{Numerical Experiments}\label{sec:numerical-experiments}

In this section, we present simulation studies on the finite-sample behavior of Thompson sampling for adaptive inference.  They examine whether the two optimistic variants of Thompson sampling induce the allocation stability required for valid inference and quantify the regret cost associated with this stabilization.

We compare the three sampling rules in Algorithm~\ref{alg:ts}: vanilla Thompson sampling (Option~\ref{option:A}), Thompson sampling with variance inflation (Option~\ref{option:B}), and Thompson sampling with a mean bonus (Option~\ref{option:C}).
We consider the reward model with Gaussian noise, i.e.
\[
    R_{a,t}=\mu_a+\xi_{a,t},\qquad
    \xi_{a,t}\stackrel{\mathrm{i.i.d.}}{\sim}\mathcal{N}(0,1).
\]
Unless otherwise specified, the main instance is
$
    (\mu_1,\ldots,\mu_6)=(1,1,1,0.8,0.5,0),
$
where the optimal set is $\mathcal S^\star=\{1,2,3\}$, with $|\mathcal S^\star|=3$.
In the following, we conduct \(5000\) Monte Carlo runs for all experiments.

\paragraph{Regret--inference tradeoff and parameter choice.}
We first study how the strength of optimism affects regret and inference.  To this end, we vary the variance inflation parameter $\variance$ in Option~\ref{option:B} and the mean-bonus parameter $\bonus$ in Option~\ref{option:C}.  Since stability predicts
$
    {N_{a,T}}/{T}\to {1}/{|\mathcal S^\star|},
$ for $ a\in\mathcal S^\star,
$
the quantity
\(
    \Var\{ {N_{a,T}}/{(T/|\mathcal S^\star|)}\}
\)
serves as a finite-sample measure of allocation variability among optimal arms. In the main instance, where $|\mathcal S^\star|=3$, we report $\Var(3N_{1,T}/T)$.

Figure~\ref{fig:regret-inference-tradeoff} presents the resulting regret--inference tradeoff. The horizontal axis reports relative regret with respect to vanilla Thompson sampling. The solid markers show $\Var(3N_{1,T}/T)$, and the hollow markers show the empirical coverage of the nominal $95\%$ Wald confidence interval for $\mu_1$.
The results reveal a clear finite-sample tradeoff between regret and allocation stability. When the optimism parameter is small, the algorithm behaves more like vanilla Thompson sampling. It incurs lower regret, but the realized allocation among the optimal arms is more variable, and the empirical coverage may fall below the nominal level. As $\variance$ or $\bonus$ increases, the allocation becomes more stable, as reflected by the decrease in $\Var(3N_{1,T}/T)$. The empirical coverage also improves.  This pattern is consistent with the theoretical role of optimism: a larger optimism parameter increases exploration of suboptimal arms and helps prevent persistent imbalance among optimal arms.
The theoretically motivated choice $(1+\log\log T)^2$ already yields empirical coverage close to the nominal $95\%$ level.  Further increasing the optimism parameter leads to only marginal improvements in allocation variability and coverage, while incurring a larger regret cost.

\begin{figure}[t]
    \centering
    \includegraphics[width=\linewidth]{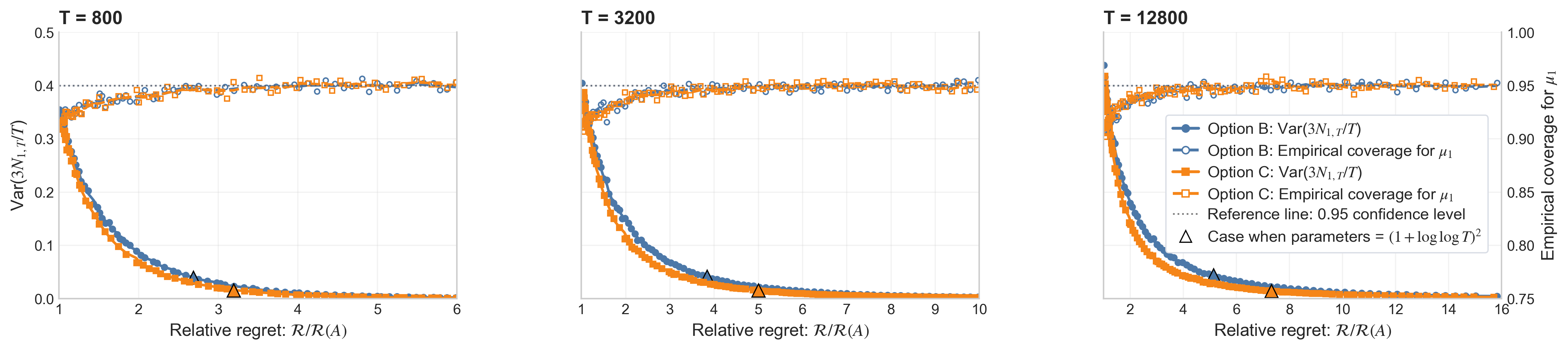}
    \caption{Regret--inference tradeoff in the instance $(1,1,1,0.8,0.5,0)$.  Solid markers show $\Var(3N_{1,T}/T)$, a finite-sample measure of allocation variability among the three optimal arms, and hollow markers show empirical coverage of the nominal $95\%$ Wald confidence interval for $\mu_1$.  The optimism parameters $\variance$ and $\bonus$ are varied for Options~\ref{option:B} and~\ref{option:C}, respectively.  The marked point corresponds to $\variance=\bonus=(1+\log\log T)^2$.}
    \label{fig:regret-inference-tradeoff}
\end{figure}

\paragraph{Regret and coverage/Gaussian-approximation across horizons.}
Figure~\ref{fig:coverage-horizon} studies how regret and coverage evolve with the horizon under the default choice $\variance=\bonus=(1+\log\log T)^2$.  The left panel reports regret as a function of $T$ on the logarithmic scale. Vanilla Thompson sampling has the smallest regret, as expected, because it is designed primarily for regret minimization and does not deliberately enforce allocation stability.  The two optimistic variants incur larger regret, but the observed increase is moderate. In particular, the finite-sample regret gap between the optimistic variants and vanilla Thompson sampling is visibly smaller than what one would expect from the theoretical upper bounds.
The right panel shows that, for \(95\%\) intervals, both optimistic variants achieve empirical coverage close to the nominal level across varying horizons, whereas vanilla Thompson sampling remains clearly below the target. A similar pattern is observed for \(80\%\) intervals, where vanilla Thompson sampling undercovers substantially.

Figure~\ref{fig:studentized-density} displays the empirical density of the studentized statistic at $T=3200$, together with the standard normal density.  Under vanilla Thompson sampling, the distribution of the studentized statistic for arm \(1\) shows a clear deviation from the $\mathcal{N}(0,1)$ benchmark, reflecting the lack of allocation stability among equally optimal arms.  In contrast, both optimistic variants are much closer to the standard normal curve.  This is also reflected in their smaller Kolmogorov--Smirnov and Cram\'er--von Mises distances.

\begin{figure}[t]
    \centering
    \includegraphics[width=0.92\linewidth]{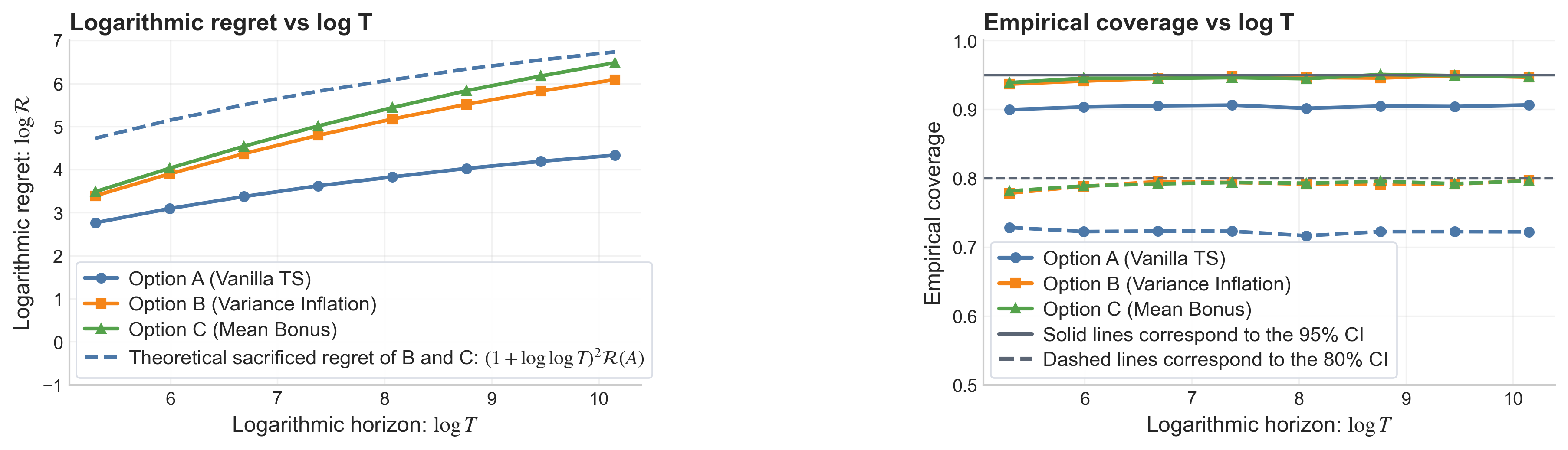}
    \caption{Regret and confidence interval coverage across horizons in the instance $(1,1,1,0.8,0.5,0)$, with $\variance=\bonus=(1+\log\log T)^2$.  The left panel reports regret as a function of the horizon, together with the corresponding theoretical upper bound.  The right panel reports empirical coverage of Wald confidence intervals.  The optimistic variants incur additional regret relative to vanilla Thompson sampling, but their empirical coverage is substantially closer to the nominal levels.}
    \label{fig:coverage-horizon}
\end{figure}

\begin{figure}[t]
    \centering
    \includegraphics[width=\linewidth]{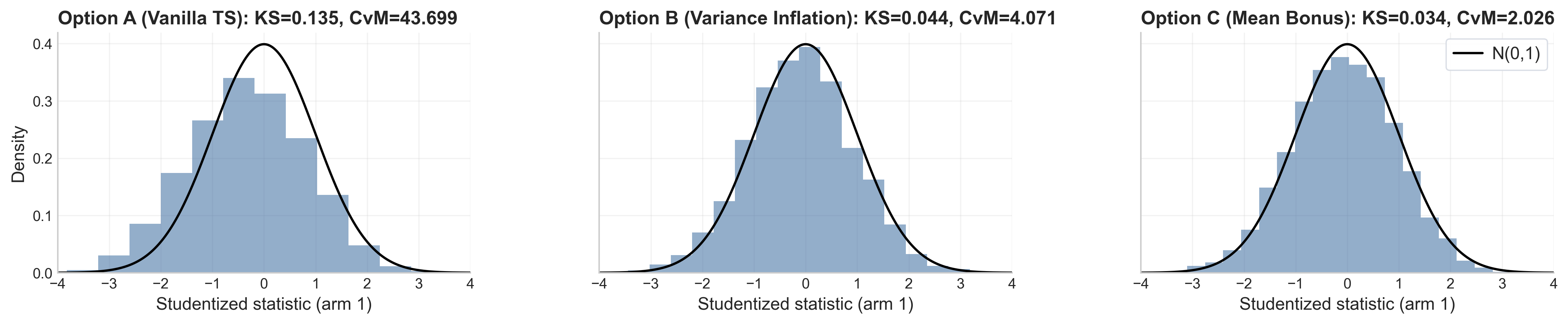}
    \caption{Studentized statistic for $\mu_1$ in the instance $(1,1,1,0.8,0.5,0)$ at $T=3200$, with $\variance=\bonus=(1+\log\log T)^2$.  The black curve is the standard normal density.  The panel titles report Kolmogorov--Smirnov and Cram\'er--von Mises distances from the standard normal distribution.  The optimistic variants yield studentized statistics closer to the normal benchmark.}
    \label{fig:studentized-density}
\end{figure}

\paragraph{Wald-type testing.}
Finally, we evaluate the effect of stabilization on hypothesis testing.  We consider the one-sided Wald test
$
    H_0:\mu_1=\mu_3
     \text{ against }
    H_1:\mu_1>\mu_3,
$
in the three-arm instance
$
    (\mu_1,\mu_2,\mu_3)=(1,1,1-\Delta),
$
at level $\alpha=0.05$. The parameter $\Delta$ controls the separation between arms~1 and~3.  When $\Delta=0$, the null hypothesis holds and all three arms are optimal.  When $\Delta>0$, arm~3 becomes suboptimal while arms~1 and~2 remain optimal, and the rejection probability measures power.
Figure~\ref{fig:hypothesis-testing} reports empirical rejection probabilities for several horizons.  At \(\Delta=0\), vanilla Thompson sampling exhibits inflated type I error, while the optimistic variants keep rejection probabilities close to the nominal \(5\%\) level. For \(\Delta>0\), the optimistic variants achieve higher rejection probabilities than vanilla Thompson sampling across the displayed horizons.

\begin{figure}[t]
    \centering
    \includegraphics[width=\linewidth]{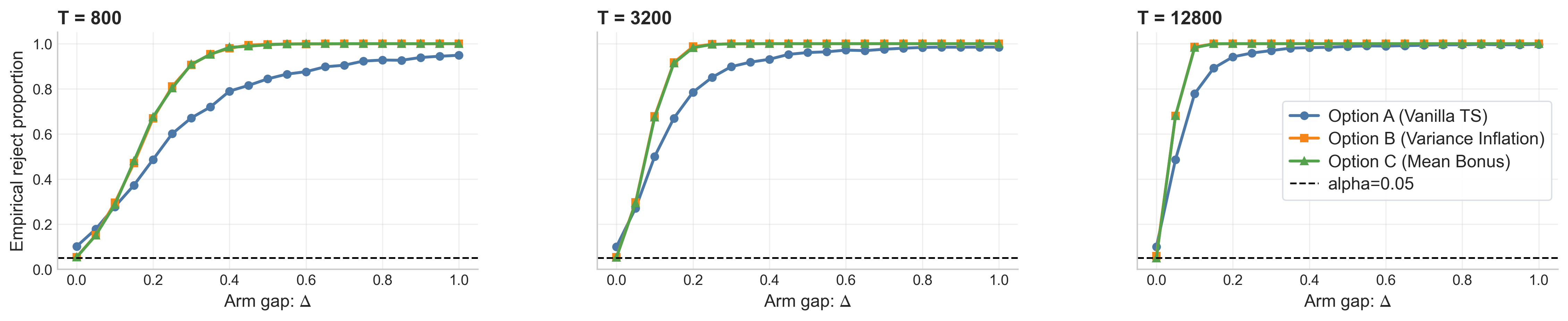}
    \caption{Empirical rejection probability for the one-sided Wald test of $H_0:\mu_1=\mu_3$ against $H_1:\mu_1>\mu_3$ in the instance $(1,1,1-\Delta)$, with level $\alpha=0.05$ and $\variance=\bonus=(1+\log\log T)^2$.  The point $\Delta=0$ corresponds to type I error, whereas $\Delta>0$ corresponds to power.  Across the displayed horizons, the optimistic variants provide improved null calibration while retaining high rejection probability under alternatives.}
    \label{fig:hypothesis-testing}
\end{figure}

    \section{Proof Overview} \label{sec:proof:sketch}

In this section, we give high-level proof sketches for Theorems~\ref{thm:main} and~\ref{thm:main:2}, emphasizing the novel technical ideas that enable the stability analysis for randomized Thompson sampling.

\subsection{Proof Overview of Theorem~\ref{thm:main}}

The proof proceeds in three steps. First, we study an equal-means bandit to isolate the allocation dynamics induced by Gaussian sampling noise. The key device is a "winner map" that exhibits a strict negative-feedback property: arms with larger empirical proportions have smaller index dispersion and thus smaller winning probability, which yields a negative Lyapunov drift toward uniform allocation. Second, we transfer this argument to the general $K$-armed bandit by working with proportions among the optimal arms and showing that suboptimal pulls are negligible. Third, we characterize suboptimal pull counts through the waiting times between successive pulls of a fixed suboptimal arm. Each new pull requires a rare Gaussian overshoot beyond the optimal-arm indices. The resulting waiting times can be compared with geometric variables, whose sums give the sharp logarithmic pull-count scale.

\paragraph{Step 1. Study $r$-armed equal-means bandits.}
We first study the bandit with $r$ \emph{equal-means arms}, i.e., the bandit has $r$ arms with the same mean $\mu_1=\cdots=\mu_r=\mu$.

\smallskip
\noindent
\textbf{(1.1) A winner-map viewpoint to handle sampling noise.}
Define the empirical proportions $x_{i,t}=N_{i,t}/t$ and $x_t=(x_{1,t},\ldots,x_{r,t})$. The next Gaussian indices satisfy
\[
\theta_{i,t+1}
= \widehat{\mu}_{i,t}
+ \sqrt{\frac{\variance}{N_{i,t}}}Z_{i,t+1}
= \widehat{\mu}_{i,t}
+ \sqrt{\frac{\variance}{t}}\frac{Z_{i,t+1}}{\sqrt{x_{i,t}}}.
\]
Heuristically, once the empirical means $\widehat{\mu}_{i,t}$ are close to the common mean $\mu$, the decision rule is mainly driven by the scaled sampling-noise terms $Z_{i,t+1}/\sqrt{x_{i,t}}$. This motivates the \emph{pure-noise winner map} $g$ defined on the simplex by
\[ \label{eq:sketch:winner-map}
g_i(x)
:=
\mathbb P\left(
\frac{Z_i}{\sqrt{x_i}}
=
\max_{1\le j\le r}\frac{Z_j}{\sqrt{x_j}}
\right),
\qquad Z_1,\ldots,Z_r\stackrel{\mathrm{i.i.d.}}{\sim}\mathcal N(0,1).
\]
Here the probability is over the fresh Gaussian sampling noises conditional on the current empirical proportion vector. A careful analysis of $g$ yields a negative-feedback monotonicity property (Lemma~\ref{lem:mono}): for any $r\ge3$ and $i\ne j$, if $x_i>x_j$ then $g_i(x)<g_j(x)$. Together with the rearrangement inequality, this yields the key inequality (Lemma~\ref{lem:dotprod})
\begin{equation}\label{eq:sketch:dotprod}
\sum_{i=1}^r x_i g_i(x)\le \frac1r \qquad \forall r\ge2,
\end{equation}
with equality if and only if \(x=(1/r,\ldots,1/r)\) for \(r\ge3\). When \(r=2\), the inequality is an identity. This rigorously characterizes the negative-feedback mechanism of $g$ for the Lyapunov analysis.

\smallskip
\noindent
\textbf{(1.2) Robustness to estimation errors at the expectation level.}
The pure-noise winner map \(g\) in~\eqref{eq:sketch:winner-map} describes the idealized allocation rule obtained by ignoring empirical-mean errors. In the actual TS rule, the conditional selection probabilities $p_{i,t}:=\mathbb P(A_{t+1}=i\mid\mathcal F_t)$ also depend on the estimation errors $\widehat\mu_{i,t}-\mu$. Thus the discrepancy between \(p_{i,t}\) and \(g_i(x_t)\) is the perturbation caused by empirical-mean errors.

To quantify this perturbation, let \(W_{i,t}:=\sqrt{N_{i,t}}(\widehat\mu_{i,t}-\mu)\) denote the Wald-scale empirical error. Concretely, Lemma~\ref{lem:weighted-perturb} gives the weighted perturbation bound
\begin{equation}\label{eq:sketch:weighted-perturb}
\sum_i x_{i,t}|p_{i,t}-g_i(x_t)|
\le
\frac{C_r}{\sqrt{\variance}}
\sum_i|W_{i,t}|.
\end{equation}
This follows because empirical-mean errors can change the winner only near pairwise Gaussian comparison boundaries, whose probabilities are controlled by anti-concentration. Here and below, \(C_r\) may change from line to line and depends only on \(r\).
Furthermore, Lemma~\ref{lem:eq-exp-tight} establishes that, for some \(\kappa>0\),
\begin{equation}\label{eq:sketch:moment-control}
\sup_i\sup_t
\mathbb E\left[\exp\{\kappa W_{i,t}^2\}\right]\le C_r,
\qquad
\sup_i\sup_t\mathbb E|W_{i,t}|\le C_r.
\end{equation}
The exponential-moment bound is the substantive part of Lemma~\ref{lem:eq-exp-tight}. The first-moment bound follows from \(|x|\le C_\kappa\exp\{\kappa x^2\}\). Together with~\eqref{eq:sketch:weighted-perturb}, it makes the expected perturbation contribution \(O(\variance^{-1/2})\).

The proof of Lemma~\ref{lem:eq-exp-tight} contains a separate technical novelty:
it separates control of the empirical error from control of small empirical
proportions. The two estimates are
\[
\sup_i\sup_t
\mathbb E\left[x_{i,t}\exp\{\kappa_0 W_{i,t}^2\}\right]\le C_r,
\qquad
\sup_i\sup_t\mathbb E[x_{i,t}^{-\alpha}]\le C_r.
\]
Here \(0<\alpha\le1\) is a tuning exponent. The first estimate controls the
exponential moment of \(W_{i,t}\) after weighting by the sampling proportion
\(x_{i,t}\). When arm \(i\) is not sampled, \(W_{i,t}\) stays unchanged, and the
factor \(x_{i,t}\) downweights its contribution. When arm \(i\) is sampled, the
new observation gives a sub-Gaussian one-step bound for the updated standardized
error. The second estimate shows that \(x_{i,t}\) has enough inverse-moment
control. The intuition is that a smaller \(x_{i,t}\) gives arm \(i\) a larger
index variance, which increases its chance of being selected again. H\"older's
inequality combines the two estimates and removes the factor \(x_{i,t}\),
yielding the uniform exponential-moment bound in~\eqref{eq:sketch:moment-control}.

\smallskip
\noindent
\textbf{(1.3) Lyapunov drift on the equal-means bandit.}
Define Lyapunov function \(
V_t := \sum_{i=1}^r (x_{i,t}-1/r)^2,
\)
which measures deviation from uniformity.
Recalling $x_{i,t+1}
    =x_{i,t}+\frac{1}{t+1}(\mathbf{1}\{A_{t+1}=i\}-x_{i,t})$, a direct calculation yields
\begin{equation}\label{eq:sketch:lyap-one-step}
\begin{aligned}
\mathbb E[V_{t+1}\mid\mathcal F_t]
&\le
V_t+\frac{2}{t+1}\left(\sum_i x_{i,t}p_{i,t}-\sum_i x_{i,t}^2\right)
+\frac{2}{(t+1)^2}
\\
&=
V_t+\frac{2}{t+1}\Bigg[
\left(\sum_i x_{i,t}g_i(x_t)-\sum_i x_{i,t}^2\right)
+
\sum_i x_{i,t}\bigl(p_{i,t}-g_i(x_t)\bigr)
\Bigg]
+\frac{2}{(t+1)^2}
\\
&\le
\left(1-\frac{2}{t+1}\right)V_t
+\frac{2}{t+1}\sum_i x_{i,t}\bigl|p_{i,t}-g_i(x_t)\bigr|
+\frac{2}{(t+1)^2}.
\end{aligned}
\end{equation}
The equality in~\eqref{eq:sketch:lyap-one-step} inserts and subtracts the pure-noise probabilities \(g_i(x_t)\). The final inequality uses~\eqref{eq:sketch:dotprod} to bound the pure-noise term and leaves only the perturbation between \(p_{i,t}\) and \(g_i(x_t)\).
Taking expectations in~\eqref{eq:sketch:lyap-one-step} and applying~\eqref{eq:sketch:weighted-perturb} and~\eqref{eq:sketch:moment-control} gives
\begin{equation}\label{eq:sketch:eqmeans-recursion}
\mathbb E[V_{t+1}]
\le
\left(1-\frac{2}{t+1}\right)\mathbb E[V_t]
+\frac{C_r}{(t+1)\sqrt{\variance}}
+\frac{2}{(t+1)^{2}}.
\end{equation}
Iterating~\eqref{eq:sketch:eqmeans-recursion} gives \(V_T\to0\) in probability under the growth condition \(\variance\to\infty\). This proves asymptotically uniform allocation in the simplified $r$-armed equal-means bandit under $\variance\to\infty$.

\smallskip
\noindent
\textbf{Step 2. General $K$-armed bandits: control the optimal-arm proportions.}
We now consider a general \(K\)-armed bandit with multiple optimal arms and suboptimal arms.
Lemma~\ref{lem:lem3} gives the suboptimal-pull estimate
\begin{equation}\label{eq:sketch:mean:1}
\mathbb E[N_{\mathrm{sub}}(T)]=O\bigl(\variance\log(T/\variance)\bigr),
\qquad
N_{\mathrm{sub}}(T)/T\xrightarrow{\mathbb{P}}0,
\qquad
N_{\mathrm{opt}}(T)/T\xrightarrow{\mathbb{P}}1.
\end{equation}
The optimal-arm dynamics are not a closed equal-means process, so the proof does not simply embed into the subsequence of optimal-arm pulls. Instead, it tracks the within-optimal proportions \(y_{i,t}:={N_{i,t}}/{N_{\mathrm{opt}}(t)}\) and the deviation \(V_t^\star:=\sum_{i\in\mathcal S^\star}(y_{i,t}-1/|\mathcal S^\star|)^2\). The one-step update for \(V_t^\star\) has denominator \(N_{\mathrm{opt}}(t)+1\), not \(t+1\). To keep this denominator comparable to \(t\), define
\[
\zeta:=\inf\{t\ge K:N_{\mathrm{sub}}(t)>\variance\log^2(T/\variance)\}.
\]
Since \(N_{\mathrm{sub}}(t)\) is nondecreasing, \eqref{eq:sketch:mean:1} and Markov's inequality give \(\mathbb P(\zeta\le T)=o(1)\). On \(\{\zeta>t\}\), after a burn-in of order \(\variance\log^2(T/\variance)\), we have \(N_{\mathrm{opt}}(t)\ge t/2\). Thus the stopped Lyapunov recursion has the same \(1/t\) scale as the equal-means recursion~\eqref{eq:sketch:eqmeans-recursion}. Taking expectations and using the optimal-arm analogues of~\eqref{eq:sketch:weighted-perturb} and~\eqref{eq:sketch:moment-control} gives
\begin{equation}\label{eq:sketch:opt-local-recursion}
\mathbb E[V_{t+1}^\star\mathbf 1\{\zeta>t+1\}]
\le
\left(1-\frac{2}{t+1}\right)
\mathbb E[V_t^\star\mathbf 1\{\zeta>t\}]
+
\frac{C}{t\sqrt{\variance}}
+
\frac{C}{t}\mathbb E\left[
\mathbb P(A_{t+1}\notin\mathcal S^\star\mid\mathcal F_t)
\right]
+
\frac{C}{t^2}.
\end{equation}
The contraction in~\eqref{eq:sketch:opt-local-recursion} comes from the winner-map drift restricted to \(\mathcal S^\star\). The empirical-mean perturbation is handled by the same perturbation and moment bounds and contributes \(O(\variance^{-1/2})\) after summation. The only additional error comes from rounds in which the full comparison selects a suboptimal arm. After multiplying~\eqref{eq:sketch:opt-local-recursion} by the \(t\)-weight that telescopes the contraction and summing up to \(T\), this contribution is at most \(C\mathbb E[N_{\mathrm{sub}}(T)]/T=o(1)\) by~\eqref{eq:sketch:mean:1}. The \(C/t^2\) term contributes only \(O(\log T/T)\). The burn-in and stopping errors are \(O(\variance\log^2(T/\variance)/T)\) and \(\mathbb P(\zeta\le T)=o(1)\), respectively. Under~\eqref{eq:gamma-growth}, all these terms vanish, so \(V_T^\star\to0\) in probability and Theorem~\ref{thm:main}(a) follows.

\begin{remark}
The two requirements in~\eqref{eq:gamma-growth} enter at different points of the proof.
The lower requirement \(\variance\to\infty\) controls the empirical-mean perturbation in the expected Lyapunov recursion. Rather than bounding this perturbation uniformly over sample paths, the proof combines Lemmas~\ref{lem:eq-exp-tight} and~\ref{lem:weighted-perturb} to make its expected contribution \(O(\variance^{-1/2})\), as reflected in~\eqref{eq:sketch:eqmeans-recursion}. The pathwise approach of~\citet{halder2025stable} instead controls the Wald-scale empirical errors uniformly at the law-of-the-iterated-logarithm scale \(\sqrt{\log\log T}\). It therefore needs the inflated index standard deviation \(\sqrt{\variance}\) to dominate this scale, leading to the stronger lower condition \(\variance/\log\log T\to\infty\). The upper requirement \(\variance(\log T)^2/T\to0\) is used in the stopped recursion~\eqref{eq:sketch:opt-local-recursion}. It makes the localization scale \(\variance\log^2(T/\variance)\), and the expected suboptimal exploration scale \(\variance\log(T/\variance)\), negligible relative to \(T\). On the localized event, \(N_{\mathrm{opt}}(t)\) is then comparable to \(t\), which gives the within-optimal-arm Lyapunov recursion its \(1/t\) scale.
\end{remark}

\smallskip
\noindent
\textbf{Step 3. Sharp suboptimal asymptotics via rare-event control and geometric coupling}.
To obtain sharp suboptimal pull-count asymptotics, fix a suboptimal arm $a$ with gap $\Delta_a>0$. After arm $a$ has been pulled $k$ times, pulling it again requires a rare Gaussian overshoot: $\theta_{a,t+1}$ must exceed the maximum Thompson sample among optimal arms. On good events, after burn-in and in the relevant count window, Lemmas~\ref{lem:theta-star-lower-tail} and~\ref{lem:one-step-bounds} give the conditional one-step bounds
\[
p^-_{k,T}(\varepsilon)
\le
\mathbb P(A_{t+1}=a\mid\mathcal F_t)
\le
p^+_{k,T}(\varepsilon),
\quad
\text{where}\quad
\left\{
\begin{aligned}
p^+_{k,T}(\varepsilon)
&=C\sqrt{\frac{\variance}{k}}
\exp\left\{-\alpha_+(\varepsilon)\frac{k}{\variance}\right\},
\\
p^-_{k,T}(\varepsilon)
&=c\exp\{-r_T\}
\sqrt{\frac{\variance}{k}}
\exp\left\{-\alpha_-(\varepsilon)\frac{k}{\variance}\right\}.
\end{aligned}
\right.
\]
with $\alpha_+(\varepsilon)=(\Delta_a-3\varepsilon)^2/2$, $\alpha_-(\varepsilon)=(\Delta_a+3\varepsilon)^2/2$, and \(r_T=o(\log(T/\variance))\).
The exponents \((\Delta_a\pm3\varepsilon)^2/2\) are the Gaussian overshoot rates after allowing slack for empirical-mean errors. The factor \(\exp\{-r_T\}\) appears only in the lower bound and is sub-leading because \(r_T=o(\log(T/\variance))\).
Then Lemma~\ref{lem:geom-coupling} converts these adaptive Bernoulli bounds into a truncated geometric coupling for the waiting times.
The tail log-law for variance-scale geometric sums (Lemma~\ref{lem:geom-tail-sum-loglaw}) then gives lower and upper count brackets around the scale \(\variance\log(T/\variance)\). Letting the slack parameters tend to zero yields
\[
N_{a,T}
\sim
\frac{2\variance}{\Delta_a^2}
\log\left(\frac{T}{\variance}\right),
\]
which proves Theorem~\ref{thm:main}(b).

\subsection{Proof Overview of Theorem~\ref{thm:main:2}}

The proof of Theorem~\ref{thm:main:2} follows a different route. The mean bonus gives under-sampled arms a deterministic advantage through \(B_{a,t}=\sqrt{2\bonus\log T/N_{a,t}}\), allowing pathwise index comparisons on a high-probability event rather than the averaged Lyapunov drift used for variance inflation.

Let \(\mathcal G_T\) denote the high-probability event from Lemma~\ref{lem:mean-bonus-good-event}, on which empirical means and all Gaussian index draws are controlled uniformly over time. Recall that $N_{a,T}^\star=T/m$ for any optimal arm $a\in\mathcal S^\star$ and $N_{a,T}^\star=(2\bonus\log T)/\Delta_a^2$ for any suboptimal arm $a\notin\mathcal S^\star$. Here $\bonus \to \infty$ and $\bonus \log T = o(T)$. Set
\[
\varepsilon_T:=\bonus^{-1/4}+\left(\frac{\bonus\log T}{T}\right)^{1/4}.
\]
For each suboptimal arm $a\notin\mathcal S^\star$, define the deterministic bracketing corridor
\[
N^-_{a,T}:=\left\lfloor(1-\varepsilon_T)N^\star_{a,T}\right\rfloor,
\qquad
N^+_{a,T}:=\left\lceil(1+\varepsilon_T)N^\star_{a,T}\right\rceil.
\]
The goal is to prove \(N^-_{a,T}\le N_{a,T}\le N^+_{a,T}\) on \(\mathcal G_T\). Since \(\mathbb P(\mathcal G_T)\to1\) and \(\varepsilon_T\to0\), this yields the desired convergence.
For Algorithm~\ref{alg:ts} with Option~\ref{option:C}, the decision rule can be written as
\begin{equation}\label{equ:mean:rule:0}
A_{t+1}\in\argmax_{a\in[K]} \Upsilon_{a,t+1},
\qquad
\Upsilon_{a,t+1}:=\widehat\mu_{a,t}+B_{a,t}+\frac{Z_{a,t+1}}{\sqrt{N_{a,t}}},
\qquad
B_{a,t}:=\sqrt{\frac{2\bonus\log T}{N_{a,t}}},
\end{equation}
where $Z_{a,t+1}\sim\mathcal{N}(0,1)$ are conditionally independent given $\mathcal F_t$.

\paragraph{Step 1: Suboptimal upper bounds.}
We prove $N_{a,T}\le N^+_{a,T}$ for every suboptimal arm $a\notin\mathcal S^\star$ (Lemma~\ref{lem:subopt-negligible}). The argument is a stopping-time comparison: once $N_{a,t}$ reaches $N^+_{a,T}$, the bonus term $B_{a,t}$ in~\eqref{equ:mean:rule:0} becomes small, and on \(\mathcal G_T\) the resulting indices for arm $a$ are dominated by those of optimal arms thereafter, preventing further pulls. Summing over suboptimal arms gives the pathwise bound
\begin{equation}\label{eq:sketch:mean-bonus-subopt}
N_{\mathrm{sub}}(T)
\le
\sum_{b\notin\mathcal S^\star}N^+_{b,T}
=O(\bonus\log T)=o(T)
\quad\text{on }\mathcal G_T.
\end{equation}
Since \(\mathbb P(\mathcal G_T)\to1\), this also gives \(N_{\mathrm{opt}}(T)/T\xrightarrow{\mathbb P}1\).

\paragraph{Step 2: Balancing among optimal arms.}
We establish a pathwise comparison lemma for optimal arms (Lemma~\ref{lem:opt-order}): on \(\mathcal G_T\), if \(N_{i,t}\ge (1+\varepsilon_T)N_{j,t}\) for \(i,j\in\mathcal S^\star\), then \(\Upsilon_{i,t+1}<\Upsilon_{j,t+1}\). Writing \(i_{\max}\) and \(i_{\min}\) for optimal arms with maximal and minimal terminal counts, applying this comparison to the last pull of \(i_{\max}\) gives
\begin{equation}\label{eq:sketch:mean-bonus-opt-balance}
N_{i_{\max},T}
\le
(1+\varepsilon_T)N_{i_{\min},T}+1.
\end{equation}
Combining~\eqref{eq:sketch:mean-bonus-subopt} and~\eqref{eq:sketch:mean-bonus-opt-balance} yields asymptotically uniform allocation over \(\mathcal S^\star\):
\begin{equation}\label{eq:sketch:mean:2}
N_{a,T}/(T/m)\xrightarrow{\mathbb P}1,
\qquad \forall a\in\mathcal S^\star.
\end{equation}

\paragraph{Step 3: Suboptimal lower bounds.}
Finally, we prove \(N_{a,T}\ge N^-_{a,T}\) for every suboptimal arm \(a\notin\mathcal S^\star\) via a horizon-end contradiction (Lemma~\ref{lem:subopt-lower}). Suppose instead that \(N_{a,T}\le N^-_{a,T}\). Then the bonus for arm \(a\) remains large throughout the horizon. Meanwhile, \eqref{eq:sketch:mean-bonus-subopt} and~\eqref{eq:sketch:mean-bonus-opt-balance} imply, for all large \(T\), that every optimal arm has terminal count at least \(T/(4m)\), and hence \(N_{i,t}\ge T/(8m)\) for every \(i\in\mathcal S^\star\) in the terminal block. The resulting index comparison gives
\[
\Upsilon_{a,t+1}
>
\max_{i\in\mathcal S^\star}\Upsilon_{i,t+1},
\qquad
t=T-\lfloor T/(8m)\rfloor,\ldots,T-1.
\]
Therefore the algorithm pulls only suboptimal arms in that block, which implies \(N_{\mathrm{sub}}(T)\ge \lfloor T/(8m)\rfloor\). This contradicts the \(o(T)\) bound in~\eqref{eq:sketch:mean-bonus-subopt}. Together with the upper corridor, we obtain \(N_{a,T}\in[N^-_{a,T},N^+_{a,T}]\) for all \(a\notin\mathcal S^\star\). Since \(\varepsilon_T\to0\), it follows that \(N_{a,T}/N^\star_{a,T}\xrightarrow{\mathbb P}1\) for each \(a\notin\mathcal S^\star\).

    \section{Conclusion}
We study statistical inference for Thompson sampling (TS) under adaptive data collection in stochastic multi-armed bandits. Our main message is that \emph{optimism stabilizes TS}: injecting optimism either by inflating the variance of the Gaussian indices or through an explicit \emph{mean bonus} yields \emph{stability}, in the sense that each arm's pull count concentrates around a deterministic scale. Concretely, the algorithm allocates asymptotically uniformly over the optimal set, while each suboptimal arm is pulled on a sharp, gap-dependent logarithmic scale. As a consequence, classical Wald-type inference based on studentized sample means is asymptotically valid under these optimistic TS procedures. Importantly, this inferential stability comes at only a mild additional regret cost, quantifying a small \emph{price of stability}. Several directions remain open, including extending our results beyond multi-armed bandits to more general settings like linear bandits and Markov decision processes.

\bibliographystyle{ims}
\bibliography{ref}

\newpage

\appendix

\section{Proofs for Variance-Inflated Thompson Sampling} \label{appendix:thm:main}
This section focuses on the analysis of variance-inflated Thompson sampling. Section~\ref{sec:regret-variance} proves the regret bound in Theorem~\ref{thm:regret-main}, while Sections~\ref{appendix:thm:main:a} and~\ref{appendix:thm:main:b} prove the two stability statements in Theorem~\ref{thm:main}. Recall that the index-variance multiplier may depend on the horizon and is assumed to satisfy
\[
    \variance\to\infty,
    \qquad
    \frac{\variance(\log T)^2}{T}\to0.
\]

\subsection{Proof of the Variance-Inflation Part of Theorem~\ref{thm:regret-main}} %
\label{sec:regret-variance}

We first control the expected number of pulls of a fixed suboptimal arm.

\begin{lemma}[Suboptimal pulls under variance inflation]
	\label{lem:lem3}
	Assume that \(\variance\ge 4\). There exists a universal constant
	\(C>0\) such that, for every fixed suboptimal arm
	\(a\notin\mathcal S^\star\), for all sufficiently large \(T\),
	\begin{equation}
		\label{eq:variance-arm-pull-time-uniform}
		\mathbb{E}[N_{a,u}]
		\le
		1+
		C\frac{\variance}{\Delta_a^2}
		\left\{
		1+\log\left(1+\frac{u\Delta_a^2}{\variance}\right)
		\right\}
	\end{equation}
	holds uniformly over every deterministic \(u\in\{1,\ldots,T\}\).
\end{lemma}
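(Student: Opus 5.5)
We follow the standard Agrawal--Goyal decomposition for Gaussian Thompson sampling, adapted to the inflated variance \(\variance\). Fix a suboptimal arm \(a\) and an optimal arm \(1\in\mathcal S^\star\), and choose thresholds \(x_a:=\mu_a+\Delta_a/3\) and \(y_a:=\mu^\star-\Delta_a/3\). Let \(E^\mu_{a,t}:=\{\widehat\mu_{a,t}\le x_a\}\) and \(E^\theta_{a,t}:=\{\theta_{a,t+1}\le y_a\}\). We split
\[
\mathbb E[N_{a,u}]\le 1+\sum_{t<u}\mathbb P\bigl(A_{t+1}=a,\,(E^\mu_{a,t})^c\bigr)
+\sum_{t<u}\mathbb P\bigl(A_{t+1}=a,\,E^\mu_{a,t},\,(E^\theta_{a,t})^c\bigr)
+\sum_{t<u}\mathbb P\bigl(A_{t+1}=a,\,E^\mu_{a,t},\,E^\theta_{a,t}\bigr),
\]
and bound each of the three sums by \(C(\variance/\Delta_a^2)\{1+\log(1+u\Delta_a^2/\variance)\}\).

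\textbf{Term 1 (empirical mean of arm \(a\) too high).} We index by the pull count \(k=N_{a,t}\). Each value of \(k\) contributes at most one pull, so the sum is at most \(\sum_{k\ge1}\mathbb P(\widehat\mu_{a,(k)}>x_a)\le\sum_k e^{-k\Delta_a^2/18}\le C/\Delta_a^2\), using 1-sub-Gaussianity of the \(k\)-sample mean. This is dominated by \(\variance/\Delta_a^2\) because \(\variance\ge4\).

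\textbf{Term 2 (index overshoot).} Conditional on \(E^\mu_{a,t}\), the event \((E^\theta_{a,t})^c\) has probability at most \(\exp\{-N_{a,t}\Delta_a^2/(18\variance)\}\). Again indexing by \(k\), the \(k\)-th pull can be charged only through rounds in which \(N_{a,t}=k\). We therefore use the standard argument that the expected number of rounds with \(N_{a,t}=k\) and the overshoot event, before the next pull of \(a\), is bounded by a sum over rounds that is truncated at \(u\). Concretely, we bound \(\sum_{t<u}\mathbb P(\theta_{a,t+1}>y_a,\,E^\mu_{a,t},\,N_{a,t}=k)\) trivially by \(\min\{u,\,\cdot\}\)-type estimates. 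The cleaner route is the following. Let \(k_0:=\lceil 18\variance\log(1+u\Delta_a^2/\variance)/\Delta_a^2\rceil\). Pulls with \(k\le k_0\) contribute at most \(k_0\). For \(k>k_0\), the overshoot probability per round is at most \(\variance/(u\Delta_a^2)\cdot e^{-(k-k_0)\Delta_a^2/(18\variance)}\), so over at most \(u\) rounds the total is at most \(\sum_{j\ge0}(\variance/\Delta_a^2)e^{-j\Delta_a^2/(18\variance)}\le C\variance^2/\Delta_a^4\). This is too large, so we sharpen the step: the total over all \(t<u\) of \(\mathbb P(\theta_{a,t+1}>y_a,\,N_{a,t}>k_0)\) is at most \(u\cdot\variance/(u\Delta_a^2)=\variance/\Delta_a^2\), since the bound is monotone in \(k\). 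This yields the desired logarithmic term. It is the only place where the factor \(\log(1+u\Delta_a^2/\variance)\) appears, which explains the uniformity in \(u\).

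\textbf{Term 3 (optimal arm underestimated) and the expected main obstacle.} By the standard Agrawal--Goyal inequality, \(\mathbb P(A_{t+1}=a,\,E^\mu,\,E^\theta\mid\mathcal F_t)\le \frac{1-q_t}{q_t}\,\mathbb P(A_{t+1}=1\mid\mathcal F_t)\), where \(q_t:=\mathbb P(\theta_{1,t+1}>y_a\mid\mathcal F_t)\). Reindexing by the pull count \(j\) of arm \(1\) gives a bound of \(\sum_{j\ge1}\mathbb E[1/q_{(j)}-1]\). Here \(q_{(j)}=\overline\Phi\bigl((y_a-\widehat\mu_{1,(j)})\sqrt{j/\variance}\bigr)\). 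We split according to whether \(\widehat\mu_{1,(j)}\ge\mu^\star-\Delta_a/6\), in which case \(1/q-1\le \exp\{-cj\Delta_a^2/\variance\}\)-type bounds sum to \(C\variance/\Delta_a^2\), or not. In the latter case we use Gaussian tail bounds, \(1/\overline\Phi(z)\le C(1+z)e^{z^2/2}\), with \(z=(\mu^\star-\widehat\mu_{1,(j)})\sqrt{j/\variance}\). Integrating against the sub-Gaussian tail \(e^{-js^2/2}\) of the deviation \(s\) converges because \(\variance\ge4>1\): the exponent is \(js^2/(2\variance)-js^2/2<0\). This gives a contribution of \(C e^{-cj\Delta_a^2}\) per \(j\), which is summable to \(C/\Delta_a^2\). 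This step, ensuring that the inverse-probability moments are finite using \(\variance>1\), is the main technical obstacle. The remaining computation is routine, and combining the three terms proves \eqref{eq:variance-arm-pull-time-uniform}.
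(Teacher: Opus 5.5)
Your proposal is correct and follows essentially the same route as the paper. It uses the same three-term Agrawal--Goyal decomposition with thresholds $\mu_a+\Delta_a/3$ and $\mu^\star-\Delta_a/3$, and the same count-indexed sub-Gaussian bound for the first term. For the second term it uses the same cutoff at $18\variance\Delta_a^{-2}\log(1+u\Delta_a^2/\variance)$, with the per-round bound $\variance/(\variance+u\Delta_a^2)$ summed over $u$ rounds; your discarded first attempt there should simply be deleted. For the third term it uses the same comparison inequality $\frac{1-q_t}{q_t}\mathbb P(A_{t+1}=1\mid\mathcal F_t)$, regrouped by the pull count of arm $1$.

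The only difference is in the bad-event part of the third term. You integrate $1/\overline\Phi$ directly against the sub-Gaussian tail, which needs only $\variance>1$; the paper instead uses Cauchy--Schwarz with a second-moment bound, which needs $\variance>2$. Both give $Ce^{-cj\Delta_a^2}$ per count and are equally valid.
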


\begin{proof}
	Fix a deterministic \(u\in\{1,\ldots,T\}\).  If \(u<K\), then \(N_{a,u}\le1\), so the desired bound is trivial.  Hence assume \(u\ge K\).
	Consider a suboptimal arm $a\notin\mathcal S^\star$, and fix one optimal arm, relabeled
	as arm $1$, such that
	\(\mu_1=\mu^\star\) and \(\Delta_a=\mu^\star-\mu_a>0\).
	Recall that the empirical mean of arm \(a\) at time \(t\) is
	$
	\widehat\mu_{a,t}
	=
	\mu_a+(1/N_{a,t})\sum_{s\le t:\,A_s=a}\xi_{a,s}.
	$
	Because each arm is pulled once during initialization, \(N_{a,u}=1+\sum_{t=K}^{u-1}\mathbf 1\{A_{t+1}=a\}\).
		For \(t\in\{K,\dots,u-1\}\), let \(\mathcal{E}_{\mu,a}^{(t)}:=\{\widehat\mu_{a,t}\le \mu_a+\Delta_a/3\}\) and \(\mathcal{E}_{\theta,a}^{(t+1)}:=\{\theta_{a,t+1}\le \mu^\star-\Delta_a/3\}\).
		We decompose the expectation as
		\begin{equation}\label{eq:variance-pull-decomposition}
			\begin{aligned}
			\mathbb{E}[N_{a,u}]
			\le
			1+
			\underbrace{
			\sum_{t=K}^{u-1}
			\mathbb P \left(
			A_{t+1}=a, \bigl(\mathcal{E}_{\mu,a}^{(t)}\bigr)^c
			\right)
			}_{=:\mathrm{I}_{a,u}}
			+
			\underbrace{
			\sum_{t=K}^{u-1}
			\mathbb P \left(
			A_{t+1}=a, \mathcal{E}_{\mu,a}^{(t)}, \bigl(\mathcal{E}_{\theta,a}^{(t+1)}\bigr)^c
			\right)
			}_{=:\mathrm{II}_{a,u}}
			+
			\underbrace{
			\sum_{t=K}^{u-1}
			\mathbb P \left(
			A_{t+1}=a, \mathcal{E}_{\mu,a}^{(t)}, \mathcal{E}_{\theta,a}^{(t+1)}
			\right)
			}_{=:\mathrm{III}_{a,u}} .
			\end{aligned}
		\end{equation}
		We will bound \(\mathrm{I}_{a,u},\mathrm{II}_{a,u},\mathrm{III}_{a,u}\) separately.

	\medskip
	\noindent
		\textbf{Step 1: bound on \(\mathrm{I}_{a,u}\).}
	We have
		\begin{align} \label{eq:term-I-bound}
			\mathrm{I}_{a,u}
			&\le
			{\sum_{k=1}^{u}}
			\mathbb P \left(
			{\exists\,t\leq u-1:\,
			A_{t+1}=a,\ N_{a,t}=k,\
			\widehat\mu_{a,t}>\mu_a+\frac{\Delta_a}{3}}
			\right) \le
			{\sum_{k=1}^{u}}
			\exp\left\{-\frac{k\Delta_a^2}{18}\right\}
			\le
			\frac{C}{\Delta_a^2}.
		\end{align}
	For each \(k\), such a time can occur at most once on a sample path; the middle inequality is the sub-Gaussian tail bound for the selected-sum representation of \(\widehat\mu_{a,t}\) above.

	\medskip
	\noindent
	\textbf{Step 2: bound on \(\mathrm{II}_{a,u}\).}
	Write, only in this paragraph,
	\[
	L:=
	\left\lfloor
		\frac{18\variance}{\Delta_a^2}
		\log \left(1+\frac{u\Delta_a^2}{\variance}\right)
	\right\rfloor .
	\]
	We split the summation according to whether \(N_{a,t}\le L\).  The
	contribution from the first part is at most \(L\), because, along each
	sample path, whenever arm \(a\) is selected its pre-selection count
	\(N_{a,t}\) increases from one integer value to the next.  Hence the
	values \(1,\ldots,L\) can each contribute at most once.

	For the remaining part, we do not use a selected-once-per-\(k\) summation.
	Instead, for each \(t\),
	\[
	\begin{aligned}
	\mathbb P\Bigl(
	A_{t+1}=a,\ \mathcal{E}_{\mu,a}^{(t)},\bigl(\mathcal{E}_{\theta,a}^{(t+1)}\bigr)^c,\ N_{a,t}>L
	\Bigr)
	& =
	\mathbb E\Bigl[
	\mathbf 1\{\mathcal{E}_{\mu,a}^{(t)},\,N_{a,t}>L\}
	\mathbb P\bigl(
	A_{t+1}=a,\bigl(\mathcal{E}_{\theta,a}^{(t+1)}\bigr)^c
	\mid \mathcal F_t
	\bigr)
	\Bigr]\\
	& \le
	\mathbb E\Bigl[
	\mathbf 1\{\mathcal{E}_{\mu,a}^{(t)},\,N_{a,t}>L\}
	\mathbb P\bigl(
	\bigl(\mathcal{E}_{\theta,a}^{(t+1)}\bigr)^c
	\mid \mathcal F_t
	\bigr)
	\Bigr].
	\end{aligned}
	\]
	On \(\mathcal{E}_{\mu,a}^{(t)}\), conditionally on \(\mathcal F_t\) and with
	\(k=N_{a,t}\),
	\[
	\mathbb P\bigl(
	\bigl(\mathcal{E}_{\theta,a}^{(t+1)}\bigr)^c
	\mid \mathcal F_t
	\bigr)
	=
	\mathbb P\left(
	\theta_{a,t+1}> \mu^\star-\frac{\Delta_a}{3}
	\middle| \mathcal F_t
	\right)
	\le
	\exp\left\{-\frac{k\Delta_a^2}{18\variance} \right\} \le
	\frac{\variance}{\variance+u\Delta_a^2}.
	\]
	where the first inequality uses the Gaussian tail bound, and the second uses the definition of \(L\) and the fact that \(N_{a,t}>L\).
	Summing over \(t=K,\ldots,u-1\) therefore gives
	\[
	\sum_{t=K}^{u-1}
	\mathbb P\Bigl(
	A_{t+1}=a,\ \mathcal{E}_{\mu,a}^{(t)},\bigl(\mathcal{E}_{\theta,a}^{(t+1)}\bigr)^c,\ N_{a,t}>L
	\Bigr)
	\le
	\frac{u\variance}{\variance+u\Delta_a^2}
	\le
	\frac{\variance}{\Delta_a^2}.
	\]
	Combining the two parts, and using \(L\le 18\variance\Delta_a^{-2}
	\log(1+u\Delta_a^2/\variance)\), we obtain
	\begin{equation}
		\label{eq:term-II-bound}
		\mathrm{II}_{a,u}
		\le
		\frac{18\variance}{\Delta_a^2}
		\log \left(1+\frac{u\Delta_a^2}{\variance}\right)
		+
		\frac{\variance}{\Delta_a^2}
		\le
		\frac{C\variance}{\Delta_a^2}
		\left[
		1+
		\log \left(1+\frac{u\Delta_a^2}{\variance}\right)
		\right].
	\end{equation}

	\medskip
	\noindent
	\textbf{Step 3: comparison bound for \(\mathrm{III}_{a,u}\).}
	Define
	\[
	Q_t^{(a)}
	:=
	\mathbb P \left(
	\theta_{1,t+1}>\mu^\star-\frac{\Delta_a}{3}
	\middle|
	\mathcal F_t
	\right).
	\]
	Condition on $\mathcal F_t$ and on
	$\{\theta_{j,t+1}:j\neq 1\}$. Since the Gaussian indices have continuous
	conditional distributions, ties occur with conditional probability zero. On the event
	$
	\{A_{t+1}=a\}\cap \mathcal{E}_{\mu,a}^{(t)}\cap \mathcal{E}_{\theta,a}^{(t+1)},
	$
	we have
	$
	a=\argmax_{j\neq 1}\theta_{j,t+1},
	$ and $
	\max_{j\neq 1}\theta_{j,t+1}
	=
	\theta_{a,t+1}
	\le
	\mu^\star-{\Delta_a}/{3}.
	$
	Conditioning on \(\mathcal F_t\) and \(\{\theta_{j,t+1}:j\neq 1\}\) gives
	\begin{align}
		&\mathbb P \left(
		A_{t+1}=a, \mathcal{E}_{\mu,a}^{(t)}, \mathcal{E}_{\theta,a}^{(t+1)}
		\middle|
		\mathcal F_t,\{\theta_{j,t+1}:j\neq 1\}
		\right)\\
		&\le
		\mathbf 1 \left\{\mathcal{E}_{\mu,a}^{(t)}\right\}
		\mathbf 1 \left\{
		a=\argmax_{j\neq 1}\theta_{j,t+1},\
		\max_{j\neq 1}\theta_{j,t+1}\le \mu^\star-\frac{\Delta_a}{3}
		\right\} \cdot
		\mathbb P \left(
		\theta_{1,t+1}\le \max_{j\neq 1}\theta_{j,t+1}
		\middle|
		\mathcal F_t
		\right)\\
		&\le
		\mathbf 1 \left\{\mathcal{E}_{\mu,a}^{(t)}\right\}
		\mathbf 1 \left\{
		a=\argmax_{j\neq 1}\theta_{j,t+1},\
		\max_{j\neq 1}\theta_{j,t+1}\le \mu^\star-\frac{\Delta_a}{3}
		\right\}
		\left(1-Q_t^{(a)}\right).
		\label{eq:term-III-cond-subopt}
	\end{align}
	On the event
	$
	\{
	\max_{j\neq 1}\theta_{j,t+1}\le \mu^\star-{\Delta_a}/{3}
	\},
	$
	we also have
		\begin{align}
		&\mathbb P \left(
		A_{t+1}=1
		\middle|
		\mathcal F_t,\{\theta_{j,t+1}:j\neq 1\}
		\right)=
		\mathbb P \left(
		\theta_{1,t+1}> \max_{j\neq 1}\theta_{j,t+1}
		\middle|
		\mathcal F_t
		\right)\ge
		\mathbb P \left(
		\theta_{1,t+1}> \mu^\star-\frac{\Delta_a}{3}
		\middle|
		\mathcal F_t
		\right)
		=
		Q_t^{(a)}.
		\label{eq:term-III-cond-opt-lower}
	\end{align}
	Combining~\eqref{eq:term-III-cond-subopt} and~\eqref{eq:term-III-cond-opt-lower} gives
	\begin{align*}
		&\mathbb P \left(
		A_{t+1}=a, \mathcal{E}_{\mu,a}^{(t)}, \mathcal{E}_{\theta,a}^{(t+1)}
		\middle|
		\mathcal F_t,\{\theta_{j,t+1}:j\neq 1\}
		\right)\le
		\frac{1-Q_t^{(a)}}{Q_t^{(a)}}
		\mathbf 1 \left\{\mathcal{E}_{\mu,a}^{(t)}\right\}
		\mathbb P \left(
		A_{t+1}=1
		\middle|
		\mathcal F_t,\{\theta_{j,t+1}:j\neq 1\}
		\right).
	\end{align*}
	Taking conditional expectation with respect to $\mathcal F_t$ and using
	$\mathbf 1\{\mathcal{E}_{\mu,a}^{(t)}\}\le 1$, we have
	\begin{equation}
			\label{eq:key-comparison-III}
		\mathbb P \left(
		A_{t+1}=a, \mathcal{E}_{\mu,a}^{(t)}, \mathcal{E}_{\theta,a}^{(t+1)}
		\middle|
		\mathcal F_t
		\right)
		\le
		\frac{1-Q_t^{(a)}}{Q_t^{(a)}}
		\mathbb P \left(A_{t+1}=1\mid \mathcal F_t\right).
	\end{equation}
	{Write
	\[
	X_{1,t}^{(a)}
	:=
	\sqrt{\frac{N_{1,t}}{\variance}}
	\left(\widehat\mu_{1,t}-\mu^\star+\frac{\Delta_a}{3}\right),
	\qquad
	Q_t^{(a)}=\Phi(X_{1,t}^{(a)}).
	\]}
		Summing~\eqref{eq:key-comparison-III} over \(t=K,\dots,u-1\) and using the tower property, we get
	\[
	\begin{aligned}
	\mathrm{III}_{a,u}
	\le
	\sum_{t=K}^{u-1}
	\mathbb{E}\left[
	\frac{1-Q_t^{(a)}}{Q_t^{(a)}}
	\mathbf 1\{A_{t+1}=1\}
	\right]
	=
	\sum_{t=K}^{u-1}
	\mathbb{E}\left[
	\frac{\Phi(-X_{1,t}^{(a)})}{\Phi(X_{1,t}^{(a)})}
	\mathbf 1\{A_{t+1}=1\}
	\right].
	\end{aligned}
	\]
	We now group this last sum by \(k=N_{1,t}\).  On the event
	\(\{A_{t+1}=1\}\), each value of \(k\) can occur at most once.  For a fixed
	\(k\), with \(t\) in the summation range, the selected-sum concentration for
	\(\widehat\mu_{1,t}\) with \(N_{1,t}=k\) gives, for every \(z\ge0\),
	$$
	\mathbb P \left(
	\exists\,t:\,
	A_{t+1}=1,\ N_{1,t}=k,\
	(-X_{1,t}^{(a)})_+>z
	\right)
	\le
	\exp\left\{
	-\frac{k}{2}
	\left(
	\frac{\Delta_a}{3}
	+\sqrt{\frac{\variance}{k}} z
	\right)^2
	\right\}
	\le
	\exp\left\{-\frac{\variance z^2}{2}\right\}.
	$$
	A standard Mills-ratio bound yields
	a universal constant $C_0>0$ such that
	$$
		\left(
		\frac{\Phi(-X_{1,t}^{(a)})}{\Phi(X_{1,t}^{(a)})}
		\right)^2
		\le
		C_0^2
		\left(1+\bigl((-X_{1,t}^{(a)})_+\bigr)^2\right)
		\exp\left\{\bigl((-X_{1,t}^{(a)})_+\bigr)^2\right\}.
	$$
	Since \(\variance\ge4\), applying the tail-integral formula to the above two results implies
	\begin{equation}
		\label{eq:R2-uniform}
		\sum_{t=K}^{u-1}\mathbb{E}\left[
		\left(
		\frac{\Phi(-X_{1,t}^{(a)})}{\Phi(X_{1,t}^{(a)})}
		\right)^2
		\mathbf 1\{A_{t+1}=1,\ N_{1,t}=k\}
		\right]
		\le C.
	\end{equation}
	On \(N_{1,t}=k\) and
	\(\widehat\mu_{1,t}\ge \mu_1-\Delta_a/12\), we have
	\(X_{1,t}^{(a)}\ge\sqrt{k}\Delta_a/(4\sqrt{\variance})\ge0\).  Hence
	\begin{equation}
		\label{eq:R-good}
	\begin{aligned}
		&
		\sum_{t=K}^{u-1}\mathbb{E}\left[
		\frac{\Phi(-X_{1,t}^{(a)})}{\Phi(X_{1,t}^{(a)})}
		\mathbf 1 \left\{
		A_{t+1}=1,\ N_{1,t}=k,
		\widehat\mu_{1,t}\ge\mu_1-\Delta_a/12
		\right\}
		\right]
		\le
		2\exp\left\{-\frac{k\Delta_a^2}{32\variance}\right\}.
	\end{aligned}
	\end{equation}
    The same selected-sum concentration gives
	\[
	\mathbb P\left(
	\exists\,t:\ A_{t+1}=1,\ N_{1,t}=k,\
	\widehat\mu_{1,t}<\mu_1-\frac{\Delta_a}{12}
	\right)
	\le
	\exp\left\{-\frac{k\Delta_a^2}{288}\right\}.
	\]
	On the complementary event, Cauchy's inequality and~\eqref{eq:R2-uniform} give
	\begin{equation}
		\label{eq:R-bad}
	\begin{aligned}
		&
		\sum_{t=K}^{u-1}\mathbb{E}\left[
		\frac{\Phi(-X_{1,t}^{(a)})}{\Phi(X_{1,t}^{(a)})}
		\mathbf 1 \left\{A_{t+1}=1,\ N_{1,t}=k,
		\widehat\mu_{1,t}<\mu_1-\Delta_a/12
		\right\}
		\right] \le
		C\exp\left\{-\frac{k\Delta_a^2}{576}\right\}.
	\end{aligned}
	\end{equation}
	Combining the above bounds, then summing over \(k\ge1\), gives
	\begin{equation}
			\label{eq:term-III-bound}
	\begin{aligned}
		\mathrm{III}_{a,u}
		&\le
		C\left[{\sum_{k=1}^{u}}
		\exp\left\{-\frac{k\Delta_a^2}{32\variance}\right\}
		+
		{\sum_{k=1}^{u}}
		\exp\left\{-\frac{k\Delta_a^2}{576}\right\} \right] \le
		C \left[ \frac{\variance}{\Delta_a^2}
		+
	  \frac{1}{\Delta_a^2} \right].
	\end{aligned}
	\end{equation}

	\medskip
	\noindent
	\textbf{Step 4: combine the bounds.}
		Combining~\eqref{eq:term-I-bound},~\eqref{eq:term-II-bound}, and~\eqref{eq:term-III-bound}, we obtain
	\[
	\mathbb{E}[N_{a,u}]
	\le
	1
	+
	C \left[
	\frac{\variance}{\Delta_a^2}
	\log \left(1+\frac{u\Delta_a^2}{\variance}\right)
	+
	 \frac{\variance}{\Delta_a^2}
	+
	 \frac{1}{\Delta_a^2}\right].
	\]
	Since \(\variance\ge4\), the last term is absorbed into \(C\variance/\Delta_a^2\) after increasing \(C\).  The constants do not depend on \(u\), so~\eqref{eq:variance-arm-pull-time-uniform} holds uniformly over deterministic \(u\in\{1,\ldots,T\}\).
	This completes the proof.
\end{proof}

\begin{proof}[Proof of the variance-inflation part of Theorem~\ref{thm:regret-main}]
	By the definition of regret and the fact that
	$\mathbb{E}[R_{A_t,t}\mid A_t]=\mu_{A_t}$,
	\[
	\mathcal R(T)
	=
	T\mu^\star-\mathbb{E}\left[\sum_{t=1}^T R_{A_t,t}\right]
	=
	\sum_{a\notin\mathcal S^\star}\Delta_a \mathbb{E}[N_{a,T}].
	\]
	Applying Lemma~\ref{lem:lem3} arm by arm gives \(\Delta_a\mathbb{E}[N_{a,T}]\le \Delta_a+C(\variance/\Delta_a)\{1+\log(1+T\Delta_a^2/\variance)\}\).  Since the instance is fixed and \(T\Delta_a^2/\variance\to\infty\) under \(\variance(\log T)^2/T\to0\), the additive terms are absorbed into the logarithmic term for all sufficiently large \(T\).  Thus
	\[
	\mathcal R(T)
	\le
	C\sum_{a\notin\mathcal S^\star}
	\frac{\variance}{\Delta_a}
	\log \left(1+\frac{T\Delta_a^2}{\variance}\right),
	\]
	which proves the variance-inflation part of Theorem~\ref{thm:regret-main}.
\end{proof}

\subsection{Proof of Theorem~\ref{thm:main}(a)} \label{appendix:thm:main:a}
Define the variance-inflated suboptimal time scale
\begin{equation}\label{eq:LT-def}
   L_T:=\log\bigl(T/\variance\bigr),\quad \text{ satisfying } L_T\to\infty,\quad \frac{\variance L_T^2}{T}=L_T^2\exp\{-L_T\}\to 0.
\end{equation}
For the stability proof, we use two high-probability events.  For some universal constant $C$ determined later, define
\begin{equation}\label{equ:event}
\mathcal{E}_{T}:=  \left\{
\left|\widehat{\mu}_{a, t}-\mu_a\right| \leq \sqrt{\frac{C \log \log T}{N_{a, t}}}, \text{ for all }t\leq T \text{ with } N_{a,t}\ge1 \text{ and all arms } a \in [K]
\right\}.
\end{equation}
{For each fixed arm \(a\),
\[
\widehat\mu_{a,t}-\mu_a
=
\frac{1}{N_{a,t}}\sum_{s\le t:\,A_s=a}\xi_{a,s},
\qquad 1\le N_{a,t}\le T.
\]
Thus \(\mathcal E_T^c\) implies that, for some \(a\in[K]\) and some
\(t\le T\) with \(N_{a,t}\ge1\),
\[
\left|\frac{1}{N_{a,t}}\sum_{s\le t:\,A_s=a}\xi_{a,s}\right|
\ge
\sqrt{\frac{C\log\log T}{N_{a,t}}}.
\]
Applying Lemma~\ref{lem:TUB} arm by arm with the predictable indicators
\(I_s=\mathbf 1\{A_s=a\}\)
and \(\delta=(\log T)^{-2}\), union bounding over
arms, and absorbing the finitely many cases \(N_{a,t}<3\) by enlarging \(C\),
we have, for \(C\) large enough,}
\begin{equation}\label{equ:highprob}
\mathbb{P}\left( \mathcal{E}_{T} \right)
=1 - O\left( \frac{K}{(\log T)^{2}} \right) \to 1.
\end{equation}
Moreover, let \(\Omega_T\) denote the event that the total number of suboptimal pulls is at most \(\variance L_T^2\). Lemma~\ref{lem:lem3} gives
\(\mathbb{E}[N_{\mathrm{sub}}(T)]=O(\variance L_T)\), and hence Markov's inequality implies that the probability of \(\Omega_T\) goes to 1 as \(T\to\infty\). Equivalently, define
\begin{equation}\label{equ:209}
\begin{aligned}
  \Omega_T
  := \Bigl\{ N_{\mathrm{sub}}(T) =\sum_{b\notin \mathcal{S}^\star}N_{b,T}\ \le\ \variance L_T^2\Bigr\} \quad \text{ and thus }\
  \mathbb P(\Omega_T)\to 1.
\end{aligned}
\end{equation}
Before proving Theorem~\ref{thm:main}(a) in full generality, it is instructive to isolate a simplified regime
in which \emph{all arms have identical means}. When all arms have the same mean, any non-uniform allocation can only come from index uncertainty.
This isolates the algorithm's intrinsic self-correcting mechanism (negative feedback) without confounding by mean gaps, which would help us to understand the more general bandit cases.

\subsubsection{Step 1. Simplified case: all arms have identical means.}
We first introduce the basic setup for the simplified case.
\paragraph{Simplified setup (equal-means bandit).}
Consider \(r\ge 2\) arms, with a common mean \(\mu\) and independent mean-zero 1-sub-Gaussian reward noise. Run inflated
Thompson Sampling with parameter \(\variance\) as in Algorithm~\ref{alg:ts}(B), using conditionally independent Gaussian sampling indices. Define the empirical proportions
\[
x_{i,t} := \frac{N_{i,t}}{t},\qquad x_t := (x_{1,t},\dots,x_{r,t}) \in \Delta_r,
\]
where \(\Delta_r:=\{x\in\mathbb{R}_+^r:\sum_i x_i=1\}\) is the simplex.
Conditionally on the history \(\mathcal{F}_t\), the next Gaussian sampling indices satisfy
\[
\theta_{i,t+1} = \widehat{\mu}_{i,t} + \sqrt{\frac{\variance}{N_{i,t}}} Z_{i,t+1},
\]
with independent \(Z_{i,t+1}\sim\mathcal{N}(0,1)\). Since \(\mu\) is common across arms, the only systematic
difference across \(\theta_{i,t+1}\) arises from the index standard deviation
\(\sqrt{\variance/N_{i,t}}\), i.e., from uncertainty.

\paragraph{An ideal pure-noise winner map and negative feedback.}
If we temporarily ignore the small estimation errors \(\widehat{\mu}_{i,t}-\mu\), the next chosen arm is
the maximizer of \(Z_{i,t+1}/\sqrt{x_{i,t}}\). This motivates the following \emph{pure-noise} winner map:
for \(x\in\Delta_r\), let \(Z_1,\dots,Z_r\stackrel{\mathrm{i.i.d.}}{\sim}\mathcal{N}(0,1)\), and define
\begin{equation}\label{eq:def:g:variance}
g_i(x) := \mathbb{P} \left(\frac{Z_i}{\sqrt{x_i}}=\max_{1\le j\le r} \frac{Z_j}{\sqrt{x_j}}\right).
\end{equation}
so that \(g(x)=(g_1(x),g_2(x),...,g_r(x))\in\Delta_r\).
In the monotonicity proof below, we also view the same winning probability as a function of the scale vector \(s=(s_1,\ldots,s_r)\):
\[
\Gamma_i(s):=\mathbb P\left(s_i Z_i\ge \max_{\ell\ne i}s_\ell Z_\ell\right),
\qquad
g_i(x)=\Gamma_i\bigl(x_1^{-1/2},\ldots,x_r^{-1/2}\bigr).
\]
Thus derivatives with respect to \(s_i\) are derivatives of the scale-parametrized map \(\Gamma_i\), not of \(g_i\) as a fixed scalar evaluated at a fixed \(x\).
The following facts characterize some useful monotonicity property of $g(x)$.
\begin{lemma}[Monotonicity]\label{lem:mono}
Assume $r\ge 3$ and $x\in \mathrm{int}(\Delta_r)$. For any distinct $i\neq j$, if $x_i>x_j$ then $g_i(x)<g_j(x)$.
\end{lemma}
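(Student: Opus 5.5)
The plan is to isolate the two arms $i,j$ and condition on the rest of the field. Write $s_\ell=x_\ell^{-1/2}$, so $x_i>x_j$ means $s_i<s_j$. Set $U:=s_iZ_i\sim\mathcal N(0,s_i^2)$ and $V:=s_jZ_j\sim\mathcal N(0,s_j^2)$, and let $M:=\max_{\ell\ne i,j}s_\ell Z_\ell$. Since $r\ge3$, $M$ is a maximum over a nonempty set of independent nondegenerate Gaussians. Hence $M$ is independent of $(U,V)$ and has a strictly positive density on all of $\mathbb R$. This is the only place where $r\ge3$ enters; for $r=2$ one has $g_i\equiv g_j\equiv 1/2$. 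Conditioning on $M=m$ gives
\[
g_i(x)-g_j(x)=\mathbb E\bigl[D(M)\bigr],\qquad
D(m):=\mathbb P(U>V,\,U>m)-\mathbb P(V>U,\,V>m).
\]

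The first step simplifies $D$ using the symmetry $\mathbb P(U>V)=\mathbb P(V>U)=1/2$, which holds because $U-V$ is centered Gaussian. This gives
\[
D(m)=\mathbb P(V>U,\,V\le m)-\mathbb P(U>V,\,U\le m).
\]
In this form $D(m)$ is the probability that $\max(U,V)\le m$ is attained by $V$, minus the probability that it is attained by $U$. Clearly $D(-\infty)=0$, and $D(+\infty)=0$ again by symmetry. Differentiating in $m$, with $f,F$ denoting the density and cdf, yields
\[
D'(m)=f_V(m)F_U(m)-f_U(m)F_V(m).
\]
Therefore the sign of $D'(m)$ equals the sign of $\psi(s_i,m)-\psi(s_j,m)$, where $\psi(s,m):=s\,\Phi(m/s)/\phi(m/s)$ is the Mills-type quantity $F/f$ for $\mathcal N(0,s^2)$.

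The key step is to show that $m\mapsto D'(m)$ changes sign exactly once, from negative to positive. Given this, $D$ starts at $0$, decreases, then increases back to $0$. Hence $D(m)<0$ for all finite $m$, and $g_i-g_j=\mathbb E[D(M)]<0$ because $M$ has full support. The endpoint behaviour already points the right way. As $m\to-\infty$, $\psi(s,m)\sim s^2/|m|$, which is smaller for the smaller scale $s_i$, so $D'<0$. At $m=0$, $\psi(s,0)=s\sqrt{\pi/2}$, and again $D'(0)<0$. As $m\to+\infty$, $\psi(s,m)\sim\sqrt{2\pi}\,s\,e^{m^2/(2s^2)}$, which is larger for the smaller scale, so $D'>0$.

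To prove the single crossing, I would study $\rho(m):=\log\psi(s_i,m)-\log\psi(s_j,m)$. The first attempt is to show that $\partial_m\log\psi(s,m)$ is strictly decreasing in $s$, so that $\rho$ is strictly increasing. Here $\partial_m\log\psi(s,m)=\frac1s\bigl[\lambda(m/s)+m/s\bigr]$, where $\lambda=\phi/\Phi$ is the reverse inverse Mills ratio. If that monotonicity fails for $m<0$, the fallback is a direct proof that $\rho$ has a unique zero. One would note that $\rho<0$ on $(-\infty,0]$ by comparing $\Phi(m/s)/\phi(m/s)$ via log-concavity of $\Phi$, and then show that $\rho$ is increasing on $(0,\infty)$, where $\lambda(z)+z$ is increasing and positive.

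I expect this single-crossing verification to be the main obstacle. Everything else reduces to the symmetry identity and the full-support argument.
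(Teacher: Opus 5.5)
Your proof is correct and takes a genuinely different route from the paper.

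\textbf{The paper's route.} It argues through comparative statics. It shows that the scale-parametrized win probability $\Gamma_i(s)$ is strictly increasing in the arm's own scale $s_i$, by differentiating under the integral and pairing $z$ with $-z$ (this is where $r\ge 3$ is used). It then shows that $\Gamma_i(s)$ is strictly decreasing in each rival scale $s_j$, by conditioning on the maximum of the remaining arms and an odd-function-times-even-weight sign argument. It concludes by exchangeability: $g_j(x)=g_i(x^{(ij)})$, and swapping the two coordinates raises $s_i$ and lowers $s_j$.

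\textbf{Your route.} You condition on $M$ and prove the pointwise statement $D(m)<0$ for every finite $m$. Your fallback already closes the single-crossing step.
\begin{itemize}
\item For $m\le 0$, one has $m/s_i\le m/s_j$ and $\Phi/\phi$ is increasing, so $\rho(m)\le\log(s_i/s_j)<0$.
\item For $m>0$, write $h(z)=\lambda(z)+z$, which is positive there. Then $\rho'(m)=s_i^{-1}h(m/s_i)-s_j^{-1}h(m/s_j)>0$. This holds because $h'(z)=1-z\lambda(z)-\lambda(z)^2$ is the variance of a standard normal conditioned on $\{X<z\}$, hence positive.
\end{itemize}
So you do not need the global monotonicity from your first attempt. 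For $m<0$ that would require an extra monotonicity property of $|z|h(z)$. You also do not need full support of $M$: $D<0$ pointwise together with $M$ a.s.\ finite already gives $\mathbb E[D(M)]<0$.

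\textbf{What each route buys.} The paper's argument gives coordinatewise comparative statics of the winner map from two elementary sign computations. However, its own-scale step relies on the rivals being centered Gaussians. Yours costs a more delicate Mills-ratio analysis. In return, because $D(m)<0$ for every threshold, it shows that the lower-variance arm loses to the higher-variance one against \emph{any} independent, a.s.\ finite competitor field $M$, not only a maximum of centered Gaussians.
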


\begin{proof}[Proof of Lemma~\ref{lem:mono}]
With the scale vector \(s=(x_1^{-1/2},\ldots,x_r^{-1/2})\), \(Z_k/\sqrt{x_k}=x_k^{-1/2}Z_k\) for each \(k\), with i.i.d.\ \(Z_k\sim\mathcal{N}(0,1)\).
Fix $i\neq j$ and let $x^{(ij)}$ be the vector obtained from $x$ by swapping coordinates $i$ and $j$. By exchangeability of \((Z_1,\dots,Z_r)\), \(g_j(x)=g_i(x^{(ij)})\).
Hence it suffices to show that if $s_i<s_j$, then
$g_i(x^{(ij)})>g_i(x)$.
Now, we establish two strict monotonicity properties.

\paragraph{Property 1. \(\Gamma_i\) strictly increases in \(s_i\).}
Fix an index $i$ and hold $(s_\ell)_{\ell\neq i}$ fixed.
Conditioning on $Z_i=z$ yields
\[
\Gamma_i(s)
=\mathbb P\Bigl(s_i Z_i \ge \max_{\ell\neq i}s_\ell Z_\ell\Bigr)
=\int_{\mathbb R}\phi(z)\prod_{\ell\neq i}\Phi \Bigl(\frac{s_i}{s_\ell}z\Bigr) dz.
\]
	 Since \(x\in\mathrm{int}(\Delta_r)\) we have \(s_\ell\in(0,\infty)\) for all \(\ell\); moreover the integrand is smooth in \(s_i\) and its derivative is dominated by an integrable envelope, so differentiating under the integral sign is justified by dominated convergence, giving
	\[
	\frac{\partial}{\partial s_i}\Gamma_i(s)
	=\int_{\mathbb R}\phi(z)\sum_{\ell\neq i}\Bigl(\frac{z}{s_\ell}\Bigr)\phi\left(\frac{s_i}{s_\ell}z\right)
\prod_{q\neq i,\ell}\Phi\left(\frac{s_i}{s_q}z\right) dz.
\]
We now pair $z$ and $-z$. Using $\overline{\Phi}(-u)=\Phi(u)$ and $\phi(-z)=\phi(z)$, for each $\ell\neq i$ the contribution
of $z$ and $-z$ equals, for $z>0$,
\[
\phi(z)\Bigl(\frac{z}{s_\ell}\Bigr)\phi\left(\frac{s_i}{s_\ell}z\right)
\Biggl[
\prod_{q\neq i,\ell}\Phi\left(\frac{s_i}{s_q}z\right)\;-\;\prod_{q\neq i,\ell}\overline{\Phi}\left(\frac{s_i}{s_q}z\right)
\Biggr].
\]
For $z>0$ and any $u>0$, $\Phi(u)>1/2$ and $\overline{\Phi}(u)<1/2$. Since $r\ge 3$, the product
$\prod_{q\neq i,\ell}$ contains at least one factor, hence the difference above is strictly positive. All remaining terms are strictly positive for $z>0$.
Therefore the integrand is strictly positive on a set of positive measure, implying \(\partial_{s_i}\Gamma_i(s)>0\).
That is, holding competitors fixed, \(\Gamma_i\) is strictly increasing in its own scale \(s_i\).

\paragraph{Property 2. \(\Gamma_i\) strictly decreases in \(s_j\) for \(j\neq i\).}
Fix $j\neq i$ and hold $s_i$ and $(s_\ell)_{\ell\neq i,j}$ fixed.
We have
\[
\Gamma_i(s)
=
\mathbb{E}\left[\overline{\Phi}\left(\frac{\max\{s_jZ_j,\max_{\ell\neq i,j}s_\ell Z_\ell\}}{s_i}\right)\right].
\]
For each fixed \(m\in\mathbb R\), the map
\[
u\mapsto \int_{\mathbb R}\overline{\Phi}\left(\frac{\max\{uv,m\}}{s_i}\right)\phi(v) dv
\]
is strictly decreasing on \((0,\infty)\).
   Indeed, differentiating under the integral sign, one obtains
	\[
	-\frac{1}{s_i}\int_{v>m/u} v \phi \left(\frac{uv}{s_i}\right)\phi(v) dv,
	\]
whose integrand is an odd function multiplied by a strictly positive even weight, so the integral is strictly
positive for every threshold $m/u\in\mathbb R$, and therefore the derivative is strictly negative for all \(u>0\).
Taking expectation over the remaining Gaussian variables yields that \(s_j\mapsto \Gamma_i(s)\) is strictly decreasing.

Assume $s_i<s_j$. Passing from $x$ to $x^{(ij)}$ increases $s_i$ and decreases $s_j$ while
leaving all other scales fixed. By the above two steps, this strictly increases \(\Gamma_i\), i.e.,
$g_i(x^{(ij)})>g_i(x)$. This completes the proof of Lemma~\ref{lem:mono}.
\end{proof}

\begin{lemma}[Dot-product inequality]\label{lem:dotprod}
Fix $r\ge 2$ and $x\in\mathrm{int}(\Delta_r)$. Then \(\sum_{i=1}^r x_i g_i(x)\le 1/r\). If $r=2$, equality holds for every $x\in\Delta_2$. If \(r\ge3\), equality holds in this inequality if and only if \(x=(1/r,\dots,1/r)\).
\end{lemma}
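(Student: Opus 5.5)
The plan is to reduce the inequality to a rearrangement statement, using the monotonicity from Lemma~\ref{lem:mono}. The case \(r=2\) is handled first and directly. For \(x=(x_1,x_2)\), the probability that \(Z_1/\sqrt{x_1}\ge Z_2/\sqrt{x_2}\) equals \(\mathbb P(Z_1\sqrt{x_2}-Z_2\sqrt{x_1}\ge0)=1/2\), because the difference is a centered Gaussian. Hence \(g(x)=(1/2,1/2)\), and \(\sum_i x_ig_i(x)=\tfrac12(x_1+x_2)=1/2\) for every \(x\in\Delta_2\). This is the claimed identity; on the boundary, the degenerate scale still yields winning probability \(1/2\) each by symmetry of \(Z_i\).

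For \(r\ge3\) and \(x\in\mathrm{int}(\Delta_r)\), I will use the fact that \(g(x)\in\Delta_r\) (ties have probability zero), so \(\sum_i g_i(x)=1\). Write
\[
\sum_i x_ig_i(x)-\frac1r=\sum_i\Bigl(x_i-\frac1r\Bigr)\Bigl(g_i(x)-\frac1r\Bigr),
\]
which holds since \(\sum_i(x_i-1/r)=0\) and \(\sum_i(g_i-1/r)=0\). Equivalently, this is Chebyshev's sum inequality in the form
\[
r\sum_i x_ig_i-\Bigl(\sum_i x_i\Bigr)\Bigl(\sum_i g_i\Bigr)=\frac12\sum_{i,j}(x_i-x_j)(g_i-g_j).
\]
By Lemma~\ref{lem:mono}, \(x_i>x_j\) implies \(g_i<g_j\), so every term \((x_i-x_j)(g_i-g_j)\) is \(\le0\). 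When \(x_i=x_j\), exchangeability gives \(g_i=g_j\), so the corresponding term vanishes. Hence \(r\sum_ix_ig_i-1\le0\), which is the desired inequality.

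For the equality case, equality forces every pair term to vanish. If \(x\) is not uniform, there exist \(i,j\) with \(x_i>x_j\), and Lemma~\ref{lem:mono} gives \(g_i<g_j\) strictly, so that term is strictly negative and equality fails. Conversely, at the uniform point we have \(g_i=1/r\) by symmetry, and equality holds.

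The main obstacle is only bookkeeping. One must make sure that \(g\) sums to one (ties are null events since the scales are finite and positive in the interior) and that equal coordinates give equal \(g\). The genuine work, the strict monotonicity, is already supplied by Lemma~\ref{lem:mono}.
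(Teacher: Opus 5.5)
Your proposal is correct and follows the paper's argument: symmetry for \(r=2\), and for \(r\ge3\) the reversed Chebyshev sum inequality driven by the strict monotonicity of Lemma~\ref{lem:mono}, with strictness off the uniform point. The only difference is that you prove Chebyshev inline through the pairwise identity \(r\sum_i x_ig_i-1=\tfrac12\sum_{i,j}(x_i-x_j)(g_i-g_j)\) instead of citing it. This also handles tied coordinates automatically, since those pairs contribute zero, whereas the paper's sorting step tacitly relies on exchangeability there.
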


\begin{proof}[Proof of Lemma~\ref{lem:dotprod}]
We prove this lemma by considering the following two cases.

\vspace{4pt}
\noindent\textbf{Case 1.}
The case $r=2$ is immediate: since \(Z_1/\sqrt{x_1}-Z_2/\sqrt{x_2}\) is a centered Gaussian with continuous law, symmetry gives \(g_1(x)=g_2(x)=1/2\) for every \(x\in\Delta_2\), so the claimed inequality holds with equality.

\vspace{4pt}
\noindent\textbf{Case 2.}
Now assume $r\ge 3$. Let $x_{(1)}\le\cdots\le x_{(r)}$ denote the ordered coordinates of $x$, and permute indices accordingly.
By Lemma~\ref{lem:mono}, the corresponding values satisfy the opposite ordering \(g_{(1)}(x)\ge g_{(2)}(x)\ge\cdots\ge g_{(r)}(x)\).
A standard consequence of the rearrangement inequality (equivalently, the ``reversed'' Chebyshev sum inequality) states that for nonnegative sequences
$a_1\le\cdots\le a_r$ and $b_1\ge\cdots\ge b_r$,
\(\sum_{i=1}^r a_i b_i\le r^{-1}(\sum_i a_i)(\sum_i b_i)\).
Applying this with \(a_i=x_{(i)}\) and \(b_i=g_{(i)}(x)\), and using \(\sum_i x_i=1\) and \(\sum_i g_i(x)=1\), yields the claimed inequality.

If $x$ is not uniform, there exist $i\neq j$ with $x_i>x_j$. Lemma~\ref{lem:mono} then gives $g_i(x)<g_j(x)$, so the two sequences are not constant.
In that case the reversed Chebyshev inequality is strict, so $\sum_i x_i g_i(x)<1/r$.
Conversely, at the uniform point $x=(1/r,\dots,1/r)$ we have $g_i(x)=1/r$ by symmetry, giving equality.
This proves the claim.
\end{proof}

Intuitively, over-pulled arms are penalized because they generate less variable Gaussian indices, while under-pulled arms are rewarded by larger index variance.
The above two conclusions characterize the tendency that the proportions toward balancing.

\paragraph{Weighted perturbation and exponential tightness.}
 We now prove uniform exponential tightness of the standardized empirical means under inflated Thompson sampling and combine it with a weighted perturbation bound in the quadratic Lyapunov recursion.  Throughout this paragraph, the reward noise is independent mean-zero 1-sub-Gaussian, while the Thompson sampling indices use conditionally independent standard Gaussian sampling noise.  Since \(\variance\to\infty\), all estimates below are used only for horizons large enough that \(\variance\) exceeds a constant depending on \(r\).
The next lemma is a key input.

\begin{lemma}[Uniform exponential tightness in the equal-means case]\label{lem:eq-exp-tight}
Assume \(\mu_1=\cdots=\mu_r=\mu\).  Suppose the reward noises are independent mean-zero 1-sub-Gaussian variables, and suppose the Thompson sampling indices use conditionally independent standard Gaussian sampling noises.  If \(\variance\) is sufficiently large, depending only on \(r\), then there exist constants \(\kappa_r>0\) and \(C_r<\infty\), independent of \(t\) and \(T\), such that
\[
\sup_{1\le i\le r}\sup_{t\ge r}
\mathbb{E}\left[\exp\left\{\kappa_r N_{i,t}(\widehat\mu_{i,t}-\mu)^2\right\}\right]
\le C_r.
\]
In particular, \[\sup_{1\le i\le r}\sup_{t\ge r}\mathbb{E}[\sqrt{N_{i,t}}|\widehat\mu_{i,t}-\mu|]\le C_r.\]
\end{lemma}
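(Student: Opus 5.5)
The plan follows the decomposition already announced in the proof overview. The two ingredients are a proportion-weighted exponential moment bound and an inverse-moment bound on the proportions, glued together by H\"older's inequality.

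Write $W_{i,t}:=\sqrt{N_{i,t}}(\widehat\mu_{i,t}-\mu)$ and $x_{i,t}=N_{i,t}/t$. The two target estimates are
\[
\sup_i\sup_t \mathbb E\bigl[x_{i,t}\exp\{\kappa_0 W_{i,t}^2\}\bigr]\le C_r,
\qquad
\sup_i\sup_t \mathbb E\bigl[x_{i,t}^{-\alpha}\bigr]\le C_r .
\]
Given these, for $p>1$ with conjugate $q$, I would write
\[
e^{\kappa W^2}=\bigl(x\,e^{\kappa_0 W^2}\bigr)^{1/p}\,x^{-1/p},
\qquad \kappa=\kappa_0/p .
\]
H\"older then gives the bound
\[
\mathbb E\bigl[e^{\kappa W^2}\bigr]\le \mathbb E\bigl[x\,e^{\kappa_0 W^2}\bigr]^{1/p}\,\mathbb E\bigl[x^{-q/p}\bigr]^{1/q}.
\]
Choosing $p$ large enough that $q/p=1/(p-1)\le\alpha$ finishes the argument with $\kappa_r=\kappa_0/p$. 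The first-moment claim follows from $|w|\le C_\kappa e^{\kappa w^2}$.

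\textbf{Weighted exponential moment.} Let $S_{i,t}=\sum_{s\le t,A_s=i}\xi_{i,s}$, so that $W_{i,t}^2=S_{i,t}^2/N_{i,t}$. I would track $Y_t:=N_{i,t}\exp\{\kappa_0 S_{i,t}^2/N_{i,t}\}$, since $x_{i,t}e^{\kappa_0W^2}=Y_t/t$. There are two cases at each step.

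If arm $i$ is not pulled, $Y$ is unchanged.

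If arm $i$ is pulled with $N=n$, the new noise $\xi$ is independent of $\mathcal F_t$ and of the selection (the Gaussian indices are independent of future rewards). Sub-Gaussianity then gives, for small $\kappa_0$,
\[
\mathbb E\bigl[\exp\{\kappa_0 (S+\xi)^2/(n+1)\}\mid\cdot\bigr]\le (1-2\kappa_0/(n+1))^{-1/2}\exp\{\kappa_0 S^2/(n+1)\} .
\]
Since $S^2/(n+1)=(S^2/n)(1-1/(n+1))$, the exponent shrinks by a factor of order $\kappa_0W^2/(n+1)$. So $\mathbb E[Y_{t+1}-Y_t\mid\mathcal F_t]\le p_{i,t}\,h(W_{i,t})$, where $h(w)=e^{\kappa_0 w^2}(1+C\kappa_0-c\kappa_0 w^2)$ up to constants. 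This is bounded above by a constant $C$ uniformly in $w$, because it is negative for large $|w|$. Summing gives $\mathbb E[Y_t]\le C(t+1)$, hence $\mathbb E[Y_t/t]\le C$. The point of the weighting by $N$ is exactly this: it avoids needing a lower bound on pull probabilities when $W$ is large.

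\textbf{Inverse moment of $x_{i,t}$ (main obstacle).} Here the difficulty is that $W$ for other arms also enters $p_{i,t}$. The key mechanism is that when $x_{i,t}$ is small, arm $i$'s index has standard deviation $\sqrt{\variance/(t x_{i,t})}$, which is much larger than the competitors'. I would first lower bound $p_{i,t}$ by
\[
p_{i,t}\ \ge\ \mathbb P\Bigl(Z_i\ge \sqrt{x_{i,t}}\bigl(\max_j Z_j/\sqrt{x_{j,t}}\bigr)+\sqrt{x_{i,t}}\,D_t/\sqrt{\variance}\Bigr),
\]
where $D_t$ is controlled by $\max_j |W_{j,t}|/\sqrt{x_{j,t}}$-type quantities. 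Some $j$ has $x_j\ge 1/r$. Using that, together with a crude LIL-free bound on $\max_j|W_j|$ valid with high probability (sub-Gaussian maximal inequality with a union over $N\le t$), I get $p_{i,t}\ge c_r>0$ whenever $x_{i,t}\le \delta_r$ and $\variance$ is large.

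Then I would apply a drift argument to $U_t=x_{i,t}^{-\alpha}$. Writing $x_{t+1}=x_t+(\mathbf 1\{A_{t+1}=i\}-x_t)/(t+1)$, a Taylor expansion gives
\[
\mathbb E[U_{t+1}-U_t\mid\mathcal F_t]\le \frac{\alpha x^{-\alpha-1}}{t+1}(x-p_{i,t})+\frac{C x^{-\alpha-2}}{(t+1)^2}.
\]
The pull case needs care because the jump is not small when $N_{i,t}$ is small. Since $N_{i,t}\ge1$ we have $x\ge 1/t$, which bounds the second-order term by $C\alpha x^{-\alpha}/(t+1)$ times a small factor for $\alpha\le1$. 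This yields
\[
\mathbb E[U_{t+1}]\le (1-c/(t+1))\,\mathbb E[U_t]+C/(t+1)
\]
on the region $x\le\delta_r$, and trivially $U\le\delta_r^{-\alpha}$ elsewhere. The bad event where the perturbation bound fails has polynomially small probability and contributes at most $t^{\alpha}\cdot\mathbb P(\text{bad})$, which is summable. I expect making this lower bound on $p_{i,t}$ rigorous uniformly in $t$ to be the hardest step. The first thing I would try is to restrict to $x_{i,t}\le\delta_r$ and absorb the empirical errors of the well-sampled arm via its own tail bound.
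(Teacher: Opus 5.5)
The gap is in your inverse-moment step, and it is a failure of the approach, not a missing detail. Your weighted exponential-moment step (tracking $N_{i,t}e^{\kappa_0 W_{i,t}^2}$, whose drift is bounded because the bracket is positive only when $W_{i,t}^2\le C$) and your final H\"older step match the paper's. One small correction there: the correct sub-Gaussian bound has exponent $\kappa_0S^2/(n+1-2\kappa_0)$, not $\kappa_0S^2/(n+1)$. It still contracts for $\kappa_0<1/2$.

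\textbf{Why the inverse-moment step fails.} After rescaling by $\sqrt{t/\variance}$, arm $i$ wins only if $Z_i+W_{i,t}/\sqrt{\variance}$ exceeds $\sqrt{x_{i,t}/x_{j,t}}\,(Z_j+W_{j,t}/\sqrt{\variance})$ for every $j$. So arm $i$'s own frozen error enters at order one, however small $x_{i,t}$ is. If $W_{i,t}=-w$, then $p_{i,t}$ can be as small as roughly $\overline\Phi(w/\sqrt{\variance})$. Your plan needs two things at once:
\begin{itemize}
\item[(i)] $p_{i,t}\ge c_r$ off a bad event $B_t$;
\item[(ii)] the bad-event contribution $t^{\alpha}\mathbb P(B_t)$ to stay bounded after summation, which forces $\mathbb P(B_t)$ to decay at least like $t^{-\alpha}$.
\end{itemize}
A union over $N\le t$, or a LIL-type bound, achieves (ii) only with threshold $w$ of order $\sqrt{\log t}$. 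On $B_t^c$ you then get only $p_{i,t}\ge c\,t^{-C/\variance}$, so the drift gives at best $\mathbb E[x_{i,t}^{-\alpha}]\le Ct^{C\alpha/\variance}$. Achieving (i) instead needs $w$ of order $\sqrt{\variance}$, and that event's failure probability does not decay in $t$.

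This matters in the application. The lemma is used for all $t\le T$ with $\variance$ allowed to grow arbitrarily slowly, so $t^{C\alpha/\variance}$ diverges at $t=T$. Your fallback of absorbing the well-sampled arm's error via its own tail bound is circular, since a $t$-uniform tail bound on $W_{j,t}$ is what the lemma asserts. It also does not address arm $i$'s own error, which is the real danger. The fact that some $j$ has $x_{j,t}\ge 1/r$ is not the relevant ingredient.

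\textbf{The missing idea.} The two estimates must be closed jointly, with no bad event at all.
\begin{itemize}
\item Use the event $\{W_{i,t}+\sqrt{\variance}Z_i\ge1\}\cap\bigcap_{j\ne i}\{W_{j,t}+\sqrt{\variance}Z_j\le0\}$ and Mills' lower bound. This gives, pathwise, $p_{i,t}\ge c\exp\{-\gamma\sum_j W_{j,t}^2\}$ with the fixed $\gamma=\kappa_0/(4r)$, valid once $\gamma>1/(2\variance)$. This is exactly where ``$\variance$ large depending on $r$'' enters.
\item Insert this into the drift and use $Cz^\alpha-cz^{\alpha+1}\le C_\alpha-c_\alpha z^\alpha$. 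You get $\mathbb E[x_{i,t+1}^{-\alpha}\mid\mathcal F_t]\le(1-c/t)x_{i,t}^{-\alpha}+(C/t)\exp\{\alpha\gamma\sum_jW_{j,t}^2\}$.
\item Apply H\"older to your weighted moment $x_{j,t}e^{\kappa_0W_{j,t}^2}$. With $M_t:=\max_i\mathbb E[1+x_{i,t}^{-\alpha}]$, this gives $\mathbb E\exp\{\alpha\gamma\sum_jW_{j,t}^2\}\le CM_t^{\beta}$ with $\beta=(\kappa_0-r\alpha\gamma)/\kappa_0<1$.
\item Hence $M_{t+1}\le(1-c/t)M_t+(C/t)(M_t^\beta+1)$. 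This recursion is sublinear, so $M_t$ stays bounded uniformly in $t$.
\end{itemize}

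Separately, in the pull case your second-order Taylor term must carry the factor $p_{i,t}$. The clean way to handle it is $(1+1/N_{i,t})^{-\alpha}\le 1-c/N_{i,t}$.
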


\begin{proof}
We prove the lemma by coupling two Lyapunov estimates.  We first record the sub-Gaussian replacement for the Gaussian quadratic-moment calculation.  For a mean-zero 1-sub-Gaussian random variable \(\xi'\), there are constants \(C,\lambda_0>0\) such that
\[
\mathbb{E}[\exp\{s\xi'+\lambda(\xi')^2\}]
\le(1+C\lambda)\exp\{Cs^2\},
\qquad s\in\mathbb R,\quad 0\le\lambda\le\lambda_0 .
\]
For \(0<\eta\le \lambda_0\), expanding the square and applying this bound with
\(s=2\eta\sqrt n\,w/(n+1)\) and \(\lambda=\eta/(n+1)\) gives
\[
\begin{aligned}
&\mathbb{E}\left[\exp\left\{
\eta\left(\sqrt{\frac n{n+1}}w+\frac{\xi'}{\sqrt{n+1}}\right)^2
\right\}\right] \le
\left(1+\frac{C\eta}{n+1}\right)
\exp\left\{
\eta\frac n{n+1}w^2+
C\eta^2\frac{n}{(n+1)^2}w^2
\right\}.
\end{aligned}
\]
Choose a universal \(\kappa_0>0\) small enough that \(\kappa_0\le\lambda_0\) and \(C\kappa_0\le1/2\). Then, with \(\eta=\kappa_0\),
\[
\kappa_0\frac n{n+1}w^2+
C\kappa_0^2\frac{n}{(n+1)^2}w^2
\le
\kappa_0 w^2-\frac{\kappa_0 w^2}{2(n+1)}.
\]
Thus, after setting \(\kappa_1:=\kappa_0/2\) and increasing \(\kappa_2\), we obtain
\begin{equation}\label{eq:single-step-exp-contraction}
		\mathbb{E}\left[\exp\left\{
		\kappa_0\left(\sqrt{\frac n{n+1}}w+\frac{\xi'}{\sqrt{n+1}}\right)^2
		\right\}\right]
		\le
			\left(1+\frac{\kappa_2}{n+1}\right)
			\exp\left\{\kappa_0 w^2-\frac{\kappa_1 w^2}{n+1}\right\}.
		\end{equation}
We set \(\gamma:=\kappa_0/(4r)\).  For all sufficiently large values of \(\variance\), we have \(\gamma>1/(2\variance)\).
All constants denoted by \(c,C\) below may change from line to line and may depend on \(r,\kappa_0,\kappa_1,\kappa_2,\gamma\), and on the fixed choice of \(\alpha\) after Step~2, but not on \(t\), \(T\), or \(\variance\) once \(\variance\) is large enough.
In the following, we write \(W_{i,t}:=\sqrt{N_{i,t}}(\widehat\mu_{i,t}-\mu)\) and recall that \(x_{i,t}=N_{i,t}/t\), so that \(\sum_{i=1}^r x_{i,t}=1\).

\vspace{3pt}
\noindent\textbf{Step 1: control of \(x_{i,t}\exp\{\kappa_0 W_{i,t}^2\}\).}
Fix \(i\le r\) and condition on \(\mathcal F_t\).
If arm \(i\) is not selected at time \(t+1\), then
\[
x_{i,t+1}\exp\{\kappa_0 W_{i,t+1}^2\}
=\frac{t}{t+1}x_{i,t}\exp\{\kappa_0 W_{i,t}^2\}.
\]
If arm \(i\) is selected, applying~\eqref{eq:single-step-exp-contraction} with \(n=N_{i,t}\) and \(w=W_{i,t}\) gives
\[
\mathbb{E}\left[x_{i,t+1}\exp\{\kappa_0 W_{i,t+1}^2\}\mid\mathcal F_t,A_{t+1}=i\right]
\le
\frac{N_{i,t}+1+\kappa_2}{t+1}\exp\left\{\kappa_0 W_{i,t}^2-\frac{\kappa_1 W_{i,t}^2}{N_{i,t}+1}\right\}.
\]
Comparing this with \(N_{i,t}(t+1)^{-1}\exp\{\kappa_0 W_{i,t}^2\}\), the difference is
\[
\frac{\exp\{\kappa_0 W_{i,t}^2\}}{t+1}
\left[(N_{i,t}+1+\kappa_2)\exp\{-\kappa_1 W_{i,t}^2/(N_{i,t}+1)\}-N_{i,t}\right].
\]
When the bracket is positive, \(\kappa_1 W_{i,t}^2/(N_{i,t}+1)\le \log((N_{i,t}+1+\kappa_2)/N_{i,t})\le C/(N_{i,t}+1)\), so \(W_{i,t}^2\le C\) and hence the positive part of the difference is at most \(C/(t+1)\).  Therefore, regardless of the conditional probability that arm \(i\) is selected,
	\begin{equation}\label{eq:eq-means-weighted-one-step}
	\mathbb{E}\left[x_{i,t+1}\exp\{\kappa_0 W_{i,t+1}^2\}\mid\mathcal F_t\right]
	\le
	\frac{t}{t+1}x_{i,t}\exp\{\kappa_0 W_{i,t}^2\}
	+\frac{C}{t+1}.
	\end{equation}
Taking expectations and iterating gives
\begin{equation}\label{eq:q-exp-tight}
\sup_{1\le i\le r}\sup_{t\ge r}
\mathbb{E}\left[x_{i,t}\exp\{\kappa_0 W_{i,t}^2\}\right]
\le C.
\end{equation}

\vspace{3pt}
\noindent\textbf{Step 2: control of inverse proportions.}
Fix once and for all a universal \(0<\alpha\le1\). We first lower bound the Gaussian TS probability of selecting arm \(i\). Conditional on \(\mathcal F_t\), the index comparison is equivalent to comparing \((W_{j,t}+\sqrt{\variance}Z_{j,t+1})/\sqrt{x_{j,t}}\), \(j=1,\dots,r\).
Since \(\gamma>1/(2\variance)\), there exists \(c>0\) such that
\begin{equation}\label{eq:anti-starvation-lower}
p_{i,t}:=\mathbb P(A_{t+1}=i\mid\mathcal F_t)
\ge
c\exp\left\{-\gamma\sum_{j=1}^r W_{j,t}^2\right\}.
\end{equation}
Indeed, conditional on \(\mathcal F_t\), the Thompson index of arm \(j\) can be written as
\[
\widehat\mu_{j,t}+\sqrt{\frac{\variance}{N_{j,t}}}Z_j
=
\mu+
\frac{W_{j,t}+\sqrt{\variance}Z_j}{\sqrt{N_{j,t}}}.
\]
Consider the event
\[
\left\{W_{i,t}+\sqrt{\variance}Z_i\ge1\right\}
\cap
\left\{\bigcap_{j\ne i}
\left\{W_{j,t}+\sqrt{\variance}Z_j\le 0 \right\}\right\}.
\]
On this event, the index of arm \(i\) is at least
\(\mu+1/\sqrt{N_{i,t}}\), while every other index is at most \(\mu\). Hence this event is contained in \(\{A_{t+1}=i\}\).
Since \(\gamma\) is fixed and \(\variance\to\infty\), for all sufficiently large \(T\) the slack in \(\gamma>1/(2\variance)\) absorbs the polynomial prefactor in the Mills lower bound from Lemma~\ref{lem:normal-tail}. Thus, after decreasing \(c\) if necessary,
\[
\mathbb P(W_{i,t}+\sqrt{\variance}Z_i\ge1\mid\mathcal F_t)
\ge c\exp\{-\gamma W_{i,t}^2\},
\]
and, for \(j\ne i\),
\[
\mathbb P(W_{j,t}+\sqrt{\variance}Z_j\le0\mid\mathcal F_t)
\ge c\exp\{-\gamma W_{j,t}^2\}.
\]
The Gaussian sampling noises are conditionally independent given \(\mathcal F_t\), so multiplying these bounds and absorbing the resulting constant into \(c\) proves~\eqref{eq:anti-starvation-lower}.

We next compute the drift of \(x_{i,t}^{-\alpha}\). Note that
\[ x_{i,t+1}^{-\alpha} =
  \begin{cases}
    x_{i,t}^{-\alpha}(1+1/t)^\alpha\le x_{i,t}^{-\alpha}(1+C/t) & \text{if arm } i \text{ is not selected} \\
    x_{i,t}^{-\alpha}(1+1/t)^\alpha(1+1/N_{i,t})^{-\alpha}\le x_{i,t}^{-\alpha}(1+C/t-c/N_{i,t}) & \text{if arm } i \text{ is selected}.
\end{cases}
\]
Taking the conditional expectation and using \(N_{i,t}=t x_{i,t}\), we get
\begin{equation}\label{eq:eq-means-inv-one-step}
\mathbb{E}[x_{i,t+1}^{-\alpha}\mid\mathcal F_t]
\le
x_{i,t}^{-\alpha}
+\frac1t\left(Cx_{i,t}^{-\alpha}
-c x_{i,t}^{-\alpha}\frac{p_{i,t}}{x_{i,t}}\right)\le
x_{i,t}^{-\alpha}
+\frac1t\left(Cx_{i,t}^{-\alpha}
-c x_{i,t}^{-\alpha-1}\exp\left\{-\gamma\sum_{j=1}^r W_{j,t}^2\right\}\right),
\end{equation}
The lower bound~\eqref{eq:anti-starvation-lower} on \(p_{i,t}\) is used in the second inequality of~\eqref{eq:eq-means-inv-one-step}.
	For the fixed \(\alpha\), there exist positive constants \(C_\alpha,c_\alpha\) such that
	\(Cz^\alpha-cz^{\alpha+1}\le C_\alpha-c_\alpha z^\alpha\) for all \(z\ge0\).  Applying this with \(z=x_{i,t}^{-1}\exp\{-\gamma\sum_{j=1}^r W_{j,t}^2\}\) gives
	$Cx_{i,t}^{-\alpha}-c x_{i,t}^{-\alpha-1}\exp\{-\gamma\sum_{j=1}^r W_{j,t}^2\}
	\le C_\alpha\exp\{\alpha\gamma\sum_{j=1}^r W_{j,t}^2\}-c_\alpha x_{i,t}^{-\alpha}$.
	Using this fact with~\eqref{eq:eq-means-inv-one-step}, and then renaming constants, gives
	\begin{equation}\label{eq:q-inv-drift}
\mathbb{E}[x_{i,t+1}^{-\alpha}\mid\mathcal F_t]
\le
\left(1-\frac{c}{t}\right)x_{i,t}^{-\alpha}
+\frac{C}{t}\exp\left\{\alpha\gamma\sum_{j=1}^r W_{j,t}^2\right\},
\end{equation}

\vspace{3pt}
\noindent\textbf{Step 3: closing the two Lyapunov estimates.}
By the choice \(\gamma=\kappa_0/(4r)\) and \(0<\alpha\le1\),
\[
r\alpha\gamma=\frac{\alpha\kappa_0}{4}<\kappa_0,
\qquad
\frac{r\alpha\gamma}{\kappa_0-r\alpha\gamma}
=
\frac{\alpha}{4-\alpha}
\le \alpha.
\]
By H\"older's inequality and~\eqref{eq:q-exp-tight}, for each \(j\le r\),
\[
\mathbb{E}[\exp\{r\alpha\gamma W_{j,t}^2\}]
=
\mathbb{E}\left[
	\left(x_{j,t}\exp\{\kappa_0 W_{j,t}^2\}\right)^{r\alpha\gamma/\kappa_0}\cdot
	x_{j,t}^{-r\alpha\gamma/\kappa_0}
\right]
\le
C\left(\max_{1\le i\le r}\mathbb{E}[1+x_{i,t}^{-\alpha}]\right)^{(\kappa_0-r\alpha\gamma)/\kappa_0}.
\]
	Applying H\"older once more over the \(r\) arms, with all H\"older exponents equal to \(r\), gives
	\begin{equation}\label{eq:eq-means-sum-exp-bound}
	\begin{aligned}
	\mathbb{E}\left[\exp\left\{\alpha\gamma\sum_{j=1}^r W_{j,t}^2\right\}\right]
	\le
	\prod_{j=1}^r
	\left(\mathbb{E}\left[\exp\{r\alpha\gamma W_{j,t}^2\}\right]\right)^{1/r} \le
	C\left(\max_{1\le i\le r}\mathbb{E}[1+x_{i,t}^{-\alpha}]\right)^{(\kappa_0-r\alpha\gamma)/\kappa_0}.
	\end{aligned}
	\end{equation}
	Taking expectations in~\eqref{eq:q-inv-drift}, using~\eqref{eq:eq-means-sum-exp-bound}, and maximizing over \(i\) yields
	\[
	\max_{1\le i\le r}\mathbb{E}[1+x_{i,t+1}^{-\alpha}]
	\le
	\left(1-\frac{c}{t}\right)
	\max_{1\le i\le r}\mathbb{E}[1+x_{i,t}^{-\alpha}]
	+\frac{C}{t}
\left(\max_{1\le i\le r}\mathbb{E}[1+x_{i,t}^{-\alpha}]\right)^{(\kappa_0-r\alpha\gamma)/\kappa_0}
+\frac{C}{t}.
\]
After decreasing \(c\) if necessary, assume \(0<c\le1\). Since \((\kappa_0-r\alpha\gamma)/\kappa_0<1\), choose \(M<\infty\) so large that
$
Cz^{(\kappa_0-r\alpha\gamma)/\kappa_0}+C\le  c z/2,
$ for $z\ge M.$
If
$
\max_{1\le i\le r}\mathbb{E}[1+x_{i,t}^{-\alpha}]\ge M,
$
then the preceding recursion gives
\[
\max_{1\le i\le r}\mathbb{E}[1+x_{i,t+1}^{-\alpha}]
\le
\left(1-\frac{c}{2t}\right)
\max_{1\le i\le r}\mathbb{E}[1+x_{i,t}^{-\alpha}]
\le
\max_{1\le i\le r}\mathbb{E}[1+x_{i,t}^{-\alpha}].
\]
If instead
$
\max_{1\le i\le r}\mathbb{E}[1+x_{i,t}^{-\alpha}]<M,
$
then the same recursion gives
\[
\max_{1\le i\le r}\mathbb{E}[1+x_{i,t+1}^{-\alpha}]
\le
M+C M^{(\kappa_0-r\alpha\gamma)/\kappa_0}+C.
\]
	Thus the deterministic recursion implies
	\begin{equation}\label{eq:eq-means-inv-moment}
	\sup_{t\ge r}\max_{1\le i\le r}
	\mathbb{E}[1+x_{i,t}^{-\alpha}]<\infty .
	\end{equation}
	Combining~\eqref{eq:q-exp-tight} with~\eqref{eq:eq-means-inv-moment}, for any sufficiently small \(0<\kappa_r<\kappa_0\) satisfying
	\(\kappa_r/(\kappa_0-\kappa_r)\le\alpha\), H\"older's inequality gives, uniformly over \(1\le i\le r\) and \(t\ge r\),
\[
\begin{aligned}
\mathbb{E}[\exp\{\kappa_r W_{i,t}^2\}]
&=
\mathbb{E}\left[
\left(x_{i,t}\exp\{\kappa_0 W_{i,t}^2\}\right)^{\kappa_r/\kappa_0}
x_{i,t}^{-\kappa_r/\kappa_0}
\right]\\
&\le
\left(\mathbb{E}\left[x_{i,t}\exp\{\kappa_0 W_{i,t}^2\}\right]\right)^{\kappa_r/\kappa_0} \cdot
\left(\mathbb{E}\left[x_{i,t}^{-\kappa_r/(\kappa_0-\kappa_r)}\right]\right)^{(\kappa_0-\kappa_r)/\kappa_0}
\le C_r,
\end{aligned}
\]
	where the last step uses \(x_{i,t}\le1\), \(\kappa_r/(\kappa_0-\kappa_r)\le\alpha\), and~\eqref{eq:eq-means-inv-moment}.

	Here \(\kappa_r\) and \(C_r\) depend only on \(r\), since \(\kappa_0,\kappa_1,\kappa_2\) are universal and \(\alpha\) was fixed once and for all.  The first-moment bound follows from \(|x|\le C_{\kappa_r}\exp\{\kappa_r x^2\}\), where \(C_{\kappa_r}\) depends only on \(\kappa_r\).
\end{proof}

\begin{lemma}[Weighted perturbation bound]\label{lem:weighted-perturb}
Let \(x\in\mathrm{int}(\Delta_r)\), let \(Z_1,\dots,Z_r\stackrel{\mathrm{i.i.d.}}{\sim}\mathcal{N}(0,1)\), and define
\[
g_i(x):=\mathbb P\left(\frac{Z_i}{\sqrt{x_i}}=\max_{1\le j\le r}\frac{Z_j}{\sqrt{x_j}}\right).
\]
For \(\rho=(\rho_1,\dots,\rho_r)\in\mathbb R^r\), define
\[
p_i(x,\rho):=\mathbb P\left(\frac{Z_i+\rho_i}{\sqrt{x_i}}=\max_{1\le j\le r}\frac{Z_j+\rho_j}{\sqrt{x_j}}\right).
\]
Then there exists \(C_r<\infty\), depending only on \(r\), such that
\[
\sum_{i=1}^r x_i|p_i(x,\rho)-g_i(x)|
\le C_r\sum_{i=1}^r |\rho_i|.
\]
  Consequently, in the equal-means TS process,
\begin{equation}\label{eq:weighted-perturb-ts}
\sum_{i=1}^r x_{i,t}|p_{i,t}-g_i(x_t)|
\le \frac{C_r}{\sqrt{\variance}}
\sum_{i=1}^r \sqrt{N_{i,t}}|\widehat\mu_{i,t}-\mu|.
\end{equation}
  \end{lemma}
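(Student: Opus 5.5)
The plan is to write the winning probabilities as Gaussian integrals and bound the change in each \(p_i\) by a sum of pairwise boundary-crossing probabilities, with the weight \(x_i\) absorbing the large densities that appear when a coordinate \(x_j\) is small. Set \(s_k:=x_k^{-1/2}\) and \(U_k:=s_k(Z_k+\rho_k)\), \(U_k^0:=s_kZ_k\). Then \(p_i(x,\rho)-g_i(x)=\mathbb E[\mathbf 1\{U_i=\max_k U_k\}-\mathbf 1\{U^0_i=\max_k U^0_k\}]\). I interpolate coordinate by coordinate: moving from \(\rho=0\) to \(\rho\) one coordinate \(\ell\) at a time costs a telescoping sum of \(r\) terms. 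In each term only \(\rho_\ell\) changes, and \(\rho_\ell\) enters only through \(Z_\ell\mapsto Z_\ell+\rho_\ell\). So it suffices to prove, for a single coordinate shift \(\rho=\rho_\ell e_\ell\),
\[
\sum_i x_i\,|p_i(x,\rho_\ell e_\ell)-g_i(x)|\le C_r|\rho_\ell|,
\]
where \(g\) is understood as the winner map with the remaining coordinates already shifted. The argument below is uniform in such fixed shifts of the other coordinates, since it only uses the density of \(Z_\ell\).

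For a single shift of \(Z_\ell\) by \(\rho_\ell\), the indicator of "\(i\) wins" can change only if, for some \(z\) between \(Z_\ell\) and \(Z_\ell+\rho_\ell\), arm \(\ell\) ties with the current maximum. For \(i\ne\ell\), this means arm \(i\) wins against all arms other than \(\ell\), and \(s_\ell(Z_\ell+u)\) crosses \(U_i\) for some \(u\in[0,\rho_\ell]\). That is, \(Z_\ell\) lies in an interval of length \(|\rho_\ell|\) around \(U_i/s_\ell\), so
\[
|p_i-g_i|\le \mathbb P\bigl(Z_\ell\in[U_i/s_\ell-|\rho_\ell|,\,U_i/s_\ell+|\rho_\ell|]\bigr)\le \frac{2|\rho_\ell|}{\sqrt{2\pi}},
\]
using independence of \(Z_\ell\) from \(U_i\). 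For \(i\ne\ell\) this gives \(x_i|p_i-g_i|\le C|\rho_\ell|\) directly, with no weight needed. For \(i=\ell\), I use \(\sum_k p_k=\sum_k g_k=1\), so \(|p_\ell-g_\ell|\le\sum_{i\ne\ell}|p_i-g_i|\le C(r-1)|\rho_\ell|\), and \(x_\ell\le1\). Summing over \(i\) and over the \(r\) interpolation steps gives \(C_r\sum_\ell|\rho_\ell|\).

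Note that the interval bound uses the standard density of \(Z_\ell\) with respect to the \(Z\)-scale shift \(\rho_\ell\), not the \(U\)-scale. This is why small \(x_\ell\) causes no blow-up, and the weights are in fact harmless here. I would double-check this, because the statement's weighting suggests the authors perhaps parametrized differently; the estimate above is stronger and implies the weighted one.

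For \eqref{eq:weighted-perturb-ts}, I write the TS index as \(\mu+\sqrt{\variance/N_{i,t}}\,(Z_i+W_{i,t}/\sqrt{\variance})\). Multiplying all indices by the common positive factor \(\sqrt{t/\variance}\) and subtracting \(\mu\) gives exactly \((Z_i+\rho_i)/\sqrt{x_{i,t}}\) with \(\rho_i=W_{i,t}/\sqrt{\variance}\). Therefore \(p_{i,t}=p_i(x_t,\rho)\), and applying the first part conditionally on \(\mathcal F_t\) yields the claim. The main point needing care is the tie and crossing event description in the single-coordinate step; ties have probability zero, so indicators are a.s. well defined.
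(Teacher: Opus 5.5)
Your proof is correct, and it establishes the same unweighted bound the paper proves. The paper also discards the weights in its first line via $x_i\le 1$, so you are right that the weights are harmless. Your reduction of \eqref{eq:weighted-perturb-ts}, rescaling all indices by $\sqrt{t/\variance}$ with $\rho_i=W_{i,t}/\sqrt{\variance}$, is exactly the paper's.

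The difference is in how you control the winner changes:
\begin{itemize}
\item \textbf{The paper's route.} It argues in one shot. It bounds $\sum_i|p_i-g_i|\le 2\,\mathbb P(J_\rho\ne J_0)$ and observes that any change of winner forces some pair $(i,j)$ to flip order. It then union-bounds over the $r(r-1)/2$ pairs, using the density of the centered Gaussian $Z_i/\sqrt{x_i}-Z_j/\sqrt{x_j}$. That Gaussian has variance $(x_i+x_j)/(x_ix_j)$, and a short computation, $|\rho_i\sqrt{x_j}-\rho_j\sqrt{x_i}|/\sqrt{x_i+x_j}\le|\rho_i|+|\rho_j|$, shows that this variance cancels the $1/\sqrt{x}$ scaling of the shifts.
\item \textbf{Your route.} You telescope over coordinates and, at each step, condition on everything except $Z_\ell$. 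The crossing event is then just $Z_\ell$ landing in a window of length $|\rho_\ell|$ next to the independent quantity $U_i/s_\ell$. This makes the scale invariance transparent, since the shift lives on the $Z$-scale where the density is at most $(2\pi)^{-1/2}$, and it needs no variance computation.
\end{itemize}
Your route costs the $r$-step interpolation plus the sum-to-one step for $i=\ell$; the paper's single union bound is shorter. Both give $C_r=O(r)$.
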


\begin{proof}
Let \(J_0:=\argmax_i Z_i/\sqrt{x_i}\) and \(J_\rho:=\argmax_i (Z_i+\rho_i)/\sqrt{x_i}\).  The Gaussian differences are nondegenerate, so ties have probability zero.  Since \(\mathbb P(J_0=i)=g_i(x)\) and \(\mathbb P(J_\rho=i)=p_i(x,\rho)\),
\[
\sum_i x_i|p_i(x,\rho)-g_i(x)|
\le \sum_i |p_i(x,\rho)-g_i(x)|
\le 2\mathbb P(J_\rho\ne J_0).
\]
If \(J_\rho\ne J_0\), then for some pair \(i<j\) the relative ordering of \(Z_i/\sqrt{x_i}\) and \(Z_j/\sqrt{x_j}\) changes after adding \(\rho_i/\sqrt{x_i}\) and \(\rho_j/\sqrt{x_j}\). Therefore, by a union bound,
\[
\mathbb P(J_\rho\ne J_0)
\le
\sum_{i<j}
\mathbb P\left(
\left|\frac{Z_i}{\sqrt{x_i}}-\frac{Z_j}{\sqrt{x_j}}\right|
\le
\left|\frac{\rho_i}{\sqrt{x_i}}-\frac{\rho_j}{\sqrt{x_j}}\right|
\right).
\]
For a fixed pair \(i<j\), \(Z_i/\sqrt{x_i}-Z_j/\sqrt{x_j}\) is centered Gaussian with variance \((x_i+x_j)/(x_ix_j)\), so its density is bounded by \(C\sqrt{x_ix_j/(x_i+x_j)}\). Hence
\[
\mathbb P\left(
\left|\frac{Z_i}{\sqrt{x_i}}-\frac{Z_j}{\sqrt{x_j}}\right|
\le
\left|\frac{\rho_i}{\sqrt{x_i}}-\frac{\rho_j}{\sqrt{x_j}}\right|
\right)
\le
C\frac{|\rho_i\sqrt{x_j}-\rho_j\sqrt{x_i}|}{\sqrt{x_i+x_j}}
\le C(|\rho_i|+|\rho_j|).
\]
Summing this fixed-pair bound over \(i<j\) gives
\(\mathbb P(J_\rho\ne J_0)\le C_r\sum_i|\rho_i|\).
Taking the union bound over all pairs yields \(\mathbb P(J_\rho\ne J_0)\le C_r\sum_i|\rho_i|\), proving the first claim.  In the equal-means TS process, after multiplying all indices by \(\sqrt{t/\variance}\), arm \(i\) is selected by comparing \((Z_i+\sqrt{N_{i,t}}(\widehat\mu_{i,t}-\mu)/\sqrt{\variance})/\sqrt{x_{i,t}}\).  Thus \(\rho_{i,t}=\sqrt{N_{i,t}}(\widehat\mu_{i,t}-\mu)/\sqrt{\variance}\), which gives~\eqref{eq:weighted-perturb-ts}.
\end{proof}

\paragraph{Lyapunov drift and convergence to uniform allocation under \(\variance\to\infty\).}
Define
\[
V_t:=\sum_{i=1}^r\left(x_{i,t}-\frac{1}r\right)^2
=\sum_{i=1}^r x_{i,t}^2-\frac{1}r.
\]
A direct expansion of the update gives
\begin{align*}
	\sum_{i=1}^r x_{i,t+1}^2
	&=\sum_{i=1}^r\left(x_{i,t}+\frac{1}{t+1}(\mathbf{1}\{A_{t+1}=i\}-x_{i,t})\right)^2\\
	&=\sum_{i=1}^r x_{i,t}^2
	+\frac{2}{t+1}\sum_{i=1}^r x_{i,t}(\mathbf{1}\{A_{t+1}=i\}-x_{i,t})
	+\frac{1}{(t+1)^2}\sum_{i=1}^r (\mathbf{1}\{A_{t+1}=i\}-x_{i,t})^2.
\end{align*}
Using \(\mathbb{E}[\mathbf{1}\{A_{t+1}=i\}\mid\mathcal F_t]=p_{i,t}\), we obtain
\begin{equation}
\label{eq:eqmeans-refined-drift0}
	\begin{aligned}
		\mathbb{E}[V_{t+1}\mid\mathcal F_t]
		&=V_t+\frac{2}{t+1}\Big(\sum_{i=1}^r x_{i,t}p_{i,t}-\sum_{i=1}^r x_{i,t}^2\Big)
		+\frac{1}{(t+1)^2}\Big(1-2\sum_{i=1}^r x_{i,t}p_{i,t}+\sum_{i=1}^r x_{i,t}^2\Big)\\
		&\le
		V_t+\frac{2}{t+1}\Big(\sum_{i=1}^r x_{i,t}p_{i,t}-\sum_{i=1}^r x_{i,t}^2\Big)
		+\frac{2}{(t+1)^2},
	\end{aligned}
\end{equation}
The final line of~\eqref{eq:eqmeans-refined-drift0} uses \(0\le \sum_i x_{i,t}p_{i,t}\le 1\) and \(\sum_i x_{i,t}^2\le 1\).
Decompose the drift into the pure-noise part and the perturbation part.  Lemma~\ref{lem:dotprod} gives
\[
\sum_i x_{i,t}g_i(x_t)-\sum_i x_{i,t}^2
\le -V_t.
\]
By Lemma~\ref{lem:weighted-perturb},
\[
\sum_i x_{i,t}(p_{i,t}-g_i(x_t))
\le
\sum_i x_{i,t}|p_{i,t}-g_i(x_t)|
\le
\frac{C_r}{\sqrt{\variance}}
\sum_{i=1}^r \sqrt{N_{i,t}}|\widehat\mu_{i,t}-\mu|.
\]
Substituting into~\eqref{eq:eqmeans-refined-drift0} gives
\[
\mathbb{E}[V_{t+1}\mid\mathcal F_t]
\le
\left(1-\frac{2}{t+1}\right)V_t
+
\frac{C_r}{(t+1)\sqrt{\variance}}
\sum_{i=1}^r\sqrt{N_{i,t}}|\widehat\mu_{i,t}-\mu|
+
\frac{2}{(t+1)^2}.
\]
Taking expectations and using Lemma~\ref{lem:eq-exp-tight}, we obtain, for all sufficiently large horizons,
\[
\mathbb{E}[V_{t+1}]
\le
\left(1-\frac{2}{t+1}\right)\mathbb{E}[V_t]
+
\frac{C_r}{(t+1)\sqrt{\variance}}
+
\frac{2}{(t+1)^2}.
\]
Multiplying the last recursion by \(t(t+1)\) and summing from \(t=r\) to \(T-1\) gives
\[
T(T-1)\mathbb{E}[V_T]
\le
r(r-1)\mathbb{E}[V_r]
+
\frac{C_r}{\sqrt{\variance}}\sum_{t=r}^{T-1}t
+
2\sum_{t=r}^{T-1}\frac{t}{t+1}.
\]
Since \(0\le V_t\le1\), it follows that
\[
\mathbb{E}[V_T]
\le
\frac{r(r-1)+2T}{T(T-1)}
+
\frac{C_r}{T(T-1)\sqrt{\variance}}\sum_{t=r}^{T-1}t
=
O(T^{-1})+O(\variance^{-1/2}).
\]
Since \(\variance\to\infty\), we have \(\mathbb{E}[V_T]\to0\), and therefore \(V_T\xrightarrow{\mathbb P}0\).
Equivalently,
\[
\frac{N_{i,T}}{T}=x_{i,T}\xrightarrow{\mathbb P}\frac{1}r,
\qquad i=1,\dots,r.
\]
This proves the desired result.

 \subsubsection{Step 2. Finishing the proof for Theorem~\ref{thm:main}(a).}

  We now extend the refined argument to the full bandit.  The key point is that suboptimal arms need not have inverse-proportion moments, since their proportions vanish.  Instead, for suboptimal arms we only control the positive standardized overshoot above the optimal level.  This yields a full-bandit exponential-tightness estimate for the optimal arms.

 By~\eqref{equ:209} and \(\variance L_T^2/T\to0\), we have \(N_{\mathrm{sub}}(T)/T\xrightarrow{\mathbb P}0\) and \(N_{\mathrm{opt}}(T)/T\xrightarrow{\mathbb P}1\), where \(N_{\mathrm{opt}}(T):=\sum_{i\in\mathcal S^\star}N_{i,T}=T-N_{\mathrm{sub}}(T)\).
 If \(m:=|\mathcal S^\star|=1\), then this already gives
\(N_{i,T}/T\to_p1\) for the unique optimal arm.  Hence assume below that \(m\ge2\).

  \begin{lemma}[Full-bandit exponential tightness for optimal standardized means]\label{lem:full-opt-exp-tight}
Assume the reward noises are independent mean-zero 1-sub-Gaussian variables, and suppose the Thompson sampling indices use conditionally independent standard Gaussian sampling noises.  If \(\variance\to\infty\), then for all sufficiently large \(T\) there exist constants \(\kappa_m>0\) and \(C_m<\infty\), depending on the fixed instance but not on \(t\) or \(T\), such that
\[
\sup_{i\in\mathcal S^\star}\sup_{K\le t\le T}
\mathbb{E}\left[\exp\left\{\kappa_m N_{i,t}(\widehat\mu_{i,t}-\mu^\star)^2\right\}\right]
\le C_m.
\]
	The same constants also give
	\begin{equation}\label{eq:full-opt-W-tight-new}
\sup_{K\le t\le T}\sum_{i\in\mathcal S^\star}
\mathbb{E}\left[\sqrt{N_{i,t}}\left|(\widehat\mu_{i,t}-\mu^\star)\right|\right]
\le C_m.
\end{equation}
\end{lemma}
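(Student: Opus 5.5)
I will follow the three-step coupling argument from the proof of Lemma~\ref{lem:eq-exp-tight}, now run on the optimal arms of the full bandit. Throughout, write \(W_{i,t}:=\sqrt{N_{i,t}}(\widehat\mu_{i,t}-\mu^\star)\) for \(i\in\mathcal S^\star\) and \(x_{i,t}=N_{i,t}/t\).

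\textbf{Step 1: weighted exponential moment.} The weighted exponential-moment estimate carries over verbatim. Its one-step bound~\eqref{eq:eq-means-weighted-one-step} only uses the single-step contraction~\eqref{eq:single-step-exp-contraction} when arm \(i\) is pulled, and the deterministic shrinkage \(t/(t+1)\) when it is not. Neither fact depends on what the other arms are doing. Hence \(\sup_{i\in\mathcal S^\star}\sup_t\mathbb E[x_{i,t}\exp\{\kappa_0W_{i,t}^2\}]\le C\) holds with no change.

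\textbf{Step 2: inverse proportions (main obstacle).} The work is in the analogue of the inverse-proportion drift~\eqref{eq:q-inv-drift} for \(i\in\mathcal S^\star\), and specifically in the anti-starvation lower bound~\eqref{eq:anti-starvation-lower}. I would use the same event. On it, arm \(i\)'s index exceeds \(\mu^\star+1/\sqrt{N_{i,t}}\), and every other optimal arm \(j\) has index at most \(\mu^\star\). In addition, every suboptimal arm \(b\) must have index at most \(\mu^\star\).

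For suboptimal arms I will not use inverse moments; their proportions vanish. Instead I control the positive standardized overshoot
\[
U_{b,t}:=\bigl(\sqrt{N_{b,t}}(\widehat\mu_{b,t}-\mu^\star)\bigr)_+ .
\]
Write \(\widehat\mu_{b,t}-\mu^\star=(\widehat\mu_{b,t}-\mu_b)-\Delta_b\). Then \(U_{b,t}\le \bigl(\sqrt{N_{b,t}}(\widehat\mu_{b,t}-\mu_b)-\sqrt{N_{b,t}}\Delta_b\bigr)_+\), and the probability that the suboptimal index is at most \(\mu^\star\) is at least \(c\exp\{-\gamma U_{b,t}^2\}\). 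This gives
\[
p_{i,t}\ge c\exp\Bigl\{-\gamma\sum_{j\in\mathcal S^\star}W_{j,t}^2-\gamma\sum_{b\notin\mathcal S^\star}U_{b,t}^2\Bigr\}.
\]
I then need a uniform bound on \(\mathbb E\exp\{c'U_{b,t}^2\}\) for small \(c'\). I would obtain it from the time-uniform sub-Gaussian maximal inequality (Lemma~\ref{lem:TUB}) applied to the selected sums for arm \(b\). Since \(U_{b,t}^2\le\sup_n\bigl(n^{-1/2}S_n-\sqrt n\,\Delta_b\bigr)_+^2\), the negative drift \(-\sqrt n\,\Delta_b\) makes this supremum sub-Gaussian with constants depending only on \(\Delta_b\) (a peeling or union bound over \(n\) with \(\sum_n e^{-cn\Delta_b^2}<\infty\)). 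The suboptimal factor is then handled by one more H\"older split, which is why \(\gamma\) must be shrunk to a value of order \(\kappa_0/(4K)\), and why the constants depend on the instance.

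\textbf{Step 3: extra drift from suboptimal pulls, then closing.} The inverse-proportion drift for optimal \(i\) also picks up the rounds with suboptimal pulls, but these only contribute the \((1+1/t)^\alpha\) growth term, which is already present in~\eqref{eq:eq-means-inv-one-step}. So the recursion~\eqref{eq:q-inv-drift} holds with \(\sum_{j\in\mathcal S^\star}W_{j,t}^2\) replaced by \(\sum_{j\in\mathcal S^\star}W_{j,t}^2+\sum_b U_{b,t}^2\).

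I close exactly as in Step~3 of Lemma~\ref{lem:eq-exp-tight}:
\begin{itemize}
\item H\"older over the \(K\) arms;
\item the weighted bound from Step~1 for optimal arms, and the \(U\)-bound for suboptimal arms;
\item the sublinear-power recursion, giving \(\sup_t\max_{i\in\mathcal S^\star}\mathbb E[x_{i,t}^{-\alpha}]<\infty\) for \(t\le T\).
\end{itemize}
One final H\"older step removes the weight \(x_{i,t}\) and yields the claimed exponential bound. The first-moment bound~\eqref{eq:full-opt-W-tight-new} follows from \(|x|\le C\exp\{\kappa_mx^2\}\) summed over the \(m\) optimal arms.
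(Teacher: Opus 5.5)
Your proposal is correct and follows the paper's proof essentially step for step. The shared steps are: the weighted moment $\mathbb E[x_{i,t}\exp\{\kappa_0W_{i,t}^2\}]$ for optimal arms; the anti-starvation lower bound, with suboptimal arms entering only through the overshoot $(\sqrt{N_{j,t}}(\widehat\mu_{j,t}-\mu^\star))_+$, which is controlled by a union bound over pull counts using the drift $-\sqrt{n}\,\Delta_j$; the inverse-moment recursion; and a final H\"older step that removes $x_{i,t}$. The differences are cosmetic. The paper splits optimal and suboptimal contributions by Cauchy--Schwarz and shrinks $\alpha$ while keeping $\gamma=\kappa_0/(4m)$, whereas you use a single H\"older over all $K$ arms and shrink $\gamma$. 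Your version works equally well, since any fixed $\gamma>0$ eventually exceeds $1/(2\variance)$.
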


\begin{proof}
We shift rewards by \(-\mu^\star\).  Thus optimal arms have mean zero, while each suboptimal arm \(j\notin\mathcal S^\star\) has mean \(-\Delta_j\), where \(\Delta_j=\mu^\star-\mu_j>0\).

Use the universal constants \(\kappa_0,\kappa_1,\kappa_2\) from~\eqref{eq:single-step-exp-contraction}, and set \(\gamma:=\kappa_0/(4m)\).  Since \(\variance\to\infty\), for all sufficiently large \(T\) we have \(\gamma>1/(2\variance)\).  All constants denoted by \(c,C\) below may change from line to line and may depend on the fixed bandit instance and on \(\kappa_0,\kappa_1,\kappa_2,\gamma\), and on the fixed choice of \(\alpha\) after Step~4, but not on \(t\), \(T\), or \(\variance\) once \(\variance\) is large enough.

\vspace{3pt}
\noindent\textbf{Step 1: weighted exponential moment for optimal arms.}
For each \(i\in\mathcal S^\star\), based on~\eqref{eq:single-step-exp-contraction}, repeating the derivation of the one-step recursion~\eqref{eq:eq-means-weighted-one-step} gives
$
\mathbb{E}[x_{i,t+1}\exp\{\kappa_0 W_{i,t+1}^2\}\mid\mathcal F_t]
\le
\{t/(t+1)\}x_{i,t}\exp\{\kappa_0 W_{i,t}^2\}
+{C}/(t+1).
$
Taking expectations and iterating over \(K\le t\le T\) further yields
\begin{equation}\label{eq:full-q-exp-tight}
\sup_{K\le t\le T}
\mathbb{E}\left[x_{i,t}\exp\{\kappa_0 W_{i,t}^2\}\right]
\le C.
\end{equation}

\vspace{3pt}
\noindent\textbf{Step 2: exponential tails for suboptimal overshoots.}
For each \(j\notin\mathcal S^\star\), pathwisely we have
\[
\left(\sqrt{N_{j,t}}(\widehat\mu_{j,t}-\mu^\star)\right)_+
\le
\sup_{\substack{r\le T:\\ N_{j,r}\ge1}}
\left(
\frac{\sum_{s\le r:\,A_s=j}\xi_{j,s}-\Delta_j N_{j,r}}
{\sqrt{N_{j,r}}}
\right)_+.
\]
{For \(x\ge0\), note that \(\Delta_j>0\) is fixed.  Grouping the event by
\(n=N_{j,r}\), applying the fixed-\(n\) optional-stopping sub-Gaussian bound
to \(\sum_{s\le r:A_s=j}\xi_{j,s}\), and then union bounding over \(n\), gives,
with constants that may depend on \(\Delta_j\),}
\[
\mathbb P\left(
\sup_{\substack{r\le T:\\ N_{j,r}\ge1}}
\frac{\sum_{s\le r:\,A_s=j}\xi_{j,s}-\Delta_j N_{j,r}}
{\sqrt{N_{j,r}}}
\ge x
\right)
\le
\sum_{n\ge1}\exp\left\{-\frac{(\Delta_j\sqrt n+x)^2}{2}\right\}
\le C\exp\{-cx^2\}.
\]
Therefore, if \([K]\setminus\mathcal S^\star\) is nonempty, the finiteness of this set gives constants \(\lambda_\star>0\) and \(C<\infty\), depending only on the fixed bandit instance, such that, for every \(0<\lambda\le\lambda_\star\),
\begin{equation}\label{eq:subopt-overshoot-exp}
\sup_{j\notin\mathcal S^\star}
\sup_{K\le t\le T}
\mathbb{E}\left[
\exp\left\{\lambda\left(\sqrt{N_{j,t}}(\widehat\mu_{j,t}-\mu^\star)\right)_+^2\right\}
\right]
\le C.
\end{equation}
When \([K]\setminus\mathcal S^\star\) is empty, set \(\lambda_\star:=1\); all sums over suboptimal arms below are empty and~\eqref{eq:subopt-overshoot-exp} is not used.

\vspace{3pt}
\noindent\textbf{Step 3: global lower bound for the probability of sampling an optimal arm.}
For \(i\in\mathcal S^\star\), let \(p_{i,t}:=\mathbb P(A_{t+1}=i\mid\mathcal F_t)\).
For all sufficiently large \(T\), there exists \(c>0\) such that
\begin{equation}\label{eq:full-opt-global-lower}
p_{i,t}
\ge
c\exp\left\{-\gamma\left(
\sum_{\ell\in\mathcal S^\star}W_{\ell,t}^2
+
\sum_{j\notin\mathcal S^\star}
\left(\sqrt{N_{j,t}}(\widehat\mu_{j,t}-\mu^\star)\right)_+^2
\right)\right\},
\qquad i\in\mathcal S^\star.
\end{equation}
Indeed, conditional on \(\mathcal F_t\), the Thompson index of any arm \(a\) can be written as
$
\widehat\mu_{a,t}+\sqrt{{\variance}/{N_{a,t}}}Z_a
=
\mu^\star+
(
\sqrt{N_{a,t}}(\widehat\mu_{a,t}-\mu^\star)+\sqrt{\variance}Z_a
)/{
\sqrt{N_{a,t}}
}.
$
Consider the event that \(W_{i,t}+\sqrt{\variance}Z_i\ge1\), that
\(W_{\ell,t}+\sqrt{\variance}Z_\ell\le0\) for every
\(\ell\in\mathcal S^\star\) with \(\ell\ne i\), and that
\[
\sqrt{N_{j,t}}(\widehat\mu_{j,t}-\mu^\star)+\sqrt{\variance}Z_j\le0,
\qquad j\notin\mathcal S^\star.
\]
On this event, the index of arm \(i\) is at least
\(\mu^\star+1/\sqrt{N_{i,t}}\), while the index of every other optimal arm
and every suboptimal arm is at most \(\mu^\star\). Hence this event is
contained in \(\{A_{t+1}=i\}\).

We next lower bound the conditional probability of this event. Since
\(\gamma\) is fixed and \(\variance\to\infty\), for all sufficiently large
\(T\) the gap between \(\gamma\) and \(1/(2\variance)\) is large enough to
absorb the polynomial prefactor in the Mills lower bound from
Lemma~\ref{lem:normal-tail}. Thus, after decreasing \(c\) if necessary,
\[
\mathbb P(W_{i,t}+\sqrt{\variance}Z_i\ge1\mid\mathcal F_t)
\ge
c\exp\{-\gamma W_{i,t}^2\}.
\]
For the other optimal arms, the same Gaussian tail lower bound gives
\[
\mathbb P(W_{\ell,t}+\sqrt{\variance}Z_\ell\le0\mid\mathcal F_t)
\ge
c\exp\{-\gamma W_{\ell,t}^2\},
\qquad \ell\in\mathcal S^\star,\ \ell\ne i.
\]
For a suboptimal arm \(j\notin\mathcal S^\star\), if
\(\sqrt{N_{j,t}}(\widehat\mu_{j,t}-\mu^\star)\le0\), then the left-hand side
below is at least \(1/2\); otherwise Mills' lower bound applies at
\(\sqrt{N_{j,t}}(\widehat\mu_{j,t}-\mu^\star)/\sqrt{\variance}\). Hence,
again after decreasing \(c\) if necessary,
\[
\mathbb P\left(
\sqrt{N_{j,t}}(\widehat\mu_{j,t}-\mu^\star)+\sqrt{\variance}Z_j\le0
\mid\mathcal F_t
\right)
\ge
c\exp\left\{
-\gamma
\left(\sqrt{N_{j,t}}(\widehat\mu_{j,t}-\mu^\star)\right)_+^2
\right\}.
\]
Conditional on \(\mathcal F_t\), the quantities involving empirical means
are fixed and the Gaussian sampling noises are independent across arms.
Multiplying the preceding lower bounds and absorbing the resulting constant
into \(c\) proves~\eqref{eq:full-opt-global-lower}.

\vspace{3pt}
\noindent\textbf{Step 4: inverse moments of optimal proportions.}
Choose \(0<\alpha\le1\) small enough that \(2\alpha\gamma |[K]\setminus\mathcal S^\star|<\lambda_\star\).  For \(i\in\mathcal S^\star\), the first inequality in the one-step drift calculation~\eqref{eq:eq-means-inv-one-step} gives
\[
\mathbb{E}[x_{i,t+1}^{-\alpha}\mid\mathcal F_t]
\le
x_{i,t}^{-\alpha}
+\frac1t\left(Cx_{i,t}^{-\alpha}
-c x_{i,t}^{-\alpha}\frac{p_{i,t}}{x_{i,t}}\right).
\]
Combining~\eqref{eq:full-opt-global-lower} with the preceding display and then using \(C_0z^\alpha-c_0z^{\alpha+1}\le C_1-c_1z^\alpha\) for suitable positive constants \(C_1,c_1\) and all \(z\ge0\), with \(z\) equal to \(x_{i,t}^{-1}\) times the exponential factor in~\eqref{eq:full-opt-global-lower}, gives the following bound after renaming constants:
\begin{equation}\label{eq:full-inv-drift}
\mathbb{E}[x_{i,t+1}^{-\alpha}\mid\mathcal F_t]
\le
\left(1-\frac{c}{t}\right)x_{i,t}^{-\alpha}
+\frac{C}{t}
\exp\left\{\alpha\gamma\left(
\sum_{\ell\in\mathcal S^\star}W_{\ell,t}^2
+
\sum_{j\notin\mathcal S^\star}
\left(\sqrt{N_{j,t}}(\widehat\mu_{j,t}-\mu^\star)\right)_+^2
\right)\right\}.
\end{equation}
By Cauchy--Schwarz inequality, we have
		\[
		\begin{aligned}
		&\mathbb{E}\left[
		\exp\left\{\alpha\gamma\left(
		\sum_{\ell\in\mathcal S^\star}W_{\ell,t}^2
		+
		\sum_{j\notin\mathcal S^\star}
		\left(\sqrt{N_{j,t}}(\widehat\mu_{j,t}-\mu^\star)\right)_+^2
		\right)\right\}
		\right] \\
		&\qquad\le
		\left(\mathbb{E}\left[\exp\left\{2\alpha\gamma
		\sum_{\ell\in\mathcal S^\star}W_{\ell,t}^2\right\}\right]\right)^{1/2}
		\left(\mathbb{E}\left[\exp\left\{2\alpha\gamma
		\sum_{j\notin\mathcal S^\star}
		\left(\sqrt{N_{j,t}}(\widehat\mu_{j,t}-\mu^\star)\right)_+^2
		\right\}\right]\right)^{1/2}.
		\end{aligned}
		\]
			If there is no suboptimal arm, the second factor equals one. Otherwise, H\"older's inequality over the finite set \([K]\setminus\mathcal S^\star\), the choice \(2\alpha\gamma |[K]\setminus\mathcal S^\star|<\lambda_\star\), and~\eqref{eq:subopt-overshoot-exp} bound this second factor by a constant. Hence
		\begin{equation}\label{eq:full-exp-reduction-to-opt}
		\begin{aligned}
		\mathbb{E}\left[
	\exp\left\{\alpha\gamma\left(
\sum_{\ell\in\mathcal S^\star}W_{\ell,t}^2
+
\sum_{j\notin\mathcal S^\star}
\left(\sqrt{N_{j,t}}(\widehat\mu_{j,t}-\mu^\star)\right)_+^2
\right)\right\}
\right]\le
C\left(\mathbb{E}\left[\exp\left\{2\alpha\gamma
	\sum_{\ell\in\mathcal S^\star}W_{\ell,t}^2
	\right\}\right]\right)^{1/2}.
			\end{aligned}
			\end{equation}
			The choice \(\gamma=\kappa_0/(4m)\) and the bound \(\alpha\le1\) imply
		\[
		2m\alpha\gamma=\frac{\alpha\kappa_0}{2}<\kappa_0,
\qquad
\frac{2m\alpha\gamma}{\kappa_0-2m\alpha\gamma}
		=
		\frac{\alpha}{2-\alpha}
		\le\alpha.
		\]
			For each \(\ell\in\mathcal S^\star\), H\"older's inequality and~\eqref{eq:full-q-exp-tight} give
			\[
			\begin{aligned}
		\mathbb{E}\left[\exp\{2m\alpha\gamma W_{\ell,t}^2\}\right]
		=
		\mathbb{E}\left[
		\left(x_{\ell,t}\exp\{\kappa_0 W_{\ell,t}^2\}\right)^{2m\alpha\gamma/\kappa_0}
		x_{\ell,t}^{-2m\alpha\gamma/\kappa_0}
		\right]\le
			C\left(\max_{h\in\mathcal S^\star}\mathbb{E}[1+x_{h,t}^{-\alpha}]\right)^{(\kappa_0-2m\alpha\gamma)/\kappa_0},
			\end{aligned}
			\]
				where we used \(2m\alpha\gamma/(\kappa_0-2m\alpha\gamma)\le\alpha\). Applying H\"older once more over the \(m\) optimal arms, with all H\"older exponents equal to \(m\), gives
			\begin{equation}\label{eq:full-opt-sum-exp-bound}
			\begin{aligned}
		\mathbb{E}\left[\exp\left\{2\alpha\gamma
		\sum_{\ell\in\mathcal S^\star}W_{\ell,t}^2
		\right\}\right]
			 \le
			\prod_{\ell\in\mathcal S^\star}
			\left(\mathbb{E}\left[\exp\{2m\alpha\gamma W_{\ell,t}^2\}\right]\right)^{1/m}
			\le
			C\left(\max_{h\in\mathcal S^\star}\mathbb{E}[1+x_{h,t}^{-\alpha}]\right)^{(2-\alpha)/2}.
			\end{aligned}
			\end{equation}
		Combining~\eqref{eq:full-exp-reduction-to-opt} and~\eqref{eq:full-opt-sum-exp-bound} gives
	\begin{equation}\label{eq:full-total-exp-bound}
	\mathbb{E}\left[
\exp\left\{\alpha\gamma\left(
\sum_{\ell\in\mathcal S^\star}W_{\ell,t}^2
+
\sum_{j\notin\mathcal S^\star}
\left(\sqrt{N_{j,t}}(\widehat\mu_{j,t}-\mu^\star)\right)_+^2
\right)\right\}
\right]
		\le
		C\left(\max_{h\in\mathcal S^\star}\mathbb{E}[1+x_{h,t}^{-\alpha}]\right)^{(2-\alpha)/4}.
		\end{equation}
		Taking expectations in~\eqref{eq:full-inv-drift}, using~\eqref{eq:full-total-exp-bound}, and maximizing over \(i\in\mathcal S^\star\) gives
	\[
	\max_{i\in\mathcal S^\star}\mathbb{E}[1+x_{i,t+1}^{-\alpha}]
\le
\left(1-\frac{c}{t}\right)
\max_{i\in\mathcal S^\star}\mathbb{E}[1+x_{i,t}^{-\alpha}]
+\frac{C}{t}
	\left(\max_{i\in\mathcal S^\star}\mathbb{E}[1+x_{i,t}^{-\alpha}]\right)^{(2-\alpha)/4}
	+\frac{C}{t}.
	\]
			Since \((2-\alpha)/4<1\), the scalar recursion argument used after~\eqref{eq:q-inv-drift} implies
			\begin{equation}\label{eq:full-opt-inv-moment}
			\sup_{K\le t\le T}\max_{i\in\mathcal S^\star}\mathbb{E}[1+x_{i,t}^{-\alpha}]\le C.
			\end{equation}

\vspace{3pt}
\noindent\textbf{Step 5: unweighted exponential moment of optimal standardized means.}
	Choose \(0<\kappa_m<\kappa_0\) so small that \(\kappa_m/(\kappa_0-\kappa_m)\le\alpha\).  By H\"older's inequality,~\eqref{eq:full-q-exp-tight}, and~\eqref{eq:full-opt-inv-moment}, for every \(i\in\mathcal S^\star\) and \(K\le t\le T\),
\[
\mathbb{E}[\exp\{\kappa_m W_{i,t}^2\}]
=
\mathbb{E}\left[
\left(x_{i,t}\exp\{\kappa_0 W_{i,t}^2\}\right)^{\kappa_m/\kappa_0}
x_{i,t}^{-\kappa_m/\kappa_0}
\right]
\le C_m.
\]
Since \(W_{i,t}=\sqrt{N_{i,t}}(\widehat\mu_{i,t}-\mu^\star)\) for \(i\in\mathcal S^\star\), this proves the desired exponential-moment bound.  The consequent first-moment bound follows from \(|x|\le C_{\kappa_m}\exp\{\kappa_m x^2\}\), where \(C_{\kappa_m}\) depends only on \(\kappa_m\).
The constants \(\kappa_m\) and \(C_m\) depend only on the fixed bandit instance, since \(\kappa_0,\kappa_1,\kappa_2\) are universal and \(\alpha,\lambda_\star\) were chosen using only that instance.

\end{proof}

We now   finish the proof of Theorem~\ref{thm:main}(a).  For \(i\in\mathcal S^\star\) , define the within-optimal proportions
  \[
y_{i,t}:=\frac{N_{i,t}}{N_{\mathrm{opt}}(t)},
\qquad
V_t^\star:=\sum_{i\in\mathcal S^\star}\left(y_{i,t}-\frac{1}m\right)^2.
\]
  If \(A_{t+1}\notin\mathcal S^\star\), then \((N_{i,t})_{i\in\mathcal S^\star}\) is unchanged and  therefore   \(V_{t+1}^\star=V_t^\star\).  If \(A_{t+1}=i\in\mathcal S^\star\), then
\[
y_{j,t+1}=y_{j,t}+\frac{1}{N_{\mathrm{opt}}(t)+1}\bigl(\mathbf 1\{j=i\}-y_{j,t}\bigr),
\qquad j\in\mathcal S^\star.
\]
Let \(p_{i,t}:=\mathbb P(A_{t+1}=i\mid\mathcal F_t)\) for \(i\in\mathcal S^\star\), and let \(\pi_{i,t}^{\star}\) denote the winner probability among the optimal arms only:
\[
\pi_{i,t}^{\star}:=
\mathbb P\left(
 i=\argmax_{j\in\mathcal S^\star}
 \left\{\widehat\mu_{j,t}+\sqrt{\frac{\variance}{N_{j,t}}}Z_{j,t+1}\right\}
\middle|\mathcal F_t
\right).
\]
If the global winner is an optimal arm \(i\), then \(i\) is also the winner among the optimal arms.  Thus \(0\le p_{i,t}\le \pi_{i,t}^{\star}\), and \(\sum_{i\in\mathcal S^\star}(\pi_{i,t}^{\star}-p_{i,t})=\mathbb P(A_{t+1}\notin\mathcal S^\star\mid\mathcal F_t)\).
Here is the square expansion in calendar time.  If the algorithm pulls \(i\in\mathcal S^\star\), then
\[
V_{t+1}^\star-V_t^\star
=
\frac{2}{N_{\mathrm{opt}}(t)+1}
\sum_{j\in\mathcal S^\star}y_{j,t}
\bigl(\mathbf 1\{j=i\}-y_{j,t}\bigr)
+
\frac{1}{(N_{\mathrm{opt}}(t)+1)^2}
\sum_{j\in\mathcal S^\star}
\bigl(\mathbf 1\{j=i\}-y_{j,t}\bigr)^2.
\]
If \(A_{t+1}\notin\mathcal S^\star\), then \(V_{t+1}^\star=V_t^\star\).  Therefore, taking conditional expectation gives
\begin{align}
\mathbb{E}[V_{t+1}^\star-V_t^\star\mid\mathcal F_t]
&=
\frac{2}{N_{\mathrm{opt}}(t)+1}
\left(
\sum_{i\in\mathcal S^\star}y_{i,t}p_{i,t}
-
\left(\sum_{i\in\mathcal S^\star}p_{i,t}\right)
\sum_{i\in\mathcal S^\star}y_{i,t}^2
\right) \notag\\
&\qquad
+
\frac{1}{(N_{\mathrm{opt}}(t)+1)^2}
\sum_{i\in\mathcal S^\star}p_{i,t}
\sum_{j\in\mathcal S^\star}
\bigl(\mathbf 1\{j=i\}-y_{j,t}\bigr)^2 \notag\\
&\le
\frac{2}{N_{\mathrm{opt}}(t)+1}
\left(
\sum_{i\in\mathcal S^\star}y_{i,t}p_{i,t}
-
\left(\sum_{i\in\mathcal S^\star}p_{i,t}\right)
\sum_{i\in\mathcal S^\star}y_{i,t}^2
\right)
+
\frac{2}{(N_{\mathrm{opt}}(t)+1)^2},
\label{eq:full-opt-drift-start-new}
\end{align}
The final line of~\eqref{eq:full-opt-drift-start-new} uses
\(\sum_{j\in\mathcal S^\star}(\mathbf 1\{j=i\}-y_{j,t})^2\le2\).

	Using \(p_{i,t}=\pi_{i,t}^{\star}-(\pi_{i,t}^{\star}-p_{i,t})\), we decompose
	\begin{align}
	&\sum_{i\in\mathcal S^\star}y_{i,t}p_{i,t}
	-
	\left(\sum_{i\in\mathcal S^\star}p_{i,t}\right)
	\sum_{i\in\mathcal S^\star}y_{i,t}^2 \notag\\
	& \qquad =
	\left(
	\sum_{i\in\mathcal S^\star}y_{i,t}\pi_{i,t}^{\star}
-
\sum_{i\in\mathcal S^\star}y_{i,t}^2
\right)
-
\left(
	\sum_{i\in\mathcal S^\star}y_{i,t}(\pi_{i,t}^{\star}-p_{i,t})
	-
	\left(\sum_{i\in\mathcal S^\star}(\pi_{i,t}^{\star}-p_{i,t})\right)
	\sum_{i\in\mathcal S^\star}y_{i,t}^2
	\right).
	\label{eq:full-opt-drift-decomposition}
	\end{align}
	Since \(0\le y_{i,t}\le1\), \(0\le\sum_i y_{i,t}^2\le1\), and
	\(\sum_{i\in\mathcal S^\star}(\pi_{i,t}^{\star}-p_{i,t})
	=\mathbb P(A_{t+1}\notin\mathcal S^\star\mid\mathcal F_t)\),
	\begin{equation}\label{eq:full-opt-subopt-error-absolute}
	\left|
	\sum_{i\in\mathcal S^\star}y_{i,t}(\pi_{i,t}^{\star}-p_{i,t})
	-
	\left(\sum_{i\in\mathcal S^\star}(\pi_{i,t}^{\star}-p_{i,t})\right)
	\sum_{i\in\mathcal S^\star}y_{i,t}^2
	\right|
	\le
	2\mathbb P(A_{t+1}\notin\mathcal S^\star\mid\mathcal F_t).
	\end{equation}
	Combining~\eqref{eq:full-opt-drift-decomposition} and~\eqref{eq:full-opt-subopt-error-absolute} gives
	\begin{align}
\sum_{i\in\mathcal S^\star}y_{i,t}p_{i,t}
-
\left(\sum_{i\in\mathcal S^\star}p_{i,t}\right)
\sum_{i\in\mathcal S^\star}y_{i,t}^2 \le
\left(\sum_{i\in\mathcal S^\star}y_{i,t}\pi_{i,t}^{\star}
-
\sum_{i\in\mathcal S^\star}y_{i,t}^2\right)
+
2\mathbb P(A_{t+1}\notin\mathcal S^\star\mid\mathcal F_t).
\label{eq:subopt-blocking-error-new}
\end{align}
  Let \(g_i(y_t)\) be the pure-noise winner map in~\eqref{eq:def:g:variance}, restricted to the optimal arms.  By Lemma~\ref{lem:dotprod},
\[
\sum_{i\in\mathcal S^\star}y_{i,t}g_i(y_t)
-
\sum_{i\in\mathcal S^\star}y_{i,t}^2
\le -V_t^\star.
\]
Applying Lemma~\ref{lem:weighted-perturb} to the optimal arms only gives the conditional bound
\[
\sum_{i\in\mathcal S^\star}y_{i,t}|\pi_{i,t}^{\star}-g_i(y_t)|
\le
\frac{C_m}{\sqrt{\variance}}
\sum_{i\in\mathcal S^\star}
\left|N_{i,t}^{1/2}(\widehat\mu_{i,t}-\mu^\star)\right|.
\]
Combining the preceding inequalities yields the conditional drift
\begin{align}
\mathbb{E}[V_{t+1}^\star\mid\mathcal F_t]
&\le
V_t^\star
-
\frac{2}{N_{\mathrm{opt}}(t)+1}V_t^\star
+
\frac{C_m}{(N_{\mathrm{opt}}(t)+1)\sqrt{\variance}}
\sum_{i\in\mathcal S^\star}
\left|N_{i,t}^{1/2}(\widehat\mu_{i,t}-\mu^\star)\right|
\notag\\
&\qquad
+
C_m\frac{\mathbb P(A_{t+1}\notin\mathcal S^\star\mid\mathcal F_t)}{N_{\mathrm{opt}}(t)+1}
+
\frac{C_m}{(N_{\mathrm{opt}}(t)+1)^2}.
\label{eq:full-opt-conditional-drift-new}
\end{align}
  We now localize only to ensure that \(N_{\mathrm{opt}}(t)\) is of order \(t\).  Let \(t_0:=\lceil 2\variance L_T^2\rceil\), and define
\[
\zeta:=\inf\left\{t\ge K:
N_{\mathrm{sub}}(t)>\variance L_T^2\right\},
\]
   with \(\inf\emptyset=\infty\).  By~\eqref{equ:209}, \(\mathbb P(\zeta\le T)\to0\).  On \(\{\zeta>t\}\), for \(t\ge t_0\), \(N_{\mathrm{opt}}(t)=t-N_{\mathrm{sub}}(t)\ge t/2\).
We spell out the stopped recursion.
Since
\(\{\zeta>t+1\}\subseteq\{\zeta>t\}\) and \(\{\zeta>t\}\in\mathcal F_t\), the tower property gives
\[
\mathbb{E}\bigl[V_{t+1}^\star\mathbf 1\{\zeta>t+1\}\bigr] \le
\mathbb{E}\bigl[V_{t+1}^\star\mathbf 1\{\zeta>t\}\bigr]
=
\mathbb{E}\left[
\mathbf 1\{\zeta>t\}
\mathbb{E}[V_{t+1}^\star\mid\mathcal F_t]
\right].
\]
On \(\{\zeta>t\}\), for \(t\ge t_0\), we have \(N_{\mathrm{opt}}(t)\ge t/2\).  Applying~\eqref{eq:full-opt-conditional-drift-new} inside this expectation and using Lemma~\ref{lem:full-opt-exp-tight} gives
\[
\mathbb{E}\bigl[V_{t+1}^\star\mathbf 1\{\zeta>t+1\}\bigr]
\le
\left(1-\frac{2}{t+1}\right)\mathbb{E}\bigl[V_t^\star\mathbf 1\{\zeta>t\}\bigr]
+
\frac{C_m}{t\sqrt{\variance}}
+
\frac{C_m}{t}\mathbb{E}\bigl[\mathbb P(A_{t+1}\notin\mathcal S^\star\mid\mathcal F_t)\bigr]
+
\frac{C_m}{t^2}.
\]
Multiplying both sides by \(t+1\) yields
\[
(t+1)\mathbb{E}\bigl[V_{t+1}^\star\mathbf 1\{\zeta>t+1\}\bigr]
\le
(t-1)\mathbb{E}\bigl[V_t^\star\mathbf 1\{\zeta>t\}\bigr]
+
\frac{C_m}{\sqrt{\variance}}
+
C_m\mathbb{E}\bigl[\mathbb P(A_{t+1}\notin\mathcal S^\star\mid\mathcal F_t)\bigr]
+
\frac{C_m}{t}.
\]
Since \(\mathbb{E}[V_t^\star\mathbf 1\{\zeta>t\}]\ge0\), the first term is at most \(t\mathbb{E}[V_t^\star\mathbf 1\{\zeta>t\}]\).
Summing from \(t_0\) to \(T-1\), and using \(0\le V_{t_0}^\star\le1\), we obtain
\[
\mathbb{E}\bigl[V_T^\star\mathbf 1\{\zeta>T\}\bigr]
\le
\frac{t_0}{T}
+
\frac{C_m}{\sqrt{\variance}}
+
\frac{C_m}{T}
\sum_{t=t_0}^{T-1}
\mathbb{E}\bigl[\mathbb P(A_{t+1}\notin\mathcal S^\star\mid\mathcal F_t)\bigr]
+
\frac{C_m\log T}{T}.
\]
  Moreover, \(\sum_{t=K}^{T-1}\mathbb{E}[\mathbb P(A_{t+1}\notin\mathcal S^\star\mid\mathcal F_t)]\le \mathbb{E}[N_{\mathrm{sub}}(T)]=O(\variance L_T)\).  Under the assumptions \(\variance\to\infty\) and \(\variance(\log T)^2/T\to0\), we have \(t_0/T\to0\), \(1/\sqrt{\variance}\to0\), \(\mathbb{E}[N_{\mathrm{sub}}(T)]/T\to0\), and \(\log T/T\to0\).  Thus \(\mathbb{E}[V_T^\star\mathbf 1\{\zeta>T\}]\to0\).  Since \(\mathbb P(\zeta\le T)\to0\), we conclude that
\[
V_T^\star\xrightarrow{\mathbb P}0,
\qquad
\frac{N_{i,T}}{N_{\mathrm{opt}}(T)}\xrightarrow{\mathbb P}\frac{1}m,
\quad i\in\mathcal S^\star.
\]
  Finally, \(N_{\mathrm{opt}}(T)/T\xrightarrow{\mathbb P}1\) implies
\[
\frac{N_{i,T}}{T}
=
\frac{N_{i,T}}{N_{\mathrm{opt}}(T)}\frac{N_{\mathrm{opt}}(T)}{T}
\xrightarrow{\mathbb P}\frac{1}m,
\qquad i\in\mathcal S^\star.
\]
  This proves Theorem~\ref{thm:main}(a) under the stated growth conditions.

 \subsection{Proof of Theorem~\ref{thm:main}(b)} \label{appendix:thm:main:b}
In this subsection, we fix a suboptimal arm $a\notin \mathcal{S}^\star$ with gap
$\Delta_a := \mu^\star-\mu_a>0$, and prove the sharp asymptotics
\begin{equation}\label{eq:sharp-target}
	\frac{N_{a,T}}{\variance L_T}\ \xrightarrow{\mathbb P}\ \frac{2}{\Delta_a^2}.
\end{equation}
Throughout, $\{\mathcal F_t\}_{t\ge 0}$ denotes the natural filtration, and
\(\theta_{i,t+1}\) is the Gaussian sampling index drawn at round \(t+1\), conditionally
on \(\mathcal F_t\), with center \(\widehat\mu_{i,t}\) and variance
\(\variance/N_{i,t}\).
We will bound the probability of selecting suboptimal arm $a$ after it has been pulled $k$ times.
This yields an approximate geometric waiting time for the $(k+1)$-th pull.
Then summing these waiting times and applying a log-growth law for geometric sums will pin down $N_{a,T}$ at order \(\variance L_T=\variance\log(T/\variance)\) with the sharp constant.

\subsubsection{Step 1. Bounding the suboptimal selection probability and waiting time.}
Use the same burn-in level \(t_0:=\lceil 2\variance L_T^2\rceil\), recalling that \(L_T=\log(T/\variance)\).
To pull a suboptimal arm $a$, its Thompson sample must beat all optimal-arm samples.
So we first show that at least one optimal arm is well-sampled after burn-in, making \(\max_{i\in\mathcal S^\star}\theta_{i,t+1}\) unlikely to fall far below $\mu^\star$.

\begin{lemma}[Lower tail of the maximum optimal-arm index]\label{lem:theta-star-lower-tail}
	Recall the concentration event \(\mathcal E_T\) in~\eqref{equ:event} and the few-suboptimal-pulls event \(\Omega_T\) in~\eqref{equ:209}.
	Fix $\varepsilon\in(0,\Delta_a/8)$, there exists a constant $c>0$ such that on $\mathcal{E}_T\cap \Omega_T$,
	for every $t\in\{t_0,\dots,T-1\}$,
	\begin{equation}\label{eq:theta-star-lower-tail}
		\mathbb P\left(\max_{i\in\mathcal S^\star}\theta_{i,t+1}<\mu^\star-2\varepsilon\mid \mathcal F_t\right)
		\le \exp\bigl(-c \varepsilon^2 L_T^2\bigr).
	\end{equation}
\end{lemma}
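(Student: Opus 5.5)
The plan is to reduce the maximum to a single well-sampled optimal arm. Since $\max_{i\in\mathcal S^\star}\theta_{i,t+1}\ge\theta_{i^\star,t+1}$ for any $\mathcal F_t$-measurable choice $i^\star=i^\star(t)\in\mathcal S^\star$, it suffices to bound $\mathbb P(\theta_{i^\star,t+1}<\mu^\star-2\varepsilon\mid\mathcal F_t)$. We take $i^\star$ to be an optimal arm with the largest pull count at time $t$.

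\textbf{Step 1 (a well-sampled optimal arm).} On $\Omega_T$ we have $N_{\mathrm{sub}}(t)\le N_{\mathrm{sub}}(T)\le\variance L_T^2$. For $t\ge t_0=\lceil2\variance L_T^2\rceil$ this gives $N_{\mathrm{opt}}(t)\ge t-\variance L_T^2\ge t/2$. Hence
\[
N_{i^\star,t}\ge\frac{t}{2m}\ge\frac{t_0}{2m}\ge\frac{\variance L_T^2}{m}.
\]

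\textbf{Step 2 (empirical mean close to $\mu^\star$).} On $\mathcal E_T$ we have $|\widehat\mu_{i^\star,t}-\mu^\star|\le\sqrt{C\log\log T/N_{i^\star,t}}\le\sqrt{Cm\log\log T/(\variance L_T^2)}$. We then need this to be at most $\varepsilon$ for large $T$. Since $\variance\to\infty$ and $L_T\to\infty$, it suffices that $\log\log T=o(\variance L_T^2)$. This is the one delicate point, because $L_T=\log(T/\variance)$ could in principle be much smaller than $\log T$. However, the condition $\variance(\log T)^2/T\to0$ forces $\variance\le T/(\log T)^2$, so $L_T\ge 2\log\log T$. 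Therefore $\variance L_T^2\ge\variance\cdot4(\log\log T)^2\gg\log\log T$. Consequently $\widehat\mu_{i^\star,t}\ge\mu^\star-\varepsilon$ for all large $T$, uniformly in $t\in\{t_0,\dots,T-1\}$.

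\textbf{Step 3 (Gaussian tail).} Conditionally on $\mathcal F_t$, $\theta_{i^\star,t+1}\sim\mathcal N(\widehat\mu_{i^\star,t},\variance/N_{i^\star,t})$, and by Step 2 its mean is at least $\mu^\star-\varepsilon$. The standard Gaussian tail bound then gives
\[
\mathbb P\bigl(\theta_{i^\star,t+1}<\mu^\star-2\varepsilon\mid\mathcal F_t\bigr)\le\exp\Bigl\{-\frac{\varepsilon^2N_{i^\star,t}}{2\variance}\Bigr\}\le\exp\Bigl\{-\frac{\varepsilon^2L_T^2}{2m}\Bigr\}.
\]
This yields \eqref{eq:theta-star-lower-tail} with $c=1/(2m)$, valid for all sufficiently large $T$. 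Any finitely many small $T$ can be absorbed, since the statement is asymptotic, or by shrinking $c$.

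\textbf{Main obstacle.} The only nontrivial step is Step 2: making sure the $\log\log T$ concentration radius is dominated by $\varepsilon$ at count scale $\variance L_T^2$ when $L_T$ may be much smaller than $\log T$. This is resolved by the upper growth condition on $\variance$, as shown above. Note also that measurability of $i^\star$ with respect to $\mathcal F_t$ is what lets us condition cleanly in Step 3.
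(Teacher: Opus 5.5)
Your proposal is correct and follows essentially the same route as the paper: on $\Omega_T$ after the burn-in $t_0$, you select an ($\mathcal F_t$-measurable) optimal arm with at least $t/(2m)\ge \variance L_T^2/m$ pulls, use $\mathcal E_T$ to place its empirical mean above $\mu^\star-\varepsilon$, and apply the Gaussian tail bound to obtain $c=1/(2m)$. Your Step~2 check that $\log\log T=o(\variance L_T^2)$, via $L_T\ge 2\log\log T$, makes explicit the ``for $T$ large enough'' step that the paper leaves implicit.
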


\begin{proof}[Proof of Lemma~\ref{lem:theta-star-lower-tail}]
	On $\Omega_T$, for any $t\in\{t_0,\dots,T-1\}$ we have \(N_{\mathrm{opt}}(t)=t-N_{\mathrm{sub}}(t)\ge t-N_{\mathrm{sub}}(T)\ge t/2\).
	By averaging over $\mathcal{S}^\star$, there exists \(i\in \mathcal{S}^\star\) with \(N_{i,t}\ge t/(2|\mathcal{S}^\star|)\ge \variance L_T^2/|\mathcal{S}^\star|\).
	On $\mathcal{E}_T$, for $T$ large enough,
	we have \(\widehat{\mu}_{i,t}\ge \mu^\star-\varepsilon\).
	Since \(\theta_{i,t+1}\sim \mathcal{N}(\widehat{\mu}_{i,t},\variance/N_{i,t})\) conditionally on $\mathcal F_t$,
	\[
	\mathbb P(\max_{i\in\mathcal S^\star}\theta_{i,t+1}<\mu^\star-2\varepsilon\mid\mathcal F_t)
	\le
	\mathbb P(\theta_{i,t+1}<\mu^\star-2\varepsilon\mid\mathcal F_t)
	=
		\Phi \left(-\varepsilon\sqrt{\frac{N_{i,t}}{\variance}}\right).
	\]
	Using the standard Gaussian tail bound \(\Phi(-x)\le \exp\{-x^2/2\}\) and the preceding lower bound on \(N_{i,t}\) yields~\eqref{eq:theta-star-lower-tail}.
\end{proof}

We next bound the one-step selection probability for the fixed suboptimal arm \(a\).  If \(k=N_{a,t}\), then conditionally on \(\mathcal F_t\), \(\theta_{a,t+1}\sim \mathcal{N}(\widehat{\mu}_{a,t},\variance/k)\).

Choose fixed constants \(0<c<C<\infty\), depending only on the fixed instance and \(\varepsilon\), with \(C\) sufficiently large and \(c\) sufficiently small for the bounds below.
On \(\mathcal E_T\), \(|\widehat\mu_{a,t}-\mu_a|\le \varepsilon/2\) whenever \(N_{a,t}\ge \lceil 4C\log\log T/\varepsilon^2\rceil\).
Recall \(L_T=\log(T/\variance)\) from~\eqref{eq:LT-def}. Because we no longer restrict \(\variance\) to be sub-polynomial in \(T\), the lower bound below keeps one deterministic competitor-loss factor:
\[
r_T
 :=
 C\left\{1+\frac{\log\log T}{\variance}+\log\left(1+\frac{\log\log T}{\variance}\right)\right\}
 =
 o(L_T).
\]
Indeed, \(r_T=o(L_T)\): if \(\variance\le\sqrt{\log\log T}\), then \(L_T\sim\log T\) and \(r_T=O(\log\log T/\variance+\log\log\log T)=o(L_T)\); if \(\variance>\sqrt{\log\log T}\), then \(r_T=O(\sqrt{\log\log T})=o(L_T)\), because \(\variance(\log T)^2/T\to0\) implies \(L_T-2\log\log T\to\infty\).
Define
\begin{equation}\label{eq:alpha-pm-def}
\alpha_+(\varepsilon):=\frac{(\Delta_a-3\varepsilon)^2}{2}, \qquad
\alpha_-(\varepsilon):=\frac{(\Delta_a+3\varepsilon)^2}{2},
\end{equation}
and
\begin{equation}\label{eq:k-star-def}
k_0(T):=\max\left\{\left\lceil 4C\log\log T/\varepsilon^2\right\rceil,
\left\lceil \frac{\variance\left(\sqrt{L_T}+r_T+\log L_T\right)}{4(\Delta_a+3\varepsilon)^2}\right\rceil\right\}.
\end{equation}
Then the cutoff in~\eqref{eq:k-star-def} satisfies
$
\sqrt{L_T}+r_T+\log L_T\to\infty,
\sqrt{L_T}+r_T+\log L_T=o(L_T),
$
and hence \(k_0(T)/(\variance L_T)\to0\) and \(k_0(T)/\variance\to\infty\).
The following conclusion gives the one-step selection bounds only on the range used below.
\begin{lemma}[Exponential bounds on one-step selection]\label{lem:one-step-bounds}
    With the fixed constants \(c,C\) chosen above, on \(\mathcal{E}_T\cap \Omega_T\), for all \(t\in\{t_0,\dots,T-1\}\) satisfying \(k_0(T)\le k=N_{a,t}\le 4\variance L_T/(\Delta_a-3\varepsilon)^2\),
    \begin{equation}
        p^-_{k,T}(\varepsilon)\le \mathbb P(A_{t+1}=a\mid \mathcal F_t)\le p^+_{k,T}(\varepsilon),
        \quad \text{where }
        \left\{
        \begin{aligned}
        p^+_{k,T}(\varepsilon)
        &:=C\sqrt{\frac{\variance}{k}}\cdot
        \exp\left\{-\alpha_+(\varepsilon)\frac{k}{\variance}\right\},\\
        p^-_{k,T}(\varepsilon)
        &:=c\exp\{-r_T\}\cdot\sqrt{\frac{\variance}{k}}\cdot
        \exp\left\{-\alpha_-(\varepsilon)\frac{k}{\variance}\right\}.
        \end{aligned}
        \right.
    \end{equation}
	Here \(k_0(T)\) is the cutoff defined in~\eqref{eq:k-star-def}; in particular \(k_0(T)/\variance\to\infty\) and \(k_0(T)=o(\variance L_T)\).
\end{lemma}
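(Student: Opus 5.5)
Conditionally on $\mathcal F_t$ the indices are independent, so I will write
\[
\mathbb P(A_{t+1}=a\mid\mathcal F_t)=\mathbb E\Bigl[\overline\Phi\Bigl(\sqrt{k/\variance}\,(M_t-\widehat\mu_{a,t})\Bigr)\Bigm|\mathcal F_t\Bigr],
\qquad M_t:=\max_{j\ne a}\theta_{j,t+1},
\]
where $k=N_{a,t}$. On $\mathcal E_T$ with $k\ge k_0(T)\ge\lceil 4C\log\log T/\varepsilon^2\rceil$ I have $|\widehat\mu_{a,t}-\mu_a|\le\varepsilon/2$. The upper and lower bounds then reduce to locating $M_t$ near $\mu^\star$ and applying the two-sided Mills ratio (Lemma~\ref{lem:normal-tail}) to $\overline\Phi$. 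The arguments of $\overline\Phi$ will be of order $\sqrt{k/\variance}\to\infty$, since $k_0(T)/\variance\to\infty$; this produces the prefactor $\sqrt{\variance/k}$.

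\textbf{Upper bound.} I split on the event $\{M_t\ge\mu^\star-2\varepsilon\}$. On this event the gap $M_t-\widehat\mu_{a,t}$ is at least $\Delta_a-3\varepsilon$, so Mills gives the bound $C\sqrt{\variance/k}\exp\{-\alpha_+(\varepsilon)k/\variance\}$. On the complement I bound the probability by Lemma~\ref{lem:theta-star-lower-tail}, which gives $\exp\{-c\varepsilon^2L_T^2\}$. I then have to absorb this second term into $p^+_{k,T}$ on the range $k\le 4\variance L_T/(\Delta_a-3\varepsilon)^2$. There $\alpha_+k/\variance\le 2L_T$ and $\sqrt{\variance/k}\ge cL_T^{-1/2}$, so $p^+_{k,T}\ge c\,e^{-2L_T}L_T^{-1/2}$. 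This dominates $e^{-c\varepsilon^2L_T^2}$ for large $T$ because $L_T\to\infty$.

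\textbf{Lower bound.} I need $M_t\le\mu^\star+2\varepsilon$ with conditional probability at least $ce^{-r_T}$. I will use independence and handle each arm $j\ne a$ separately by requiring $\theta_{j,t+1}\le\mu^\star+2\varepsilon$.
\begin{itemize}
\item For a suboptimal $j$, on $\mathcal E_T$ its center is at most $\mu_j+\sqrt{C\log\log T/N_{j,t}}$, and its standardized threshold is $\sqrt{N_{j,t}/\variance}\,(\mu^\star+2\varepsilon-\widehat\mu_{j,t})$. When this is nonnegative the probability is at least $1/2$. Otherwise its negative part is at most $\sqrt{C\log\log T/\variance}$, and the Gaussian lower tail gives probability at least $c\exp\{-C\log\log T/\variance\}$, possibly with a polynomial factor $(1+\sqrt{\log\log T/\variance})^{-1}$. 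This is exactly why $r_T$ contains $\log\log T/\variance+\log(1+\log\log T/\variance)$.
\item Optimal arms are handled the same way, with $\mu_j=\mu^\star$.
\end{itemize}
Multiplying over the finitely many arms gives $\mathbb P(M_t\le\mu^\star+2\varepsilon\mid\mathcal F_t)\ge ce^{-r_T}$. On this event the gap is at most $\Delta_a+3\varepsilon$, and the lower Mills bound $\overline\Phi(x)\ge c\,x^{-1}e^{-x^2/2}$ for $x\ge1$ yields $p^-_{k,T}$.

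\textbf{Main obstacle.} I expect the hardest step to be this lower-bound competitor factor, specifically ensuring it is uniform in $N_{j,t}$ and allowing arbitrarily small counts for suboptimal arms. A small $N_{j,t}$ gives a huge index variance, but the event $\theta_{j,t+1}\le\mu^\star+2\varepsilon$ then has probability near $1/2$, so the bound holds regardless. I therefore need to argue via the standardized threshold and not via concentration of $\widehat\mu_{j,t}$ alone. The constants $c,C$ depend only on $\varepsilon$, the instance, and $K$.
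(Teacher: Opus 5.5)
Your proposal is correct and follows essentially the same argument as the paper. The upper bound splits off the event that the competitors' maximum falls below $\mu^\star-2\varepsilon$; that event is controlled by Lemma~\ref{lem:theta-star-lower-tail} and absorbed because $p^+_{k,T}(\varepsilon)\ge c\,L_T^{-1/2}e^{-2L_T}$ on the allowed range of $k$. The lower bound multiplies the per-competitor standardized-threshold bound $\Phi(-\sqrt{C\log\log T/\variance})$, which yields $e^{-r_T}$, by the Mills lower bound for arm $a$.

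The only difference is cosmetic. You write the selection probability as $\mathbb E[\overline\Phi(\sqrt{k/\variance}\,(M_t-\widehat\mu_{a,t}))\mid\mathcal F_t]$ with a common competitor threshold $\mu^\star+2\varepsilon$. The paper instead works directly with the event $\{\theta_{a,t+1}\ge\mu^\star+2\varepsilon\}\cap\{\theta_{b,t+1}\le\mu^\star+\varepsilon\ \forall b\ne a\}$.
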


\begin{proof}[Proof of Lemma~\ref{lem:one-step-bounds}]
    For the upper bound,
    \[
    \{A_{t+1}=a\}\subseteq
    \{\theta_{a,t+1}\ge \mu^\star-2\varepsilon\}\cup\{\max_{i\in\mathcal S^\star}\theta_{i,t+1}<\mu^\star-2\varepsilon\}.
    \]
    By Lemma~\ref{lem:theta-star-lower-tail}, the second event has conditional probability at most
    \(\exp\{-c\varepsilon^2L_T^2\}\). On \(\mathcal E_T\) and \(k\ge \lceil 4C\log\log T/\varepsilon^2\rceil\),
    \(\widehat\mu_{a,t}\le \mu^\star-\Delta_a+\varepsilon\), so
    \[
    \mathbb P(\theta_{a,t+1}\ge \mu^\star-2\varepsilon\mid \mathcal F_t)
    \le \mathbb P\left(Z\ge(\Delta_a-3\varepsilon)\sqrt{\frac{k}{\variance}}\right).
    \]
    Mills' upper bound in Lemma~\ref{lem:normal-tail} gives the displayed upper bound.  Since \(k\le 4\variance L_T/(\Delta_a-3\varepsilon)^2\), this displayed upper bound is at least a constant multiple of \(L_T^{-1/2}\exp\{-2L_T\}\).  The lower-tail term above is \(o(L_T^{-1/2}\exp\{-2L_T\})\).  The chosen value of \(C\) absorbs this term, proving the upper bound.

    For the lower bound, use the event that \(\theta_{a,t+1}\ge \mu^\star+2\varepsilon\) and \(\theta_{b,t+1}\le \mu^\star+\varepsilon\) for every \(b\ne a\).  On this event, arm \(a\) is selected.  On \(\mathcal E_T\), for every \(b\ne a\), since \(\mu_b\le \mu^\star\), \((\mu^\star+\varepsilon-\widehat\mu_{b,t})/\sqrt{\variance/N_{b,t}}\ge -\sqrt{C\log\log T/\variance}\).
    By conditional independence of the Gaussian indices, \(\prod_{b\ne a}\mathbb P(\theta_{b,t+1}\le \mu^\star+\varepsilon\mid \mathcal F_t)\ge \Phi(-\sqrt{C\log\log T/\variance})^{K-1}\).
    The standard lower bound for \(\Phi(-x)\), applied separately for \(x\le1\) and \(x>1\), gives
    \[
    -\log\Phi\left(-\sqrt{\frac{C\log\log T}{\variance}}\right)
    \le C\left\{1+\frac{\log\log T}{\variance}
    +\log\left(1+\frac{\log\log T}{\variance}\right)\right\}.
    \]
    By the choice of \(C\) in \(r_T\), this gives \(\prod_{b\ne a}\mathbb P(\theta_{b,t+1}\le \mu^\star+\varepsilon\mid \mathcal F_t)\ge \exp\{-r_T\}\).
    For arm \(a\), \(\widehat\mu_{a,t}\ge \mu_a-\varepsilon=\mu^\star-\Delta_a-\varepsilon\), and hence
    \[
    \mathbb P(\theta_{a,t+1}\ge \mu^\star+2\varepsilon\mid \mathcal F_t)
    \ge \mathbb P\left(Z\ge(\Delta_a+3\varepsilon)\sqrt{\frac{k}{\variance}}\right).
    \]
    Because \(k\ge k_0(T)\) implies \(k/\variance\to\infty\), Mills' lower bound yields the displayed lower bound after decreasing the fixed constant \(c\) if necessary.
\end{proof}

Once arm \(a\) has been pulled \(N_{a,t}\) times, its Gaussian sampling index variance stays at \(\variance/N_{a,t}\) until the next pull of \(a\).  Let \(T_a(k)\) be the time of the \(k\)-th pull of arm \(a\), with \(T_a(k)=\infty\) if the \(k\)-th pull never occurs and \(T_a(0)=0\).

\begin{lemma}[Geometric coupling for waiting times]\label{lem:geom-coupling}
    Recall \(k_0(T)\) from~\eqref{eq:k-star-def} and \(t_0=\lceil 2\variance L_T^2\rceil\). On an enlarged probability space, first generate independent uniform blocks \(\{U_{k,s}:s\ge1\}\), independently over all integers \(k\) satisfying \(k_0(T)\le k\le 4\variance L_T/(\Delta_a-3\varepsilon)^2\), and define
    \[
    G_k^+\sim\mathrm{Geom}(p^+_{k,T}(\varepsilon)),\qquad
    G_k^-\sim\mathrm{Geom}(p^-_{k,T}(\varepsilon)),
    \]
    from these uniforms.  Then each family is independent over \(k\).  The signs refer to the one-step probability bounds, so \(G_k^+\) is the stochastically smaller waiting time.  On \(\mathcal E_T\cap\Omega_T\), if \(T_a(k)\in[t_0,T]\), the same uniforms give the pathwise monotone coupling with the remaining horizon written explicitly:
    \begin{equation}\label{eq:geom-coupling-bound}
        G_k^+\wedge (T-T_a(k)+1)\le (T_a(k+1)-T_a(k))\wedge (T-T_a(k)+1)\le G_k^-\wedge (T-T_a(k)+1)\qquad \text{a.s.}
    \end{equation}
\end{lemma}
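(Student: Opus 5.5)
The plan is the standard inverse-CDF monotone coupling, driven round by round by the one-step bounds of Lemma~\ref{lem:one-step-bounds}.

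\textbf{Uniforms and geometric variables.} For each admissible $k$, let $\{U_{k,s}:s\ge1\}$ be i.i.d.\ $\mathrm{Unif}(0,1)$. These blocks are independent of each other and of everything else: the rewards, the Gaussian index noises $Z$, and any auxiliary randomization. Define
\[
G_k^\pm:=\inf\{s\ge1:\ U_{k,s}\le p^\pm_{k,T}(\varepsilon)\}.
\]
Then $G_k^\pm\sim\mathrm{Geom}(p^\pm_{k,T}(\varepsilon))$, and each family is independent over $k$ because the blocks are. Since $p^-_{k,T}\le p^+_{k,T}$, we get $G_k^+\le G_k^-$ pathwise.

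\textbf{Re-realizing the algorithm.} Rather than coupling to the original path, we construct the algorithm on the enlarged space so that, after the $k$-th pull of $a$, the decision "pull $a$ or not" at the $s$-th subsequent round is driven by $U_{k,s}$. Write $t=T_a(k)+s-1$ and $q_t:=\mathbb P(A_{t+1}=a\mid\mathcal F_t)$. At round $t+1$, set $A_{t+1}=a$ iff $U_{k,s}\le q_t$.

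If $A_{t+1}\ne a$, we still need the identity of the chosen arm among $[K]\setminus\{a\}$. Draw it from the conditional law of the TS argmax given $\{A_{t+1}\ne a\}$ and $\mathcal F_t$, using fresh independent randomness. Then observe the reward as usual. By construction, the conditional law of $A_{t+1}$ given $\mathcal F_t$ is exactly the TS law, so the joint law of the trajectory is unchanged. The distributional statements about $N_{a,T}$ may therefore be proven on this realization.

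Outside the admissible range of $k$, or before time $t_0$, we use the ordinary TS draw instead.

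\textbf{Sandwich.} Fix $k$ in the admissible window and suppose $T_a(k)\in[t_0,T]$. Consider rounds $t\in\{T_a(k),\dots,T-1\}$ before the next pull of $a$. During such rounds $N_{a,t}=k$, so $k_0(T)\le N_{a,t}\le4\variance L_T/(\Delta_a-3\varepsilon)^2$ and $t\ge t_0$. On $\mathcal E_T\cap\Omega_T$, Lemma~\ref{lem:one-step-bounds} then gives
\[
p^-_{k,T}\le q_t\le p^+_{k,T}.
\]
Hence, at each such round:
\begin{itemize}
\item $U_{k,s}\le p^-_{k,T}$ forces a pull of $a$;
\item a pull of $a$ forces $U_{k,s}\le p^+_{k,T}$.
\end{itemize}
The waiting time $T_a(k+1)-T_a(k)$ is the first $s$ with $U_{k,s}\le q_{T_a(k)+s-1}$. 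It is therefore at least the first $s$ with $U_{k,s}\le p^+$, namely $G_k^+$, and at most $G_k^-$.

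Both comparisons only hold while the round index stays $\le T-1$. This is why everything is truncated at the remaining horizon $T-T_a(k)+1$: after truncation, only rounds within $[T_a(k),T-1]$ matter. This gives \eqref{eq:geom-coupling-bound}.

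\textbf{Main obstacle.} The events $\mathcal E_T$ and $\Omega_T$ are not $\mathcal F_t$-measurable; they depend on the entire horizon. So the bounds $p^\pm$ cannot be applied round-by-round without care. The fix is to note that $\mathcal E_T\cap\Omega_T$ implies the per-round $\mathcal F_t$-measurable conditions used in the proof of Lemma~\ref{lem:one-step-bounds}. Those conditions are:
\begin{itemize}
\item the concentration bound on $\widehat\mu_{b,t}$ for all $b$, up to time $t$;
\item $N_{\mathrm{sub}}(t)\le\variance L_T^2$.
\end{itemize}
On the whole event, these hold at every $t\le T$, so the pathwise inequality holds at every relevant round. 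No stopping argument is needed, because the claim is stated pathwise on the event rather than in distribution.

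The remaining care is to check that the re-realized process has the TS law while using $U_{k,s}$ measurably. This holds because $q_t$ is $\mathcal F_t$-measurable and $U_{k,s}$ is independent of $\mathcal F_t$.
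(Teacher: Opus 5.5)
Your proposal is correct and follows essentially the same route as the paper. Both realize the pull indicator of arm $a$ in the $k$-th waiting block as $\mathbf 1\{U_{k,s}\le \mathbb P(A_{T_a(k)+s}=a\mid\mathcal F_{T_a(k)+s-1})\}$, sandwich it between $\mathbf 1\{U_{k,s}\le p^-_{k,T}(\varepsilon)\}$ and $\mathbf 1\{U_{k,s}\le p^+_{k,T}(\varepsilon)\}$ via Lemma~\ref{lem:one-step-bounds}, and take first-success times truncated at the remaining horizon. Your extra details only spell out points the paper leaves implicit and do not change the argument: you fill in the identity of the non-$a$ arm from the conditional TS law using fresh randomness, and you observe that the one-step bounds use only $\mathcal F_t$-measurable consequences of $\mathcal E_T\cap\Omega_T$.
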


\begin{proof}[Proof of Lemma~\ref{lem:geom-coupling}]
    The uniforms in the statement are fixed before restricting to \(\mathcal E_T\cap\Omega_T\), and
    \(G_k^+:=\inf\{s\ge1:U_{k,s}\le p^+_{k,T}(\varepsilon)\}\) and
    \(G_k^-:=\inf\{s\ge1:U_{k,s}\le p^-_{k,T}(\varepsilon)\}\).
    Fix \(k\) with \(T_a(k)\in[t_0,T]\). If \(T_a(k)=T\), the claim is immediate. Otherwise, on the enlarged space, use the same uniform block to realize the waiting-block pull indicators. For every step \(s\) before the next pull of arm \(a\) and before the horizon, set
    \[
    \mathbf 1\{A_{T_a(k)+s}=a\}
    =
    \mathbf 1\left\{
    U_{k,s}\le
    \mathbb P(A_{T_a(k)+s}=a\mid\mathcal F_{T_a(k)+s-1})
    \right\}.
    \]
    Conditionally on \(\mathcal F_{T_a(k)+s-1}\), the right-hand side is Bernoulli with success probability \(\mathbb P(A_{T_a(k)+s}=a\mid\mathcal F_{T_a(k)+s-1})\), so this realization has the same conditional law as the original algorithm's pull indicator. Before the next pull of arm \(a\), its current count remains \(k\). Lemma~\ref{lem:one-step-bounds} then gives
    \[
    p^-_{k,T}(\varepsilon)\le
    \mathbb P(A_{T_a(k)+s}=a\mid\mathcal F_{T_a(k)+s-1})
    \le p^+_{k,T}(\varepsilon).
    \]
    Therefore, on such steps,
    \[
    \mathbf 1\{U_{k,s}\le p^-_{k,T}(\varepsilon)\}
    \le \mathbf 1\{A_{T_a(k)+s}=a\}
    \le \mathbf 1\{U_{k,s}\le p^+_{k,T}(\varepsilon)\}.
    \]
    Taking first success times in this pathwise comparison, with truncation at \(T-T_a(k)+1\) if no success occurs before the horizon, yields~\eqref{eq:geom-coupling-bound}.
\end{proof}

\subsubsection{Step 2. Finishing the proof for a fixed suboptimal arm.}
Recall \(\alpha_\pm(\varepsilon)\) from~\eqref{eq:alpha-pm-def} and \(k_0(T)\) from~\eqref{eq:k-star-def}.
By Lemma~\ref{lem:one-step-bounds}, \(p^+_{k,T}(\varepsilon)\) and \(p^-_{k,T}(\varepsilon)\) have exponential rates \(\alpha_+(\varepsilon)\) and \(\alpha_-(\varepsilon)\), respectively.  The extra factor \(\exp\{-r_T\}\) in \(p^-_{k,T}\) is covered by the tail log-law below, because \(r_T=o(L_T)\).
Fix \(\delta\in(0,1)\) and define
\begin{equation}
n^-(T):=\left\lfloor \frac{(1-\delta)\variance L_T}{\alpha_-(\varepsilon)}\right\rfloor,
\qquad
n^+(T):=\left\lceil \frac{(1+\delta)\variance L_T}{\alpha_+(\varepsilon)}\right\rceil.
\end{equation}
Since \(k_0(T)=o(\variance L_T)\) by~\eqref{eq:k-star-def}, both counts are larger than \(k_0(T)\) for large \(T\). Also, because \(\delta<1\), \(n^+(T)\le 4\variance L_T/(\Delta_a-3\varepsilon)^2\) for all large \(T\).
For any \(n\), \(\{N_{a,T}\ge n\}\Longleftrightarrow\{T_a(n)\le T\}\).

\begin{lemma}[Reaching \(k_0(T)\) pulls takes negligible time]\label{lem:Ta-kstar-negligible}
Recalling~\eqref{eq:k-star-def},
\begin{equation}\label{eq:Tkstar-small}
    \frac{T_a(k_0(T))}{T}\xrightarrow{\mathbb P}0.
\end{equation}
\end{lemma}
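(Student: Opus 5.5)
The plan is to show that, on the concentration event \(\mathcal E_T\) of~\eqref{equ:event}, arm \(a\) is selected with a uniformly non-negligible probability \(q_T\) at every round where \(N_{a,t}<k_0(T)\). Then \(T_a(k_0(T))\) is stochastically dominated by a sum of \(k_0(T)\) independent \(\mathrm{Geom}(q_T)\) variables. It therefore suffices to check that \(k_0(T)/q_T=o(T)\). The event \(\Omega_T\) and the burn-in \(t_0\) are not needed here, because the lower-bound construction only uses \(\mathcal E_T\).

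\textbf{Step 1 (uniform one-step lower bound below the cutoff).} I would reuse the lower-bound event from the proof of Lemma~\ref{lem:one-step-bounds}: \(\theta_{a,t+1}\ge\mu^\star+2\varepsilon\) and \(\theta_{b,t+1}\le\mu^\star+\varepsilon\) for all \(b\ne a\). Exactly as in that proof, on \(\mathcal E_T\) the competitor factor is at least \(\exp\{-r_T\}\) for every \(t\), whatever the counts. For arm \(a\) with \(k=N_{a,t}<k_0(T)\), on \(\mathcal E_T\) I would use \(\widehat\mu_{a,t}\ge\mu_a-\sqrt{C\log\log T/k}\). The required standardized overshoot is then at most
\[
x_k:=(\Delta_a+2\varepsilon)\sqrt{k/\variance}+\sqrt{C\log\log T/\variance}.
\]
Next I would apply \(\overline\Phi(x)\ge c\,e^{-x^2}\) for \(x\ge0\) (Lemma~\ref{lem:normal-tail} with slack) together with \((u+v)^2\le2u^2+2v^2\). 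This gives \(\overline\Phi(x_k)\ge c\exp\{-2(\Delta_a+2\varepsilon)^2k_0(T)/\variance-r_T\}\) after enlarging the constant in \(r_T\). Consequently, on \(\mathcal E_T\), whenever \(N_{a,t}<k_0(T)\),
\[
\mathbb P(A_{t+1}=a\mid\mathcal F_t)\ge q_T:=c\exp\Bigl\{-2r_T-\frac{2(\Delta_a+2\varepsilon)^2k_0(T)}{\variance}\Bigr\}.
\]
By~\eqref{eq:k-star-def}, \(k_0(T)/\variance=O(\log\log T/\variance+\sqrt{L_T}+r_T+\log L_T)\). The \(\log\log T/\variance\) piece is dominated by \(r_T\), so \(-\log q_T=o(L_T)\), that is, \(q_T=(T/\variance)^{-o(1)}\).

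\textbf{Step 2 (coupling and conclusion).} Following Lemma~\ref{lem:geom-coupling}, I would generate independent uniforms \(U_{k,s}\) for \(k<k_0(T)\), realize pull indicators as \(\mathbf 1\{U_{k,s}\le\mathbb P(A_{t+1}=a\mid\mathcal F_t)\}\), and set \(\tilde G_k:=\inf\{s:U_{k,s}\le q_T\}\sim\mathrm{Geom}(q_T)\). On \(\mathcal E_T\) this gives \(T_a(k+1)-T_a(k)\le\tilde G_k\) for \(k_{\rm init}\le k<k_0(T)\), with truncation at the horizon as before. Hence on \(\mathcal E_T\),
\[
T_a(k_0(T))\le K+\sum_{k<k_0(T)}\tilde G_k.
\]
For fixed \(\eta>0\), Markov's inequality then yields
\[
\mathbb P\bigl(T_a(k_0(T))>\eta T\bigr)\le\mathbb P(\mathcal E_T^c)+\frac{K+k_0(T)/q_T}{\eta T}.
\]
The first term vanishes by~\eqref{equ:highprob}. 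For the second, \(k_0(T)\le\variance L_T\) eventually, so
\[
\frac{k_0(T)}{q_T T}\le\frac{L_T\,(T/\variance)^{o(1)}}{T/\variance}=(T/\variance)^{-1+o(1)}\to0,
\]
since \(T/\variance\to\infty\) and \(L_T=\log(T/\variance)\).

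\textbf{Main obstacle.} The delicate point is Step 1, where the bound must hold uniformly for small \(k\). There the concentration radius \(\sqrt{C\log\log T/k}\) is not small, and the Mills prefactor behaves differently for small versus large \(x_k\). I would handle both issues by the crude bound \(\overline\Phi(x)\ge ce^{-x^2}\), which is valid for all \(x\ge0\), and by charging the \(\log\log T/\variance\) term to \(r_T\). I would also check carefully that the exponent \(k_0(T)/\variance\) is genuinely \(o(L_T)\); this is exactly what the choice of \(k_0(T)\) in~\eqref{eq:k-star-def} and the fact \(r_T=o(L_T)\) guarantee.
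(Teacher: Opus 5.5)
Your proposal is correct and follows essentially the same route as the paper. Both arguments give a uniform lower bound on \(\mathbb P(A_{t+1}=a\mid\mathcal F_t)\) on \(\mathcal E_T\) whenever \(N_{a,t}<k_0(T)\), couple the waiting times with a sum of \(k_0(T)\) geometric variables, and apply Markov's inequality using \(k_0(T)/q_T=\variance\exp\{o(L_T)\}=o(T)\). The differences are cosmetic: you correctly note that the lower-bound construction needs neither \(\Omega_T\) nor the burn-in \(t_0\), and you use the crude bound \(\overline\Phi(x)\ge c\,e^{-x^2}\), whereas the paper bounds the overshoot by \((\Delta_a+3\varepsilon)\sqrt{k_0(T)/\variance}\) and reuses \(p^-_{k_0(T),T}(\varepsilon)\).
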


\begin{proof}[Proof of Lemma~\ref{lem:Ta-kstar-negligible}]
Since \(t_0/T\to0\), it suffices to control the additional time after \(t_0\).  On \(\mathcal E_T\cap\Omega_T\), for all large \(T\) and all \(t\in\{t_0,\dots,T-1\}\) with \(N_{a,t}\le k_0(T)\), \(\mathbb P(A_{t+1}=a\mid\mathcal F_t)\ge p^-_{k_0(T),T}(\varepsilon)\).
Indeed, if \(N_{a,t}=k_0(T)\), this follows from Lemma~\ref{lem:one-step-bounds}.  If \(N_{a,t}=k<k_0(T)\), then on \(\mathcal E_T\),
\[
\frac{\mu^\star+2\varepsilon-\widehat\mu_{a,t}}{\sqrt{\variance/k}}
\le
(\Delta_a+2\varepsilon)\sqrt{\frac{k}{\variance}}
+
\sqrt{\frac{C\log\log T}{\variance}}
\le
(\Delta_a+3\varepsilon)\sqrt{\frac{k_0(T)}{\variance}},
\]
The step to \((\Delta_a+3\varepsilon)\sqrt{k_0(T)/\variance}\) uses \(k_0(T)\ge \lceil 4C\log\log T/\varepsilon^2\rceil\).  The same competitor bound as in Lemma~\ref{lem:one-step-bounds}, together with Mills' lower bound at \((\Delta_a+3\varepsilon)\sqrt{k_0(T)/\variance}\), gives \(\mathbb P(A_{t+1}=a\mid\mathcal F_t)\ge p^-_{k_0(T),T}(\varepsilon)\).

Fix any \(\eta>0\).  Up to time \(\lfloor \eta T\rfloor\), the lower bound \(\mathbb P(A_{t+1}=a\mid\mathcal F_t)\ge p^-_{k_0(T),T}(\varepsilon)\) allows the standard uniform coupling with the sum of \(k_0(T)\) independent geometric variables, each with success probability \(p^-_{k_0(T),T}(\varepsilon)\).  If the coupling sum plus \(t_0\) is at most \(\lfloor \eta T\rfloor\), then arm \(a\) reaches \(k_0(T)\) pulls by time \(\lfloor \eta T\rfloor\).  Therefore
\[
\mathbb P\left(T_a(k_0(T))>\eta T\right)
\le \mathbb P(\mathcal E_T^c\cup\Omega_T^c)
+\mathbb P\left(\text{the coupling sum exceeds }\eta T-t_0\right),
\]
Here the coupling sum denotes a sum of \(k_0(T)\) independent \(\mathrm{Geom}(p^-_{k_0(T),T}(\varepsilon))\) variables; using \(k_0(T)\) variables instead of the remaining number of pulls only makes the upper bound larger.
Moreover, by~\eqref{eq:k-star-def}, \(\alpha_-(\varepsilon)k_0(T)/\variance=O(\sqrt{L_T}+r_T+\log L_T)=o(L_T)\) and \(r_T=o(L_T)\).
The coupling sum of \(k_0(T)\) geometric variables with success probability \(p^-_{k_0(T),T}(\varepsilon)\) has expectation \(k_0(T)/p^-_{k_0(T),T}(\varepsilon)\), and
\[
\frac{k_0(T)}{p^-_{k_0(T),T}(\varepsilon)}
\le
	 C k_0(T)\sqrt{\frac{k_0(T)}{\variance}}\cdot
	 \exp\left\{r_T+\alpha_-(\varepsilon)\frac{k_0(T)}{\variance}\right\}
 =
 \variance\exp\{o(L_T)\}
=o(T),
\]
where the last step uses \(T=\variance\exp\{L_T\}\).  Markov's inequality gives~\eqref{eq:Tkstar-small}.
\end{proof}

We now prove the lower and upper brackets for \(N_{a,T}\).  Lemma~\ref{lem:geom-tail-sum-loglaw}, applied to \(\sum_{k=k_0(T)}^{n^-(T)-1}G_k^-\) with lower index \(m_T=k_0(T)\), terminal index \(n_T=n^-(T)-1\), exponent \(\alpha_-(\varepsilon)\), and the above \(r_T\), yields
\begin{equation}\label{eq:lower-bracket-geometric-loglaw}
\log\left(\sum_{k=k_0(T)}^{n^-(T)-1}G_k^-\right)
=\log\variance+\alpha_-(\varepsilon)\frac{n^-}{\variance}+o_{\mathbb P}(L_T).
\end{equation}

By~\eqref{eq:lower-bracket-geometric-loglaw} and the definition of \(n^-(T)\),
\(T^{-1}\sum_{k=k_0(T)}^{n^-(T)-1}G_k^-\to0\) in probability.
If some counts between \(k_0(T)\) and \(n^-\) have already been passed before \(t_0\), the corresponding geometric variables in~\eqref{eq:lower-bracket-geometric-loglaw} only add nonnegative slack.
\begin{equation}\label{eq:lower-bracket-time-slack}
\max\{T_a(k_0(T)),t_0\}+\sum_{k=k_0(T)}^{n^-(T)-1}G_k^-=o_{\mathbb P}(T).
\end{equation}
Together with Lemma~\ref{lem:geom-coupling},~\eqref{eq:lower-bracket-time-slack} makes the truncation inactive with probability tending to one along all waiting times used for the lower bracket, and gives
\begin{equation}\label{eq:lower-bracket-time-bound}
\frac{T_a(n^-)}{T}
\le
\frac{\max\{T_a(k_0(T)),t_0\}}{T}+\frac{1}{T}\sum_{k=k_0(T)}^{n^-(T)-1}G_k^-
\xrightarrow{\mathbb P}0.
\end{equation}
Since \(\{N_{a,T}\ge n^-(T)\}=\{T_a(n^-)\le T\}\),~\eqref{eq:lower-bracket-time-bound} implies
\begin{equation}\label{eq:lower-bracket}
\mathbb P(N_{a,T}\ge n^-(T))\to1.
\end{equation}
For the upper bracket, recall \(k_0(T)\) from~\eqref{eq:k-star-def} and set
\[
k_1^+(T):=\max\{k_0(T),\lceil \variance(\sqrt{L_T}+r_T+\log L_T)\rceil\}.
\]
The time-uniform version of Lemma~\ref{lem:lem3}, applied with \(u=t_0\), gives \(\mathbb{E}[N_{a,t_0}]=O(\variance\log L_T)\).  Since \(k_1^+(T)\ge \variance(\sqrt{L_T}+r_T+\log L_T)\) and \(\sqrt{L_T}+r_T+\log L_T\ge\sqrt{L_T}\), Markov's inequality gives
\begin{equation}\label{eq:upper-bracket-start-count}
\mathbb P\left(
N_{a,t_0}<
k_1^+(T)
\right)\to1.
\end{equation}
On the event
\(N_{a,t_0}<
k_1^+(T)\),
if \(T_a(n^+)\le T\), then all waiting times from this starting count to count \(n^+-1\) occur after \(t_0\) and before the horizon, so the truncated coupling in Lemma~\ref{lem:geom-coupling} reduces to the ordinary one.
\begin{equation}\label{eq:upper-bracket-time-lower}
\begin{aligned}
&\left\{
N_{a,t_0}<
k_1^+(T),
\ T_a(n^+)\le T
\right\}\subseteq
\left\{
T\ge
\sum_{k=k_1^+(T)}^{n^+(T)-1}G_k^+
\right\}.
\end{aligned}
\end{equation}
Since \(k_1^+(T)=o(n^+(T))\), Lemma~\ref{lem:geom-tail-sum-loglaw} applied to the sum in~\eqref{eq:upper-bracket-time-lower} with lower index \(m_T=k_1^+(T)\), terminal index \(n_T=n^+(T)-1\), exponent \(\alpha_+(\varepsilon)\), and the lemma's deterministic term set to \(0\), yields
\begin{equation}\label{eq:upper-bracket-geometric-loglaw}
\log\left(\sum_{k=k_1^+(T)}^{n^+(T)-1}G_k^+\right)
=\log\variance+\alpha_+(\varepsilon)\frac{n^+}{\variance}+o_{\mathbb P}(L_T).
\end{equation}
Since \(T=\variance\exp\{L_T\}\),~\eqref{eq:upper-bracket-geometric-loglaw} and the definition of \(n^+(T)\) imply
\begin{equation}\label{eq:upper-bracket-geometric-diverges}
\frac{1}{T}
\sum_{k=k_1^+(T)}^{n^+(T)-1}G_k^+
\xrightarrow{\mathbb P}\infty.
\end{equation}
Combining~\eqref{eq:upper-bracket-start-count},~\eqref{eq:upper-bracket-time-lower}, and~\eqref{eq:upper-bracket-geometric-diverges} gives
\begin{equation}\label{eq:upper-bracket}
\mathbb P(N_{a,T}<n^+(T))\to1.
\end{equation}
Combining~\eqref{eq:lower-bracket} and~\eqref{eq:upper-bracket}, with probability tending one,
\[
\frac{1-\delta}{\alpha_-(\varepsilon)}
\le
\frac{N_{a,T}}{\variance L_T}
\le
\frac{1+\delta}{\alpha_+(\varepsilon)}.
\]
Letting \(\delta\downarrow0\) and then \(\varepsilon\downarrow0\) gives
\[
\frac{N_{a,T}}{\variance L_T}\xrightarrow{\mathbb P}\frac{2}{\Delta_a^2}.
\]
This proves Theorem~\ref{thm:main}(b) for the fixed suboptimal arm \(a\). Since \(K\) is fixed, the same conclusion may be made simultaneous over all suboptimal arms by a finite union bound.

\section{Proofs for Mean-Bonus Thompson Sampling} \label{appendix:thm:main:2}

We work on the good event
\begin{equation}
	\mathcal{G}_T := \mathcal{G}_{T, 1} \cap \mathcal{G}_{T, 2},
	\label{eq:mean-bonus-good-event}
\end{equation}
where $\mathcal{G}_{T, 1}$ and $\mathcal{G}_{T, 2}$ are defined as follows:
\[
\begin{aligned}
\mathcal{G}_{T, 1} &:=  \left\{
\left|\widehat{\mu}_{a, t}-\mu_a\right| \leq \sqrt{\frac{C_1  \log T}{N_{a, t}}}, \text{ for all }t\leq T \text{ with } N_{a,t}\ge1 \text{ and all arms } a \in [K] \right\}, \\
\mathcal{G}_{T, 2} &:=  \left\{
\left|Z_{a,t}\right| \leq \sqrt{2\log(2KT^3)}, \text{ for all }t\leq T \text{ and all arms } a \in [K]
\right\},
\end{aligned}
\]
where $C_1$ is a sufficiently large constant.  The Gaussian variables
\(\{Z_{a,t}:a\in[K],1\le t\le T\}\) are mutually independent standard normals,
independent of all other random variables; equivalently, conditional on
the past history, the indices use fresh algorithmic randomization.
\begin{lemma} \label{lem:mean-bonus-good-event}
	It holds that $\mathbb{P}(\mathcal{G}_T)\ge 1-2T^{-2}$.
\end{lemma}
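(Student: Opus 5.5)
We bound the two complementary events separately by a union bound, $\mathbb P(\mathcal G_T^c)\le \mathbb P(\mathcal G_{T,1}^c)+\mathbb P(\mathcal G_{T,2}^c)$, and show that each term is at most $T^{-2}$.

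\emph{The Gaussian event $\mathcal G_{T,2}$.} The variables $\{Z_{a,t}\}$ are i.i.d.\ standard normal and there are at most $KT$ of them. Take $x=\sqrt{2\log(2KT^3)}$. The two-sided tail bound $\mathbb P(|Z|>x)\le 2e^{-x^2/2}$ gives
\[
\mathbb P(|Z|>x)\le \frac{2}{2KT^3}=\frac{1}{KT^3}
\]
for each variable. A union bound over the $KT$ variables then yields $\mathbb P(\mathcal G_{T,2}^c)\le T^{-2}$. This step is elementary.

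\emph{The concentration event $\mathcal G_{T,1}$.} Fix an arm $a$. Because $N_{a,t}$ only increases in unit steps, the path $\{\widehat\mu_{a,t}\}_{t\le T}$ takes at most $T$ distinct values. When $N_{a,t}=n$, we have $\widehat\mu_{a,t}-\mu_a=n^{-1}S_{a,n}$, where $S_{a,n}$ is the sum of the first $n$ noise draws observed from arm $a$. We use the same selected-sum representation as in Step~1 of the proof of Lemma~\ref{lem:lem3}: the $n$-th observed noise of arm $a$ is independent of the past history and is $1$-sub-Gaussian, so $S_{a,n}$ is a martingale sum with sub-Gaussian increments. Equivalently, we can realize the rewards of arm $a$ as a pre-drawn i.i.d.\ stack, which gives the same law. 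It follows that for each fixed $n$,
\[
\mathbb P\bigl(|S_{a,n}|\ge \sqrt{C_1 n\log T}\bigr)\le 2\exp\{-C_1\log T/2\}=2T^{-C_1/2}.
\]
The event in $\mathcal G_{T,1}^c$ for arm $a$ is contained in the union over $n\in\{1,\dots,T\}$ of these fixed-$n$ events. A union bound over $n\le T$ and over $a\in[K]$ therefore gives
\[
\mathbb P(\mathcal G_{T,1}^c)\le 2KT^{1-C_1/2}.
\]
Choosing $C_1$ large (for instance $C_1\ge 8$, with $T$ large enough that $2K\le T$) makes this at most $T^{-2}$.

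The only point requiring care is the second step: the sample sizes $N_{a,t}$ are random and adaptively chosen, so we must not apply a fixed-sample bound at a random $n$ directly. Indexing by the pull count $n$ avoids this, since each count value is attained at most once along a path, and the reward-stack construction makes the fixed-$n$ sub-Gaussian bound legitimate. Alternatively, one could invoke the time-uniform bound of Lemma~\ref{lem:TUB} with $\delta=T^{-2}/K$, but the direct union bound over $n$ is simpler.
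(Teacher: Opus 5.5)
Your proof is correct, and your treatment of $\mathcal G_{T,2}$ is the same as the paper's: a per-variable bound of $1/(KT^3)$ followed by a union bound over the $KT$ variables.

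For $\mathcal G_{T,1}$ you take a different, more elementary route than the paper.
\begin{itemize}
\item The paper applies the time-uniform law-of-the-iterated-logarithm bound of Lemma~\ref{lem:TUB} to the thinned martingale $\sum_{s}\mathbf 1\{A_s=a\}\xi_{a,s}$, and then union bounds over arms.
\item You index by the pull count $n$ and use the fixed-$n$ sub-Gaussian tail of $S_{a,n}$. This is legitimate via the reward-stack realization, since $\mathcal G_{T,1}$ depends only on actions and observed rewards; optional stopping at the $n$-th pull gives the same bound. You then pay a factor $T$ for the union over $n\le T$.
\end{itemize}
That factor $T$ is affordable only because the threshold here is at the $\sqrt{\log T/N_{a,t}}$ scale, so it is absorbed by $T^{-C_1/2}$. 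At the $\sqrt{\log\log T/N_{a,t}}$ scale of the event $\mathcal E_T$ in the variance-inflation analysis, the same union bound would fail. That is where the time-uniform lemma is really needed.

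In exchange, your argument is self-contained. It also covers the small counts $N_{a,t}\in\{1,2\}$ directly, which Lemma~\ref{lem:TUB}, stated only for $V_t\ge 3$, does not handle on its own.

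Two cosmetic remarks:
\begin{itemize}
\item What you actually use is not that ``each count value is attained at most once'', since a count persists over a block of rounds. What you use is that $\widehat\mu_{a,t}=S_{a,N_{a,t}}/N_{a,t}$ depends on $t$ only through $N_{a,t}$.
\item The restriction $T\ge 2K$ can be removed by letting $C_1$ depend on $K$. For example, $C_1\ge 6+2\log_2(2K)$ gives $2KT^{1-C_1/2}\le T^{-2}$ for all $T\ge 2$.
\end{itemize}
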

\begin{proof}[Proof of Lemma~\ref{lem:mean-bonus-good-event}]
	For each fixed arm \(a\), recall that
	$
	\widehat\mu_{a,t}-\mu_a
	=
	({1}/{N_{a,t}})\sum_{s\le t:\,A_s=a}\xi_{a,s},
	  $ for $ 1\le N_{a,t}\le T.
	$
	Applying Lemma~\ref{lem:TUB} and taking a union bound over
	\(a\in[K]\), with \(C_1\) sufficiently large,
	\begin{equation} \label{eq:event-1}
	\mathbb{P}(\mathcal{G}_{T, 1}^c)
	\le
	\sum_{a=1}^K
	\mathbb P\left\{\exists\,t\le T:\ N_{a,t}\ge1,\
	\left|\frac{1}{N_{a,t}}\sum_{s\le t:\,A_s=a}\xi_{a,s}\right|>\sqrt{\frac{C_1\log T}{N_{a,t}}}\right\}
	\le T^{-2}.
	\end{equation}
	Meanwhile, for any standard normal random variable \(Z\), Lemma~\ref{lem:normal-tail}-(\ref{lem:normal-tail-4}) gives \(\mathbb P(|Z|\ge x)\le2\exp(-x^2/2)\).
	This further implies that
	\begin{equation}\label{eq:event-2}
	\mathbb{P}(\mathcal{G}_{T, 2}^c) \le 2KT \cdot \exp\{-\log(2KT^3)\}=T^{-2}.
	\end{equation}
	Combining~\eqref{eq:event-1} and~\eqref{eq:event-2}, we have \(\mathbb P(\mathcal G_T)\ge1-2T^{-2}\), which completes the proof.
\end{proof}

Recall that our algorithm uses the following update rule:
\begin{equation}
	\Upsilon_{a,t+1}
	:= \widehat{\mu}_{a,t} + \frac{Z_{a,t+1}}{\sqrt{N_{a,t}}}
	+ \sqrt{\frac{2\bonus\log T}{N_{a,t}}} \qquad A_{t+1} = \argmax_{a\in [K]} \Upsilon_{a,t+1}.
	\label{eq:mb-index}
\end{equation}
Equivalently, \(\Upsilon_{a,t+1}\) is Gaussian with center \(\widehat\mu_{a,t}+B_{a,t}\) and variance \(1/N_{a,t}\); no further bonus is added after forming \(\Upsilon_{a,t+1}\).
Under the good event~\eqref{eq:mean-bonus-good-event}, we have
	\begin{equation}\label{eq:U:bounds}
	\Upsilon_{a,t+1}\in
\left[
\mu_a+\sqrt{\frac{2\bonus\log T}{N_{a,t}}}
\ \pm\
	\frac{\sqrt{2\log(2KT^3)}+\sqrt{C_1\log T}}{\sqrt{N_{a,t}}}
	\right].
	\end{equation}
	Our proof will frequently use the following two lemmas and the fact that as $T\to\infty$,
	\begin{equation}\label{eq:gamma:constraint}
	\bonus \to \infty \qquad \text{and} \qquad \bonus \log T = o(T).
	\end{equation}
Set
\[
\varepsilon_T:=\bonus^{-1/4}+\left(\frac{\bonus\log T}{T}\right)^{1/4}.
\]
Then, for all sufficiently large \(T\), \(\varepsilon_T\in(0,1/2)\), \(\varepsilon_T\to0\),
\[
\bonus^{-1/2}=o(\varepsilon_T),\qquad
\sqrt{\frac{\bonus\log T}{T}}=o(\varepsilon_T),\qquad
\sqrt{\frac{\log T}{T}}=o(\varepsilon_T),\qquad
\varepsilon_T\sqrt{\bonus}\to\infty.
\]

\begin{lemma}
	\label{lem:subopt-negligible}
	Fix any suboptimal arm $a\notin \mathcal{S}^\star$ with gap $\Delta_a>0$. We define
	\begin{equation}\label{eq:Nstar-def}
	N_{a, T}^\star:=\frac{2\bonus\log T}{\Delta_a^2},
	\qquad
	N_{a, T}^+:=\left\lceil (1+\varepsilon_T) N_{a, T}^\star\right\rceil \qquad N_{a,T}^-:=\left\lfloor (1-\varepsilon_T) N_{a, T}^\star\right\rfloor.
	\end{equation}
	On \(\mathcal G_T\), for all sufficiently large \(T\), we have $N_{a,T} \le N_{a,T}^+$, which further implies that
	\begin{equation}
		\frac{N_{\mathrm{sub}}(T)}{T}\xrightarrow{\mathbb{P}}0,
		\qquad
		\frac{N_{\mathrm{opt}}(T)}{T}\xrightarrow{\mathbb{P}}1,
		\label{eq:subopt-negligible}
	\end{equation}
	where $N_{\mathrm{opt}}(T)=\sum_{i\in \mathcal{S}^\star} N_{i,T}$ and $N_{\mathrm{sub}}(T)=T-N_{\mathrm{opt}}(T)$. The probability statements follow from \(\mathbb P(\mathcal G_T)\to1\).
\end{lemma}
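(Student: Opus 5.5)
The plan is a pathwise stopping-time comparison on \(\mathcal G_T\). Fix a suboptimal arm \(a\) and an optimal arm \(i\in\mathcal S^\star\). The claim is that once \(N_{a,t}\ge N^+_{a,T}-1\), arm \(a\) cannot beat arm \(i\). Since \(N_{a,t}\) increases by one exactly at each pull of \(a\), this gives \(N_{a,T}\le N^+_{a,T}\). Indeed, suppose \(N_{a,t}=n:=N^+_{a,T}-1\ge(1+\varepsilon_T)N^\star_{a,T}-1\) at some round \(t\). We must show \(\Upsilon_{a,t+1}<\Upsilon_{i,t+1}\) for every such \(t\le T-1\), and then a further pull of \(a\) never occurs.

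For the upper bound on arm \(a\), I use \eqref{eq:U:bounds}:
\[
\Upsilon_{a,t+1}\le \mu_a+\sqrt{\frac{2\bonus\log T}{n}}+\frac{C\sqrt{\log T}}{\sqrt n}.
\]
Plugging in \(n\approx(1+\varepsilon_T)\,2\bonus\log T/\Delta_a^2\) gives
\[
\sqrt{\frac{2\bonus\log T}{n}}\le \frac{\Delta_a}{\sqrt{1+\varepsilon_T}}(1+o(1/\bonus\log T)) \le \Delta_a\Bigl(1-\frac{\varepsilon_T}{3}\Bigr)
\]
for \(\varepsilon_T\) small. The noise term satisfies
\[
\frac{C\sqrt{\log T}}{\sqrt n}=O(\Delta_a\bonus^{-1/2})=o(\Delta_a\varepsilon_T),
\]
using \(\bonus^{-1/2}=o(\varepsilon_T)\). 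Hence \(\Upsilon_{a,t+1}\le\mu^\star-c\Delta_a\varepsilon_T\) for large \(T\).

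For the lower bound on the optimal arm, \eqref{eq:U:bounds} gives
\[
\Upsilon_{i,t+1}\ge \mu^\star+\frac{\sqrt{2\bonus\log T}-\sqrt{2\log(2KT^3)}-\sqrt{C_1\log T}}{\sqrt{N_{i,t}}}.
\]
Because \(\bonus\to\infty\), the numerator is positive for large \(T\), so \(\Upsilon_{i,t+1}>\mu^\star\) regardless of \(N_{i,t}\). The key point is that the bonus dominates both noise sources once \(\bonus\) exceeds a constant. Combining the two bounds gives \(\Upsilon_{i,t+1}>\mu^\star>\Upsilon_{a,t+1}\). So \(a\) is not the argmax, and no pull of \(a\) ever occurs at count \(N^+_{a,T}\) or higher. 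This establishes \(N_{a,T}\le N^+_{a,T}\) on \(\mathcal G_T\) for all large \(T\), uniformly over the finitely many suboptimal arms.

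For the consequences, summing over \(a\notin\mathcal S^\star\) yields
\[
N_{\mathrm{sub}}(T)\le\sum_b N^+_{b,T}=O(\bonus\log T)=o(T)
\]
on \(\mathcal G_T\), by \eqref{eq:gamma:constraint}. Since \(\mathbb P(\mathcal G_T)\ge1-2T^{-2}\to1\) by Lemma~\ref{lem:mean-bonus-good-event}, we obtain \(N_{\mathrm{sub}}(T)/T\xrightarrow{\mathbb P}0\), and \(N_{\mathrm{opt}}(T)/T=1-N_{\mathrm{sub}}(T)/T\xrightarrow{\mathbb P}1\).

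The only delicate step is the bookkeeping in the first bound. The ceiling and the \(-1\) shift in \(n\) must be absorbed. This is harmless because \(N^\star_{a,T}\to\infty\) with order \(\bonus\log T\), so \(1/N^\star_{a,T}=o(\varepsilon_T)\). We also need \(\varepsilon_T<1/2\), so that the expansion \(1/\sqrt{1+\varepsilon_T}\le1-\varepsilon_T/3\) holds.
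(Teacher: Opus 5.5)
Your proof is correct and follows the paper's argument. On $\mathcal G_T$ you use \eqref{eq:U:bounds} to place the suboptimal index below $\mu^\star$ by a margin of order $\Delta_a\varepsilon_T$ (the noise term being $O(\Delta_a\bonus^{-1/2})=o(\Delta_a\varepsilon_T)$), show every optimal index exceeds $\mu^\star$ because the bonus dominates, and sum over suboptimal arms using $\bonus\log T=o(T)$ and $\mathbb P(\mathcal G_T)\to1$. The only difference is that you block pulls already at count $N^+_{a,T}-1$, which forces the extra ceiling bookkeeping; there the correction factor is $1+O(1/(\bonus\log T))$ rather than $o(\cdot)$, though still $o(\varepsilon_T)$, whereas the paper simply blocks once $N_{a,t}\ge N^+_{a,T}$, which already gives $N_{a,T}\le N^+_{a,T}$.
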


\begin{proof}[Proof of Lemma~\ref{lem:subopt-negligible}]
	Fix a suboptimal arm $a\notin \mathcal{S}^\star$ with gap $\Delta_a>0$ and large enough $T$. Suppose at some time $t\le T-1$ we have $N_{a,t}\ge N_{a, T}^+$. Under $\mathcal{G}_T$ defined in~\eqref{eq:mean-bonus-good-event}, the upper bound in~\eqref{eq:U:bounds} yields
	\[
	\Upsilon_{a,t+1}\le \mu_a+\sqrt{\frac{2\bonus\log T}{N_{a,t}}}+\frac{\sqrt{2\log(2KT^3)}+\sqrt{C_1\log T}}{\sqrt{N_{a,t}}}.
	\]
		Since $N_{a,t}\ge N_{a, T}^+$, we have
		\begin{equation}\label{eq:subopt-bonus-upper-scale}
		\sqrt{\frac{2\bonus\log T}{N_{a,t}}}
		\le \sqrt{\frac{2\bonus\log T}{N_{a, T}^+}}
		\le
		\sqrt{
		\frac{2\bonus\log T}
		{(1+\varepsilon_T)N_{a,T}^\star}
		}
		=
		\frac{\Delta_a}{\sqrt{1+\varepsilon_T}} \le
		\Delta_a\left(1-\frac{\varepsilon_T}{4}\right),
		\end{equation}
		The inequality
	\((1+\varepsilon_T)^{-1/2}\le1-\varepsilon_T/4\)
	used in~\eqref{eq:subopt-bonus-upper-scale} follows from the first inequality in~\eqref{eq:sqrt-ineq} in Lemma~\ref{lem:sqrt-ineq}.
	Combining~\eqref{eq:subopt-bonus-upper-scale} with \(\mu_a=\mu^\star-\Delta_a\) gives
	\begin{equation}\label{eq:subopt-upper-mid}
	\mu_a+\sqrt{\frac{2\bonus\log T}{N_{a,t}}}
	\le
	(\mu^\star-\Delta_a)+\Delta_a\left(1-\frac{\varepsilon_T}{4}\right)
	=\mu^\star-\frac{\Delta_a\varepsilon_T}{4}.
	\end{equation}
	Since \(N_{a,t}\ge N_{a,T}^+ \ge N_{a, T}^\star\),
	\begin{equation}\label{eq:subopt-error}
	\begin{aligned}
	\frac{\sqrt{2\log(2KT^3)}+\sqrt{C_1\log T}}{\sqrt{N_{a,t}}}
	& \le
	\frac{\sqrt{2\log(2KT^3)}+\sqrt{C_1\log T}}{\sqrt{N_{a, T}^\star}} \\
	& =
	\frac{(\sqrt{2\log(2KT^3)}+\sqrt{C_1\log T}) \cdot \Delta_a}{\sqrt{2\bonus\log T}},
	\end{aligned}
	\end{equation}
	where the last equality uses the definition of $N_{a,T}^\star$ in~\eqref{eq:Nstar-def}.
	Because $\sqrt{2\log(2KT^3)}+\sqrt{C_1\log T}=O(\sqrt{\log T})$ and $\bonus\to\infty$, the right-hand side of~\eqref{eq:subopt-error} is $O(\Delta_a/\sqrt{\bonus})$,
	which is $o(\Delta_a\varepsilon_T)$ since \(\bonus^{-1/2}=o(\varepsilon_T)\).
	Thus there exists large enough $T$ such that \((\sqrt{2\log(2KT^3)}+\sqrt{C_1\log T})/\sqrt{N_{a,t}}\le \Delta_a\varepsilon_T/8\).
	Combining~\eqref{eq:subopt-upper-mid} and~\eqref{eq:subopt-error} gives, for all $T$ large enough and all such $t$, \(\Upsilon_{a,t+1}\le \mu^\star-\Delta_a\varepsilon_T/8<\mu^\star\).
	By~\eqref{eq:U:bounds}, for all $T$ large enough on $\mathcal{G}_T$ every optimal arm $i\in \mathcal{S}^\star$ satisfies
	\begin{equation}\label{eq:exceed:mu-star}
	\Upsilon_{i,t+1}\ge \mu^\star+ \frac{\sqrt{2\bonus\log T} - \sqrt{2\log(2KT^3)} - \sqrt{C_1\log T}}{\sqrt{N_{i,t}}} \ge \mu^\star.
	\end{equation}
	Hence the algorithm cannot select arm $a$ at time $t+1$.
	Therefore, once $N_{a,t}$ reaches $N_{a, T}^+$ it cannot increase further. Summing the bound over all suboptimal arms gives, on \(\mathcal G_T\) defined in~\eqref{eq:mean-bonus-good-event},
		\begin{equation}
			N_{\mathrm{sub}}(T):=\sum_{a\notin \mathcal{S}^\star} N_{a,T}
			\le \sum_{a\notin \mathcal{S}^\star} N_{a, T}^+
			=O(\bonus\log T),
			\label{eq:Nsub-bound}
		\end{equation}
		Since \(\mathbb P(\mathcal G_T)\to1\), this further implies that
	\[
	\frac{N_{\mathrm{sub}}(T)}{T} \le \frac{O(\bonus\log T)}{T} \xrightarrow{\mathbb{P}}0, \qquad \frac{N_{\mathrm{opt}}(T)}{T} = 1-\frac{N_{\mathrm{sub}}(T)}{T} \xrightarrow{\mathbb{P}}1.
	\]
	which completes the proof.
\end{proof}

\subsection{Proof of the Mean-Bonus Part of Theorem~\ref{thm:regret-main}}
\label{sec:regret-mean-bonus}

\begin{proof}[Proof of the mean-bonus part of Theorem~\ref{thm:regret-main}]
If all arms are optimal, then \(\mathcal R(T)=0\), so the claim is trivial.  Hence assume that \([K]\setminus\mathcal S^\star\) is nonempty.
First bound the pathwise pseudo-regret on \(\mathcal G_T\).  Take \(T\) large enough so that Lemma~\ref{lem:subopt-negligible} holds for every suboptimal arm, \(\varepsilon_T\le1/2\), and \(N_{a,T}^\star\ge2\) for every suboptimal arm.  On \(\mathcal G_T\), for every \(a\notin\mathcal S^\star\), we have
\[
N_{a,T}
\le
N_{a,T}^+
\le
2N_{a,T}^\star
=
\frac{4\bonus\log T}{\Delta_a^2}.
\]
Therefore, we further obtain
\[
\begin{aligned}
\sum_{t=1}^T(\mu^\star-\mu_{A_t})
&=
\sum_{a\notin\mathcal S^\star}\Delta_a N_{a,T}  \le
4\sum_{a\notin\mathcal S^\star}\frac{\bonus\log T}{\Delta_a}.
\end{aligned}
\]
Since the reward noises have mean zero, \(\mathcal R(T)=\mathbb{E}[\sum_{t=1}^T(\mu^\star-\mu_{A_t})]\). On \(\mathcal G_T^c\), the pathwise pseudo-regret is at most \(T\max_{b\in[K]}(\mu^\star-\mu_b)\). Together with Lemma~\ref{lem:mean-bonus-good-event}, for sufficiently large \(T\), we have
\[
\mathcal R(T)
\le
4\sum_{a\notin\mathcal S^\star}\frac{\bonus\log T}{\Delta_a}
+
T\max_{b\in[K]}(\mu^\star-\mu_b)\mathbb P(\mathcal G_T^c)
\le
6\sum_{a\notin\mathcal S^\star}\frac{\bonus\log T}{\Delta_a}
.
\]
This completes the proof of the mean-bonus part of Theorem~\ref{thm:regret-main}.
\end{proof}

\subsection{Proof of Theorem~\ref{thm:main:2}(a)}

\begin{proof}[Proof of Theorem~\ref{thm:main:2}(a)]
	Our proof relies on the following lemma.
	\begin{lemma}
		\label{lem:opt-order}
		For $T$ large enough, on $\mathcal{G}_T$ the following holds:
		for any two distinct optimal arms $i,j\in \mathcal{S}^\star$ and any time $t\le T-1$, if $N_{i,t}\ge (1+\varepsilon_T)N_{j,t}$, then
		\begin{equation}\label{eq:opt-order}
		\Upsilon_{i,t+1}<\Upsilon_{j,t+1}.
		\end{equation}
	\end{lemma}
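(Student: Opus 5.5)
Since $\mu_i=\mu_j=\mu^\star$, the plan is a pathwise comparison of the two indices using the bracket~\eqref{eq:U:bounds}, which holds on $\mathcal G_T$. Write $E_T:=\sqrt{2\log(2KT^3)}+\sqrt{C_1\log T}=O(\sqrt{\log T})$ and $b_T:=\sqrt{2\bonus\log T}$. The bracket gives
\[
\Upsilon_{i,t+1}\le \mu^\star+\frac{b_T+E_T}{\sqrt{N_{i,t}}},\qquad
\Upsilon_{j,t+1}\ge \mu^\star+\frac{b_T-E_T}{\sqrt{N_{j,t}}}.
\]
It therefore suffices to show
\[
\frac{b_T+E_T}{\sqrt{N_{i,t}}}<\frac{b_T-E_T}{\sqrt{N_{j,t}}}.
\]
Using $N_{i,t}\ge(1+\varepsilon_T)N_{j,t}$ and multiplying through by $\sqrt{N_{j,t}}>0$, this reduces to the deterministic inequality
\[
\frac{b_T+E_T}{\sqrt{1+\varepsilon_T}}<b_T-E_T,
\qquad\text{equivalently}\qquad
\frac{E_T}{b_T}\Bigl(1+\frac{1}{\sqrt{1+\varepsilon_T}}\Bigr)<1-\frac{1}{\sqrt{1+\varepsilon_T}}.
\]
This inequality involves only $T$, not $t$ or the sample path.

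Next I compare the two sides. The left side is at most $2E_T/b_T=O(\bonus^{-1/2})$. For the right side, Lemma~\ref{lem:sqrt-ineq} (as already used in~\eqref{eq:subopt-bonus-upper-scale}) gives $1-(1+\varepsilon_T)^{-1/2}\ge\varepsilon_T/4$. Since $\bonus^{-1/2}=o(\varepsilon_T)$, equivalently $\varepsilon_T\sqrt{\bonus}\to\infty$, the inequality holds for all sufficiently large $T$. This threshold on $T$ is uniform over $t\le T-1$ and over pairs $i,j\in\mathcal S^\star$, so the claim holds simultaneously for all of them on $\mathcal G_T$.

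The only delicate point is that the bracket must apply at every $t\le T-1$, including early times when $N_{j,t}$ may be small. This is fine: on $\mathcal G_T$ the bound $|\widehat\mu_{a,t}-\mu_a|\le\sqrt{C_1\log T/N_{a,t}}$ holds for every $N_{a,t}\ge1$, and $|Z_{a,t+1}|\le\sqrt{2\log(2KT^3)}$ for all $t+1\le T$. The comparison is carried out at the $1/\sqrt{N}$ scale, so no lower bound on the counts is needed. Also, $b_T-E_T>0$ for large $T$ because $\bonus\to\infty$, so the division step is valid and the strict inequality~\eqref{eq:opt-order} follows.
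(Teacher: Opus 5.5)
Your proposal is correct and is essentially the paper's argument: both use the bracket \eqref{eq:U:bounds} on $\mathcal G_T$, convert the count ratio into $1-(1+\varepsilon_T)^{-1/2}\ge\varepsilon_T/4$ via Lemma~\ref{lem:sqrt-ineq}, and beat the $O(\sqrt{\log T})$ error terms using $\varepsilon_T\sqrt{\bonus}\to\infty$. The only difference is presentational. The paper lower-bounds $\Upsilon_{j,t+1}-\Upsilon_{i,t+1}$ by $\frac{1}{\sqrt{N_{j,t}}}\bigl\{\varepsilon_T b_T/4-2E_T\bigr\}$, whereas you rearrange the same comparison into a path-free inequality in $T$ alone.
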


	\begin{proof}[Proof of Lemma~\ref{lem:opt-order}]
		Fix distinct $i,j\in \mathcal{S}^\star$, so $\mu_i=\mu_j=\mu^\star$.
		On $\mathcal{G}_T$, use the lower bound in~\eqref{eq:U:bounds} for $\Upsilon_{j,t+1}$ and the upper bound in~\eqref{eq:U:bounds} for $\Upsilon_{i,t+1}$:
		\begin{align*}
			& \Upsilon_{j,t+1}-\Upsilon_{i,t+1} \\
			&\ge
			\sqrt{2\bonus\log T}\cdot\left(\frac{1}{\sqrt{N_{j,t}}}-\frac{1}{\sqrt{N_{i,t}}}\right)
			-\left(\sqrt{2\log(2KT^3)}+\sqrt{C_1\log T}\right)\cdot\left(\frac{1}{\sqrt{N_{j,t}}}+\frac{1}{\sqrt{N_{i,t}}}\right).
		\end{align*}
		If $N_{i,t}\ge (1+\varepsilon_T)N_{j,t}$, then $1/\sqrt{N_{i,t}}\le (1/\sqrt{1+\varepsilon_T}) \cdot(1/\sqrt{N_{j,t}})$, and by the first inequality in~\eqref{eq:sqrt-ineq} in Lemma~\ref{lem:sqrt-ineq},
		\begin{equation}\label{eq:opt-order-count-gap}
		\frac{1}{\sqrt{N_{j,t}}}-\frac{1}{\sqrt{N_{i,t}}}
		\ge
		\left(1-\frac{1}{\sqrt{1+\varepsilon_T}}\right)\frac{1}{\sqrt{N_{j,t}}}
		\ge
		\frac{\varepsilon_T}{4\sqrt{N_{j,t}}}.
		\end{equation}
		The assumption \(N_{i,t}\ge (1+\varepsilon_T)N_{j,t}\) also gives \(1/\sqrt{N_{j,t}}+1/\sqrt{N_{i,t}}\le 2/\sqrt{N_{j,t}}\).
		Together with~\eqref{eq:opt-order-count-gap}, the inequality \(1/\sqrt{N_{j,t}}+1/\sqrt{N_{i,t}}\le 2/\sqrt{N_{j,t}}\) gives
		\begin{equation}\label{eq:opt-order-gap}
	\Upsilon_{j,t+1}-\Upsilon_{i,t+1}
		\ge
			\frac{1}{\sqrt{N_{j,t}}}\cdot
			\left\{\frac{\varepsilon_T\sqrt{2\bonus\log T}}{4}
		-2\left(\sqrt{2\log(2KT^3)}+\sqrt{C_1\log T}\right)\right\}.
		\end{equation}
			 Note that \(\varepsilon_T\sqrt{\bonus}\ge \bonus^{1/4}\to\infty\) and \(\sqrt{2\log(2KT^3)}+\sqrt{C_1\log T}=O(\sqrt{\log T})\), the right hand side in~\eqref{eq:opt-order-gap} is positive for all sufficiently large \(T\); this proves Lemma~\ref{lem:opt-order}.
		\end{proof}

	Back to the proof of Theorem~\ref{thm:main:2}(a). If \(m=1\), then the claim follows directly from Lemma~\ref{lem:subopt-negligible}. Hence assume \(m\ge2\). Assume $T$ is large so that Lemma~\ref{lem:opt-order} holds.
	Let $i_{\max}\in \mathcal{S}^\star$ and $i_{\min}\in \mathcal{S}^\star$ satisfy
	\[
	N_{i_{\max},T}=\max_{i\in \mathcal{S}^\star} N_{i,T},
	\qquad
	N_{i_{\min},T}=\min_{i\in \mathcal{S}^\star} N_{i,T}.
	\]
		Suppose, toward a contradiction, that
		\[
		N_{i_{\max},T}> (1+\varepsilon_T)N_{i_{\min},T}+1.
		\]
		Let $t$ be the last time arm $i_{\max}$ was pulled. Then at time $t-1$,
		\[
		N_{i_{\max},t-1}=N_{i_{\max},T}-1
		>
		(1+\varepsilon_T)N_{i_{\min},T}
		\ge
		(1+\varepsilon_T)N_{i_{\min},t-1},
		\]
		since $N_{i_{\min},t-1}\le N_{i_{\min},T}$.
		By Lemma~\ref{lem:opt-order} we have $\Upsilon_{i_{\max},t}<\Upsilon_{i_{\min},t}$ on $\mathcal{G}_T$.
		But if the algorithm pulled $i_{\max}$ at time $t$, then $\Upsilon_{i_{\max},t} \ge \Upsilon_{i_{\min},t}$. This is a contradiction.
			Therefore,
			\begin{equation} \label{eq:nmax-nmin}
				N_{i_{\max},T}\le (1+\varepsilon_T)N_{i_{\min},T}+1.
			\end{equation}
		By the definition of \(N_{\mathrm{opt}}(T)\),
		\begin{equation}
			m N_{i_{\min},T}\le N_{\mathrm{opt}}(T)\le m N_{i_{\max},T},
			\label{eq:Nstar-bounds}
		\end{equation}
		where $m = |\mathcal{S}^\star|$ is the number of optimal arms.
		Combining~\eqref{eq:nmax-nmin} and~\eqref{eq:Nstar-bounds}, together with \(\varepsilon_T\to 0\) and \(1/T\to 0\), gives
		\begin{equation}
			\max_{i\in \mathcal{S}^\star}\left|\frac{N_{i,T}}{T}-\frac{N_{\mathrm{opt}}(T)}{mT}\right|\to 0.
		\label{eq:opt-deviation}
	\end{equation}
	If there is no suboptimal arm, then \(N_{\mathrm{opt}}(T)=T\). Otherwise, by Lemma~\ref{lem:subopt-negligible}, $N_{\mathrm{opt}}(T)/T\xrightarrow{\mathbb{P}}1$.
	Combining with~\eqref{eq:opt-deviation} and $\mathbb{P}(\mathcal{G}_T)\to 1$ yields $N_{i,T}/T\xrightarrow{\mathbb{P}}1/m$. Hence, we finish the proof of Theorem~\ref{thm:main:2}(a).
\end{proof}

\subsection{Proof of Theorem~\ref{thm:main:2}(b)}

\begin{proof}[Proof of Theorem~\ref{thm:main:2}(b)]
	The proof below uses the deterministic terminal-balance bound
	\eqref{eq:nmax-nmin} established in the proof of part~(a); this is a
	pathwise consequence on \(\mathcal G_T\), not merely the convergence
	statement of part~(a).
	We first prove the following lemma, which states that $N_{a,T}$ is lower bounded by $N_{a,T}^-$ defined in~\eqref{eq:Nstar-def}.
	\begin{lemma}
		\label{lem:subopt-lower}
		For the fixed suboptimal arm $a\notin \mathcal{S}^\star$, for all sufficiently large $T$, on $\mathcal{G}_T$ we have
		\begin{equation}
			N_{a,T}\ge N_{a,T}^-.
			\label{eq:subopt-lower}
		\end{equation}
	\end{lemma}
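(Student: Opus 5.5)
We argue by contradiction on \(\mathcal G_T\), along the lines of the terminal-block argument sketched in Section~\ref{sec:proof:sketch}. Suppose \(N_{a,T}<N^-_{a,T}\). Then \(N_{a,t}\le N_{a,T}<N^-_{a,T}\le(1-\varepsilon_T)N^\star_{a,T}\) for every \(t\le T\). Using the lower bound in~\eqref{eq:U:bounds}, we get
\[
\Upsilon_{a,t+1}\ge \mu_a+\sqrt{\frac{2\bonus\log T}{N_{a,t}}}-\frac{\sqrt{2\log(2KT^3)}+\sqrt{C_1\log T}}{\sqrt{N_{a,t}}}.
\]
The bonus satisfies \(\sqrt{2\bonus\log T/N_{a,t}}\ge \Delta_a/\sqrt{1-\varepsilon_T}\ge \Delta_a(1+\varepsilon_T/2)\). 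The error term equals the bonus multiplied by \(O(\bonus^{-1/2})=o(\varepsilon_T)\). Together these give \(\Upsilon_{a,t+1}\ge \mu^\star+\Delta_a\varepsilon_T/4\) for all large \(T\), uniformly in \(t\le T-1\).

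Next we bound the optimal-arm indices in the terminal block. By~\eqref{eq:Nsub-bound}, \(N_{\mathrm{opt}}(T)\ge T-O(\bonus\log T)\ge T/2\) on \(\mathcal G_T\). Combining this with the pathwise balance bound~\eqref{eq:nmax-nmin} and with~\eqref{eq:Nstar-bounds}, we get \(N_{i,T}\ge N_{\mathrm{opt}}(T)/(m(1+\varepsilon_T))-1\ge T/(4m)\) for every \(i\in\mathcal S^\star\). Counts increase by at most one per round, so \(N_{i,t}\ge T/(8m)\) for every \(t\ge T-\lfloor T/(8m)\rfloor\). The upper bound in~\eqref{eq:U:bounds} then gives
\[
\Upsilon_{i,t+1}\le \mu^\star+\frac{\sqrt{2\bonus\log T}+O(\sqrt{\log T})}{\sqrt{T/(8m)}}=\mu^\star+O\Bigl(\sqrt{\bonus\log T/T}\Bigr).
\]
This error is \(o(\varepsilon_T)\) because \(\sqrt{\bonus\log T/T}=o(\varepsilon_T)\), which is exactly why \(\varepsilon_T\) includes the term \((\bonus\log T/T)^{1/4}\). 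Hence \(\Upsilon_{a,t+1}>\max_{i\in\mathcal S^\star}\Upsilon_{i,t+1}\) throughout the terminal block.

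To reach the contradiction, note that no optimal arm is pulled in the last \(\lfloor T/(8m)\rfloor\) rounds. Therefore \(N_{\mathrm{sub}}(T)\ge\lfloor T/(8m)\rfloor\), which contradicts \(N_{\mathrm{sub}}(T)=O(\bonus\log T)=o(T)\) from~\eqref{eq:Nsub-bound}. (In fact arm \(a\) alone would be pulled order-\(T\) times, contradicting \(N_{a,T}<N^-_{a,T}\) directly.) The delicate step is the terminal-block count lower bound. It requires \eqref{eq:nmax-nmin} as a deterministic statement on \(\mathcal G_T\), not merely the in-probability conclusion of part~(a), and it requires checking that the constants \(1/(4m)\) and \(1/(8m)\) are compatible with the \(+1\) slack in~\eqref{eq:nmax-nmin} for large \(T\). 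If \(m=1\), we use \(N_{i,T}=N_{\mathrm{opt}}(T)\) directly.

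Finally, Lemma~\ref{lem:subopt-lower} gives the lower corridor and Lemma~\ref{lem:subopt-negligible} gives the upper corridor. Since \(\mathbb P(\mathcal G_T)\to1\) and \(\varepsilon_T\to0\), it follows that \(N_{a,T}/N^\star_{a,T}\xrightarrow{\mathbb P}1\), which proves Theorem~\ref{thm:main:2}(b).
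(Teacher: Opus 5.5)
Your proposal is correct and follows the paper's terminal-block contradiction essentially step for step: a lower bound of the form $\mu^\star+c\,\Delta_a\varepsilon_T$ on arm $a$'s index, a bound $N_{i,t}\ge T/(8m)$ for every optimal arm in the last block obtained from the pathwise bound \eqref{eq:nmax-nmin}, and a contradiction with $N_{\mathrm{sub}}(T)=o(T)$. Treating the noise term as an $O(\bonus^{-1/2})$ relative fraction of the bonus is only a cosmetic variant of the paper's reduction to $N_{a,T}^-\ge N_{a,T}^\star/4$. One minor caveat concerns your parenthetical that arm $a$ alone would be pulled order-$T$ times. This does not follow directly from arm $a$ dominating the optimal arms, because another suboptimal arm could win those rounds. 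It can be recovered from the upper corridors $N_{b,T}\le N^+_{b,T}$ for $b\ne a$, but your main contradiction does not need it.
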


	\begin{proof}[Proof of Lemma~\ref{lem:subopt-lower}]
		Fix $T$ large enough and work on $\mathcal{G}_T$.
		By Lemma~\ref{lem:subopt-negligible}, on \(\mathcal G_T\) we have
		\(N_{\mathrm{sub}}(T)\le\sum_{b\notin\mathcal S^\star}N^+_{b,T}=O(\bonus\log T)=o(T)\). Hence, for all sufficiently large \(T\),
		\begin{equation}
			N_{\mathrm{sub}}(T)< \lfloor T/(8m)\rfloor.
			\label{eq:Nsub-less-ell}
		\end{equation}
		Assume for contradiction that $N_{a,T}\le N_{a,T}^-$. Then for every $t\le T$ we also have
		\begin{equation}\label{eq:Nsub-lower:contradiction}
		N_{a,t} \le N_{a, T} \le N_{a,T}^-.
		\end{equation}
		Consider any time $t\in\{T-\lfloor T/(8m)\rfloor,\dots,T-1\}$.
		We will show that on $\mathcal{G}_T$ and for all large $T$,
		\begin{equation}
			\Upsilon_{a,t+1}>\max_{i\in \mathcal{S}^\star} \Upsilon_{i,t+1}.
			\label{eq:subopt-dominates-lastblock}
		\end{equation}
			\paragraph{Upper bound for \(\max_{i\in \mathcal{S}^\star} \Upsilon_{i,t+1}\).} By~\eqref{eq:Nsub-less-ell}, \(N_{\mathrm{opt}}(T)=T-N_{\mathrm{sub}}(T)\ge T/2\) for sufficiently large \(T\). This starts the upper bound used in~\eqref{eq:subopt-dominates-lastblock}.
		If \(m=1\), then \(\min_{i\in\mathcal S^\star}N_{i,T}=N_{\mathrm{opt}}(T)\ge T/2\). If \(m\ge2\), let \(i_{\max}\) and \(i_{\min}\) denote optimal arms with maximal and minimal terminal counts, respectively; by~\eqref{eq:nmax-nmin}, optimal counts differ by at most a factor \(1+\varepsilon_T\) up to \(+1\). In either case, \(\min_{i\in \mathcal S^\star}N_{i,T}\ge T/(4m)\) for all sufficiently large \(T\).
		Now for any $t\in\{T-\lfloor T/(8m)\rfloor,\dots,T-1\}$ and any optimal $i\in \mathcal{S}^\star$, \(N_{i,t}\ge N_{i,T}-(T-t)\ge T/(4m)-\lfloor T/(8m)\rfloor\ge T/(8m)\).
		Apply the upper bound in~\eqref{eq:U:bounds} to $\Upsilon_{i,t+1}$:
		\[
		\Upsilon_{i,t+1}\le \mu^\star+\sqrt{\frac{2\bonus\log T}{N_{i,t}}}
		+\frac{\sqrt{2\log(2KT^3)}+\sqrt{C_1\log T}}{\sqrt{N_{i,t}}}.
		\]
		Combining the above with this count lower bound gives
		\[
		\begin{aligned}
		\Upsilon_{i,t+1} & \le \mu^\star + \sqrt{\frac{16 m \bonus  \log T}{T}}
		+ \frac{\sqrt{16m\log(2KT^3)}+\sqrt{16C_1 m \log T}}{\sqrt{T}} \\
		& < \mu^\star + \frac{\Delta_a\varepsilon_T}{4},
		\label{eq:opt-index-upper}
		\end{aligned}
		\]
		where we use \(\sqrt{\bonus\log T/T}=o(\varepsilon_T)\) and \(\sqrt{\log T/T}=o(\varepsilon_T)\).

		\paragraph{Lower bound $\Upsilon_{a,t+1}$ in~\eqref{eq:subopt-dominates-lastblock}.}
		On $\mathcal{G}_T$,~\eqref{eq:U:bounds} gives
		\begin{equation}\label{eq:Ua:lower:1}
		\begin{aligned}
		\Upsilon_{a,t+1} & \ge \mu_a+\sqrt{\frac{2\bonus\log T}{N_{a,t}}}-\frac{\sqrt{2\log(2KT^3)}+\sqrt{C_1\log T}}{\sqrt{N_{a,t}}} \\
		& \ge \mu_a+\sqrt{\frac{2\bonus\log T}{N_{a,T}^-}}-\frac{\sqrt{2\log(2KT^3)}+\sqrt{C_1\log T}}{\sqrt{N_{a,T}^-}},
		\end{aligned}
		\end{equation}
		The bound \(N_{a,t}\le N_{a,T}^-\) from~\eqref{eq:Nsub-lower:contradiction}, together with the fact that \(\sqrt{2\bonus\log T}-\sqrt{2\log(2KT^3)}-\sqrt{C_1\log T}\ge0\) for all sufficiently large \(T\), is used in the second line of~\eqref{eq:Ua:lower:1}.
		By the definitions of $N_{a,T}^-$ and $N_{a,T}^\star$ in~\eqref{eq:Nstar-def}, we have
		\begin{equation}\label{eq:Nstar-recall}
		(1-\varepsilon_T)N_{a,T}^\star - 1 \le N_{a,T}^-\le (1-\varepsilon_T)N_{a,T}^\star
		\end{equation}
		with $N_{a,T}^\star=2\bonus\log T/\Delta_a^2$,
		which implies that
		\begin{equation}\label{eq:Ua:lower:2}
		\sqrt{\frac{2\bonus\log T}{N_{a,T}^-}}\ge \frac{\Delta_a}{\sqrt{1-\varepsilon_T}}
		\ge \Delta_a\left(1+\frac{\varepsilon_T}{2}\right).
		\end{equation}
		The inequality \((1-\varepsilon_T)^{-1/2}\ge1+\varepsilon_T/2\) used in~\eqref{eq:Ua:lower:2} follows from the second inequality in~\eqref{eq:sqrt-ineq} in Lemma~\ref{lem:sqrt-ineq}.

		Meanwhile, by~\eqref{eq:Nstar-recall}, \(\varepsilon_T\to0\), and \(N_{a,T}^\star\to\infty\), we have \(N_{a,T}^-\ge (1-\varepsilon_T)N_{a,T}^\star-1\ge (1-\varepsilon_T)N_{a,T}^\star/2\ge N_{a,T}^\star/4\).
		Using \(N_{a,T}^\star=2\bonus\log T/\Delta_a^2\) gives
		\begin{equation}\label{eq:Ua:lower:3}
		\begin{aligned}
		\frac{\sqrt{2\log(2KT^3)}+\sqrt{C_1\log T}}{\sqrt{N_{a,T}^-}} & \le \frac{\sqrt{8\log(2KT^3)}+2\sqrt{C_1\log T}}{\sqrt{N_{a,T}^\star}} \\
		& = \frac{(\sqrt{8\log(2KT^3)}+2\sqrt{C_1\log T})\Delta_a}{\sqrt{2\bonus\log T}} \\
		& < \frac{\Delta_a \varepsilon_T}{8}
		\end{aligned}
		\end{equation}
		because the middle expression in~\eqref{eq:Ua:lower:3} is \(O(\Delta_a/\sqrt{\bonus})=o(\Delta_a\varepsilon_T)\).
		Combining~\eqref{eq:Ua:lower:1},~\eqref{eq:Ua:lower:2}, and~\eqref{eq:Ua:lower:3} gives
		\begin{equation}
			\Upsilon_{a,t+1}\ge \mu_a+\Delta_a\left(1+\frac{\varepsilon_T}{2}\right)-\frac{\Delta_a\varepsilon_T}{8}
			=\mu^\star+\frac{3\Delta_a\varepsilon_T}{8}.
			\label{eq:Ia-lower-final}
		\end{equation}
		Finally,~\eqref{eq:opt-index-upper} and~\eqref{eq:Ia-lower-final} imply \(\Upsilon_{a,t+1}>\mu^\star+3\Delta_a\varepsilon_T/8>\max_{i\in \mathcal{S}^\star}\Upsilon_{i,t+1}\),
		which proves~\eqref{eq:subopt-dominates-lastblock}.

		\paragraph{Contradiction based on~\eqref{eq:subopt-dominates-lastblock}.} If~\eqref{eq:subopt-dominates-lastblock} holds for every $t\in\{T-\lfloor T/(8m)\rfloor,\dots,T-1\}$, then the maximizer at time $t+1$ cannot belong to $\mathcal{S}^\star$,
		so the chosen arm is suboptimal at every time in this block with size $\lfloor T/(8m)\rfloor$. In other words, \(N_{\mathrm{sub}}(T)\ge \lfloor T/(8m)\rfloor\), contradicting~\eqref{eq:Nsub-less-ell}.
		This completes the proof of Lemma~\ref{lem:subopt-lower}.
	\end{proof}

	Back to the proof of Theorem~\ref{thm:main:2}(b). Fix $a\notin \mathcal{S}^\star$.
	On $\mathcal{G}_T$ and for large $T$, Lemma~\ref{lem:subopt-negligible} and Lemma~\ref{lem:subopt-lower} give
	\[
	N_{a,T}^-\le N_{a,T}\le N_{a,T}^+.
	\]
	Since $N_{a,T}^\pm=(1\pm \varepsilon_T)N_{a,T}^\star+O(1)$, \(\varepsilon_T\to 0\), and \(N_{a,T}^\star\to\infty\), it follows that on $\mathcal{G}_T$,
	\[
	N_{a,T}/N_{a,T}^\star\to 1 \qquad \text{ with } \qquad N_{a,T}^\star=2\bonus\log T/\Delta_a^2.
	\]
	Because $\mathbb{P}(\mathcal{G}_T)\to 1$ (Lemma~\ref{lem:mean-bonus-good-event}),
	we conclude
	\[
	N_{a,T}/N_{a,T}^\star\xrightarrow{\mathbb{P}}1 \qquad \text{ with } \qquad N_{a,T}^\star=2\bonus\log T/\Delta_a^2.
	\]
	We finish the proof of Theorem~\ref{thm:main:2}(b).
\end{proof}

\section{Technical Lemmas}

\begin{lemma}[Standard normal tail bounds]\label{lem:normal-tail}
    For the standard normal density function $\phi$, cumulative distribution function $\Phi$, and upper tail function $\overline{\Phi}$ defined in~\eqref{eq:def:gaussian}, the following properties hold.

    \begin{enumerate}
    \item \textbf{Symmetry.} For all $x\in\mathbb R$,
    \[
    \Phi(-x)=\overline\Phi(x).
    \] \label{lem:normal-tail-1}
    \item \textbf{Mills ratio upper bound.} For all $x>0$,
    \[
    \overline\Phi(x)\le \frac{\phi(x)}{x}
    =\frac{1}{x\sqrt{2\pi}}\exp\{-x^{2}/2\}.
    \] \label{lem:normal-tail-2}
    \item \textbf{Mills ratio lower bound.} For all $x>0$,
    \[
    \overline\Phi(x)\ge \frac{x}{1+x^{2}}\phi(x)
    =\frac{x}{(1+x^{2})\sqrt{2\pi}}\exp\{-x^{2}/2\}.
    \]
    In particular, for all $x\ge 1$,
    \[
    \overline\Phi(x)\ge \frac{\phi(x)}{2x}
    =\frac{1}{2x\sqrt{2\pi}}\exp\{-x^{2}/2\}.
    \] \label{lem:normal-tail-3}
    \item \textbf{Crude sub-Gaussian tail bound.} For all $x\ge 0$,
    \[
    \overline\Phi(x)\le \exp\{-x^{2}/2\}, \quad
    \Phi(-x)\le \exp\{-x^{2}/2\}.
    \] \label{lem:normal-tail-4}
    \end{enumerate}
    \end{lemma}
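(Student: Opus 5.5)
Each of the four items is a classical calculus fact about $\phi$ and $\overline\Phi$, and I would prove them in the order listed, since the later ones reuse the earlier ones. Symmetry follows from evenness of $\phi$. Substituting $u\mapsto -u$ gives
\[
\Phi(-x)=\int_{-\infty}^{-x}\phi(u)\,du=\int_{x}^{\infty}\phi(u)\,du=\overline\Phi(x).
\]

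For the Mills upper bound, I use $\phi'(u)=-u\phi(u)$ together with $u/x\ge1$ on $[x,\infty)$. This gives
\[
\overline\Phi(x)\le\int_x^\infty \frac{u}{x}\phi(u)\,du=\frac{\phi(x)}{x}.
\]

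For the Mills lower bound, I introduce $h(x):=\overline\Phi(x)-\frac{x}{1+x^2}\phi(x)$. A direct differentiation, again using $\phi'=-u\phi$, gives
\[
h'(x)=-\phi(x)-\phi(x)\left[\frac{1-x^2}{(1+x^2)^2}-\frac{x^2}{1+x^2}\right]=-\frac{2\phi(x)}{(1+x^2)^2}<0.
\]
Since $h(x)\to0$ as $x\to\infty$ and $h$ is decreasing, it follows that $h\ge0$ on $(0,\infty)$. The ``in particular'' clause is elementary: for $x\ge1$ we have $1+x^2\le 2x^2$, so $x/(1+x^2)\ge 1/(2x)$.

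The crude sub-Gaussian bound is the only step that needs a small case split, because the Mills upper bound is too weak for small $x$. I would prove it by a Chernoff argument instead. For every $\lambda>0$,
\[
\overline\Phi(x)\le e^{-\lambda x}\,\mathbb E e^{\lambda Z}=e^{-\lambda x+\lambda^2/2},
\]
and optimizing at $\lambda=x$ yields $\exp\{-x^2/2\}$ for all $x>0$. The case $x=0$ is trivial, since $\overline\Phi(0)=1/2\le1$. The companion bound for $\Phi(-x)$ then follows from item~\ref{lem:normal-tail-1}.

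None of these steps is a real obstacle. The only places needing care are the derivative computation in item~\ref{lem:normal-tail-3} and remembering that item~\ref{lem:normal-tail-4} must hold down to $x=0$, which is why I would use the Chernoff route there rather than the Mills bound.
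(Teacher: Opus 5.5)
Your proof is correct in every item. The substitution gives symmetry, the bound $u/x\ge 1$ gives the Mills upper bound, and the auxiliary function $h$ gives the lower bound; its derivative is indeed $-2\phi(x)/(1+x^{2})^{2}$. The Chernoff argument gives the crude bound, and, as you note, it is needed there because $\phi(x)/x$ exceeds $\exp\{-x^{2}/2\}$ when $x<1/\sqrt{2\pi}$.

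The paper states this lemma without proof, treating these as standard facts. Your arguments are the standard ones that fill that gap.
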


\begin{lemma}[Time-uniform bound for thinned sub-Gaussian martingales \citep{howard2021time}]
\label{lem:TUB}
Let \((X_t,\mathcal H_t)_{t\ge1}\) be a martingale difference sequence satisfying $\mathbb E[\exp\{\lambda X_t\}\mid\mathcal H_{t-1}] \le \exp\{\lambda^2/2\}$ for all $\lambda\in\mathbb R.$ Let \(I_t\in\{0,1\}\) be \(\mathcal H_{t-1}\)-measurable and write \(V_t:=\sum_{i=1}^t I_i\). Then there exists an absolute constant \(c>0\) such that, for any \(\delta\in(0,1)\), with probability at least \(1-\delta\), the following holds simultaneously for all \(t\) with \(V_t\ge3\): \[ \left|\frac{1}{V_t}\sum_{i=1}^t I_iX_i\right| \le c\,\sqrt{\frac{1}{V_t}\big(\log\log V_t+\log(1/\delta)\big)}. \]
\end{lemma}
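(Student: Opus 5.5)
The plan is to reduce Lemma~\ref{lem:TUB} to Ville's maximal inequality for an exponential supermartingale indexed by the \emph{intrinsic time} \(V_t\), and then to use a geometric peeling argument over epochs of \(V_t\).

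\textbf{Step 1: the supermartingale.} Set \(S_t:=\sum_{i\le t}I_iX_i\). Because \(I_i\) is \(\mathcal H_{i-1}\)-measurable and takes values in \(\{0,1\}\), the sub-Gaussian assumption gives
\[
\mathbb E\bigl[\exp\{\lambda I_iX_i\}\mid\mathcal H_{i-1}\bigr]\le\exp\{\lambda^2I_i/2\}
\]
for every \(\lambda\in\mathbb R\). Hence for each fixed \(\lambda\) the process \(M_t(\lambda):=\exp\{\lambda S_t-\lambda^2V_t/2\}\) is a nonnegative supermartingale with \(M_0=1\). Ville's inequality then gives \(\mathbb P(\sup_{t}M_t(\lambda)\ge 1/\eta)\le\eta\) for every \(\eta\in(0,1)\). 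This bound is uniform over all \(t\), including sample paths on which \(V_t\) stops increasing, so no time change is needed.

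\textbf{Step 2: peeling.} For \(k\ge1\), let \(\mathcal K_k:=\{t:\ e^k\le V_t<e^{k+1}\}\). Since \(V_t\ge3>e\), every relevant \(t\) lies in some \(\mathcal K_k\) with \(k\ge1\). Set \(\eta_k:=\delta/(4k^2)\), so that \(\sum_k2\eta_k\le\delta\), and \(\lambda_k:=\sqrt{2\log(1/\eta_k)/e^{k}}\). Off the Ville event for \(\pm\lambda_k\), every \(t\in\mathcal K_k\) satisfies
\[
|S_t|\le \frac{\log(1/\eta_k)}{\lambda_k}+\frac{\lambda_kV_t}{2}
\le \Bigl(\tfrac{1}{\sqrt2}+\tfrac{e}{\sqrt2}\Bigr)\sqrt{e^k\log(1/\eta_k)}
\le C\sqrt{V_t\log(1/\eta_k)}.
\]
A union bound over \(k\ge1\) and over both signs shows that this holds simultaneously for all \(t\) with \(V_t\ge3\), with probability at least \(1-\delta\).

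\textbf{Step 3: absorbing constants.} For \(t\in\mathcal K_k\) we have \(k\le\log V_t\), so
\[
\log(1/\eta_k)=\log(1/\delta)+\log4+2\log k\le\log(1/\delta)+\log 4+2\log\log V_t .
\]
Dividing \(|S_t|\le C\sqrt{V_t\log(1/\eta_k)}\) by \(V_t\) then yields the stated bound, provided the additive constant \(\log4\) can be absorbed. This absorption is the only delicate point, and it is where the restriction \(V_t\ge3\) enters. When \(\delta\) is close to \(1\), the term \(\log(1/\delta)\) is near \(0\), so the constant must instead be dominated by \(\log\log V_t\). This works because \(\log\log V_t\ge\log\log3>0\) on the range considered. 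Thus \(\log4\le(\log4/\log\log3)\log\log V_t\), and enlarging the absolute constant \(c\) accordingly completes the argument.
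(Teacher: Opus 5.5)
Your proof is correct. The predictable thinning makes $\exp\{\lambda S_t-\lambda^2V_t/2\}$ a supermartingale, and Ville's inequality on each epoch $e^k\le V_t<e^{k+1}$ with $\eta_k=\delta/(4k^2)$ gives total failure probability $\delta\pi^2/12\le\delta$. The additive $\log 4$ is then legitimately absorbed using $\log\log V_t\ge\log\log 3>0$.

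The paper gives no proof of this lemma and simply cites \citet{howard2021time}. Your Ville-plus-peeling argument is essentially their stitching construction, specialized to the intrinsic time $V_t$. Your version buys a self-contained derivation with a crude but absolute constant $c$, which is all the paper's applications (with $\delta=(\log T)^{-2}$ or $T^{-2}$) require. The citation would instead supply sharper explicit constants.
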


\begin{lemma}[Tail log-law for variance-scale geometric sums]\label{lem:geom-tail-sum-loglaw}
	Let \(\variance\to\infty\) and \(L_T:=\log(T/\variance)\to\infty\).  Let \(r_T\ge0\) be deterministic with \(r_T=o(L_T)\).  Let \(m_T\) and \(n_T\) be integers satisfying
		\[
		n_T=\Theta(\variance L_T),\qquad
		m_T=o(n_T),\qquad
		m_T/\variance\to\infty.
		\]
		For \(m_T\le k\le n_T\), let \(G_{k,T}\) be independent geometric random variables on the positive integers \(\mathbb N^+\) with success probabilities \(p_{k,T}\in(0,1)\), namely \(\mathbb P(G_{k,T}=\ell)=(1-p_{k,T})^{\ell-1}p_{k,T}\) for \(\ell \in \mathbb N^+\).
		Suppose that for some fixed \(\alpha>0\), uniformly over \(m_T\le k\le n_T\),
		\[
		-\log p_{k,T}=
		\alpha\frac{k}{\variance}
		+O\left(1+\log\frac{k}{\variance}+r_T\right).
		\]
		Then, as \(T\to\infty\),
		\begin{equation}\label{eq:geom-tail-corrected}
			\log\left(\sum_{k=m_T}^{n_T}G_{k,T}\right)
			=
			\log \variance+\alpha\frac{n_T}{\variance}+o_{\mathbb P}(L_T).
		\end{equation}
		Equivalently, \(\log(\variance^{-1}\sum_{k=m_T}^{n_T}G_{k,T})=\alpha n_T/\variance+o_{\mathbb P}(L_T)\).
\end{lemma}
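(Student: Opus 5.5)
The plan is to prove the two inequalities hidden in \eqref{eq:geom-tail-corrected} separately: an upper bound on the sum via its expectation and Markov's inequality, and a lower bound via the few largest terms. Set $S_T:=\sum_{k=m_T}^{n_T}G_{k,T}$ and $\beta_T:=\alpha n_T/\variance$, and note that $\beta_T=\Theta(L_T)$. Because $n_T=\Theta(\variance L_T)$, for every $k$ in the range we have $\log(k/\variance)=O(\log L_T)=o(L_T)$. The error term in the assumed expansion of $-\log p_{k,T}$ is therefore uniformly $o(L_T)$. Concretely, there is a deterministic $\omega_T=o(L_T)$ such that
\[
\Bigl|-\log p_{k,T}-\alpha k/\variance\Bigr|\le \omega_T\qquad\text{for all } m_T\le k\le n_T.
\]

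\textbf{Upper bound.} Since $\mathbb E G_{k,T}=1/p_{k,T}\le \exp\{\alpha k/\variance+\omega_T\}$, summing the geometric series in $k$ with ratio $e^{\alpha/\variance}$ gives
\[
\mathbb E S_T\le e^{\omega_T}\,\frac{e^{\alpha (n_T+1)/\variance}}{e^{\alpha/\variance}-1}\le C\,\frac{\variance}{\alpha}\,e^{\beta_T+\omega_T}.
\]
This uses $e^{\alpha/\variance}-1\ge \alpha/\variance$. Now fix $\eta>0$. Markov's inequality gives
\[
\mathbb P\bigl(S_T> \variance e^{\beta_T+\eta L_T}\bigr)\le C e^{\omega_T-\eta L_T}\to0,
\]
so $\log S_T\le \log\variance+\beta_T+o_{\mathbb P}(L_T)$.

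\textbf{Lower bound.} Here it is enough to use the last block of indices. Take $k\in[n_T-\lfloor\variance\rfloor,\,n_T]$; this block lies inside $[m_T,n_T]$ because $m_T=o(n_T)$ and $n_T/\variance\to\infty$. For such $k$, $p_{k,T}\le \exp\{-\beta_T+\alpha+\omega_T\}=:q_T$. The geometric variables are independent, and for each of them
\[
\mathbb P(G_{k,T}\le x)\le x\,p_{k,T}.
\]
Hence, with $x_T:=e^{\beta_T-\eta L_T}$, the probability that every one of these roughly $\variance$ variables is at most $x_T$ is bounded by $(x_Tq_T)^{\lfloor\variance\rfloor}$. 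Since $x_Tq_T=\exp\{\alpha+\omega_T-\eta L_T\}\to0$ and $\variance\to\infty$, this probability tends to zero; a single term would in fact suffice. The argument only gives $\max_k G_{k,T}\gtrsim e^{\beta_T}$, without the extra factor $\variance$, so the sum needs a finer estimate.

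\textbf{Sharpening.} Consider $S'_T:=\sum_{k\in\text{block}}G_{k,T}\min\{1,\cdot\}$, or more simply the sum of the block terms truncated at $e^{\beta_T+\omega_T}$. These $\asymp\variance$ terms are independent and each has mean at least $c\,e^{\beta_T-\alpha-\omega_T}$ once truncated, with second moment at most $C e^{2(\beta_T+\omega_T)}$. Paley–Zygmund or Chebyshev then gives
\[
S'_T\ge c\,\variance\, e^{\beta_T-\alpha-3\omega_T}
\]
with probability tending to one; the ratio of variance to squared mean is $O(e^{4\omega_T}/\variance)$, which is the delicate point. If $e^{4\omega_T}/\variance$ does not vanish, I would instead shrink the block so that the truncation ratio is controlled, or apply Chebyshev to the indicator count $\#\{k:G_{k,T}>e^{\beta_T-\eta L_T}\}$. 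That count is a sum of independent Bernoullis with means bounded below by $\exp\{-e^{-\eta L_T}\cdot e^{\omega_T}\}\to1$, so it concentrates at size $\asymp\variance$, and hence $S_T\ge c\,\variance\, e^{\beta_T-\eta L_T}$ with probability tending to one.

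Combining the two bounds and letting $\eta\downarrow0$ yields \eqref{eq:geom-tail-corrected}. The main obstacle I expect is exactly this lower bound capturing the full factor $\variance$, and the indicator-count version seems the cleanest route. The hypothesis $m_T/\variance\to\infty$ is used only to keep $\log(k/\variance)$ controlled, and in fact only $\log(k/\variance)=O(\log L_T)$ is needed in the ranges used.
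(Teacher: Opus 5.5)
Your proof is correct. The upper bound (bound the expectation of the sum, then Markov) is the same as in the paper. Your lower bound, however, takes a genuinely different route.

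The paper fixes $\delta>0$ and splits into two regimes.
\begin{itemize}
\item If $\log\variance\le\delta L_T/2$, the missing factor $\variance$ is itself absorbed into the $o(L_T)$ slack. The single last variable $G_{n_T,T}$ then suffices, via $\mathbb P(G\le x)\le xp$.
\item If $\log\variance>\delta L_T/2$, the paper applies Chebyshev to the sum of the last $\lfloor\variance\rfloor$ variables. The ratio $\operatorname{Var}/(\mathbb E)^2\le e^{o(L_T)}/\variance$ vanishes precisely because $\variance$ is now exponentially large in $L_T$.
\end{itemize}
Your truncated-sum attempt is essentially the paper's second case. You correctly diagnosed that it breaks down when $e^{4\omega_T}/\variance\not\to0$, which is exactly why the paper also needs the first case.

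Your indicator-count argument removes the case split altogether. For $k$ in the last block, $\mathbb P(G_{k,T}\le x_T)\le x_Tp_{k,T}\le e^{\alpha+\omega_T-\eta L_T}\to0$ uniformly. (Your stated Bernoulli bound omits the factor $e^{\alpha}$; this is harmless.) A first-moment bound on the number of failures then already shows that at least half of the $\asymp\variance$ block variables exceed $e^{\beta_T-\eta L_T}$, with probability tending to one. This gives $S_T\ge c\,\variance\,e^{\beta_T-\eta L_T}$ in every regime at once.

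Your route is more uniform and avoids second moments. It does not even use independence if you apply Markov to the failure count rather than Chebyshev. The paper's route stays within a standard mean--variance template, at the cost of the regime split.

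In a write-up, drop the truncated-sum detour and present only the indicator-count argument.
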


\begin{proof}[Proof of Lemma~\ref{lem:geom-tail-sum-loglaw}]
		Since \(n_T/\variance=\Theta(L_T)\), the assumption implies \(\log(k/\variance)=O(\log L_T)\) uniformly on the relevant range, and \(\log L_T+r_T=o(L_T)\).  For the upper bound, using \(\mathbb{E}[G_{k,T}]=1/p_{k,T}\),
			\begin{equation}\label{eq:geom-tail-reciprocal-sum}
			\sum_{k=m_T}^{n_T}\frac1{p_{k,T}}
			\le
				\exp\{o(L_T)\}\cdot\sum_{k=m_T}^{n_T}\exp\left\{\alpha\frac{k}{\variance}\right\}
			\le
			C\variance\exp\left\{\alpha\frac{n_T}{\variance}+o(L_T)\right\}.
			\end{equation}
			Using \(\mathbb E[\sum_{k=m_T}^{n_T}G_{k,T}]=\sum_{k=m_T}^{n_T}1/p_{k,T}\),~\eqref{eq:geom-tail-reciprocal-sum} gives
			\begin{equation}\label{eq:geom-tail-mean-upper}
			\mathbb{E}\left[\sum_{k=m_T}^{n_T}G_{k,T}\right]
			\le
			C\variance\exp\left\{\alpha\frac{n_T}{\variance}+o(L_T)\right\}.
			\end{equation}
		Markov's inequality gives
		\[
		\log\left(\sum_{k=m_T}^{n_T}G_{k,T}\right)
		\le
		\log \variance+\alpha\frac{n_T}{\variance}+o_{\mathbb P}(L_T).
		\]
		For the lower bound, fix \(\delta>0\).  If \(\log \variance\le \delta L_T/2\), then the last variable alone suffices: the assumed bound gives
		\[
		p_{n_T,T}\le \exp\left\{-\alpha\frac {n_T}{\variance}+\delta L_T/4\right\},
		\]
		so, using \(\mathbb P(G_{n_T,T}\le x)\le xp_{n_T,T}\),
		\[
		\mathbb P\left(G_{n_T,T}\le \exp\left\{\alpha\frac {n_T}{\variance}-\delta L_T/2\right\}\right)\to0.
		\]
		On this high-probability event,
		\[
		\sum_{k=m_T}^{n_T}G_{k,T}\ge G_{n_T,T}
		\ge \exp\left\{\alpha\frac {n_T}{\variance}-\delta L_T/2\right\}
		\ge \exp\left\{\log \variance+\alpha\frac {n_T}{\variance}-\delta L_T\right\}.
		\]
		If \(\log \variance>\delta L_T/2\), set \(J_T:=\{n_T-\lfloor\variance\rfloor+1,\ldots,n_T\}\).  Since \(m_T=o(n_T)\) and \(n_T=\Theta(\variance L_T)\), we have \(J_T\subseteq\{m_T,\ldots,n_T\}\) for all large \(T\).  For \(k\in J_T\), the assumption gives \(p_{k,T}^{-1}\ge \exp\{\alpha n_T/\variance-o(L_T)\}\) and \(p_{k,T}^{-2}\le \exp\{2\alpha n_T/\variance+o(L_T)\}\).  Therefore
		\[
		\mathbb{E}\left[\sum_{k\in J_T}G_{k,T}\right]
		\ge
		\exp\left\{\log \variance+\alpha\frac {n_T}{\variance}-o(L_T)\right\}.
		\]
		Moreover, using \(\operatorname{Var}(G_{k,T})\le p_{k,T}^{-2}\),
		\[
		\frac{\operatorname{Var}\left(\sum_{k\in J_T}G_{k,T}\right)}
		{\left(\mathbb{E}\left[\sum_{k\in J_T}G_{k,T}\right]\right)^2}
		\le \frac{\exp\{o(L_T)\}}{\variance}\to0
		\]
		in the present case, because \(\log\variance>\delta L_T/2\).  Chebyshev's inequality gives the matching lower bound.  Since \(\delta\) is arbitrary,~\eqref{eq:geom-tail-corrected} follows.
\end{proof}

\begin{lemma}
	\label{lem:sqrt-ineq}
	For every $x\in[0,1]$, the first inequality below holds; for every
	\(x\in[0,1)\), the second inequality below holds:
	\begin{equation}
		\frac{1}{\sqrt{1+x}}\le 1-\frac{x}{4},
		\qquad
		\frac{1}{\sqrt{1-x}}\ge 1+\frac{x}{2}.
		\label{eq:sqrt-ineq}
	\end{equation}
\end{lemma}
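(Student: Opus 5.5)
Both inequalities in \eqref{eq:sqrt-ineq} are elementary, and I will prove each by clearing the square root and reducing it to a polynomial inequality on the stated range.

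For the first inequality, note that for $x\in[0,1]$ the right-hand side $1-x/4\ge 3/4>0$. Both sides are therefore positive, and the claim is equivalent to
\[
1\le (1+x)\left(1-\frac{x}{4}\right)^2 .
\]
Expanding gives
\[
(1+x)\left(1-\frac{x}{2}+\frac{x^2}{16}\right)=1+\frac{x}{2}-\frac{7x^2}{16}+\frac{x^3}{16}.
\]
It therefore suffices to show $x/2-7x^2/16+x^3/16\ge0$, that is, $x(8-7x+x^2)/16\ge0$. On $[0,1]$ the quadratic $8-7x+x^2$ is decreasing, since its vertex sits at $x=7/2$. Its minimum on the interval is attained at $x=1$, where it equals $2>0$. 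This proves the first inequality.

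For the second inequality, take $x\in[0,1)$. Both sides are again positive, so it is equivalent to
\[
1\ge (1-x)\left(1+\frac{x}{2}\right)^2=(1-x)\left(1+x+\frac{x^2}{4}\right)=1-\frac{3x^2}{4}-\frac{x^3}{4}.
\]
The last expression is at most $1$ because $x\ge0$, which completes the argument.

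I see no real obstacle here. The only care needed is to check positivity of both sides before squaring, and to verify the sign of the quadratic factor $8-7x+x^2$ on $[0,1]$.
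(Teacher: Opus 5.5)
Your proof is correct. The positivity checks justify squaring, the expansions $(1+x)(1-x/4)^2=1+x/2-7x^2/16+x^3/16$ and $(1-x)(1+x/2)^2=1-3x^2/4-x^3/4$ are right, and $8-7x+x^2\ge 2$ on $[0,1]$.

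The paper takes a different route, using convexity instead of squaring:
\begin{itemize}
\item Since $f(x)=(1+x)^{-1/2}$ is convex on $[0,1]$, it lies below its chord, so $f(x)\le 1-x+x/\sqrt{2}\le 1-x/4$, using $1/\sqrt{2}\le 3/4$.
\item Since $g(x)=(1-x)^{-1/2}$ is convex with $g(0)=1$ and $g'(0)=1/2$, it lies above its tangent line at $0$, which is $1+x/2$.
\end{itemize}

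The convexity argument is shorter and shows where the constants come from. The chord slope $1/\sqrt{2}-1\approx-0.29$ shows the constant $1/4$ has slack, while $1/2$ is exactly $g'(0)$ and so cannot be improved near $x=0$. Your argument is purely algebraic and needs no derivatives or convexity facts. It costs only a polynomial expansion and a sign check of $x(8-7x+x^2)$ on $[0,1]$. Either version is fully adequate for how the lemma is used in the mean-bonus analysis.
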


\begin{proof}[Proof of Lemma~\ref{lem:sqrt-ineq}]
	For the first inequality, consider $f(x)=(1+x)^{-1/2}$ on $[0,1]$. The function is convex and satisfies
	$f(0)=1$ and $f(1)=1/\sqrt{2}\le 3/4$. Therefore $f(x)$ lies below the chord from $(0,1)$ to $(1,3/4)$, i.e. $f(x)\le 1-x/4$.
	For the second inequality, let $g(x)=(1-x)^{-1/2}$ on $[0,1)$. The function is convex and $g(0)=1$, $g'(0)=1/2$,
	so $g(x)\ge g(0)+g'(0)x=1+x/2$ by convexity.
\end{proof}

\end{document}